\renewcommand{\todo}[1]{}
\newcommand{\lbl}{\label}
\newcommand{\lblref}[1]{\tag{\ref{#1}}}
\newcommand{\defthm}[3]{
\begin{thm}
#3\label{#2}
\end{thm}
\newcommand{#1}{
\renewcommand{\lbl}{\lblref}
\par \phantomsection \label{#2_proof}
\noindent{\bf Theorem \ref{#2}}.
\emph{ #3 }
\renewcommand{\lbl}{\label}
} \par }
\newcommand{\defprop}[3]{
\begin{prop}
#3 \label{#2}
\end{prop}
\newcommand{#1}{
\renewcommand{\lbl}{\lblref}
\par \phantomsection \label{#2_proof}
\noindent{\bf Proposition \ref{#2}}.
\emph{ #3 }
\renewcommand{\lbl}{\label}
} \par }
\newcommand{\deflem}[3]{
\begin{lem}
#3 \label{#2}
\end{lem}
\newcommand{#1}{
\renewcommand{\lbl}{\lblref}
\par \phantomsection \label{#2_proof}
\noindent{\bf Lemma \ref{#2}}.
\emph{ #3 }
\renewcommand{\lbl}{\label}
} \par }
\title{Global Optimization for Value Function Approximation}
\author{Marek Petrik \email petrik@cs.umass.edu \\
\addr Department of Computer Science \\
University of Massachusetts\\
Amherst, MA 01003, USA
\AND Shlomo Zilberstein \email shlomo@cs.umass.edu \\
\addr Department of Computer Science \\
University of Massachusetts\\
Amherst, MA 01003, USA}
\begin{document}

\maketitle

\begin{abstract}
Existing value function approximation methods have been successfully used in many applications, but they often lack useful a priori error bounds. We propose a new \emph{approximate bilinear programming} formulation of value function approximation, which employs global optimization. The formulation provides strong a priori guarantees on both robust and expected policy loss by minimizing specific norms of the Bellman residual. Solving a bilinear program optimally is NP-hard, but this is unavoidable because the Bellman-residual minimization itself is NP-hard. We describe and analyze both optimal and approximate algorithms for solving bilinear programs. The analysis shows that this algorithm offers a \emph{convergent} generalization of approximate policy iteration. We also briefly analyze the behavior of bilinear programming algorithms under incomplete samples. Finally, we demonstrate that the proposed approach can consistently minimize the Bellman residual on simple benchmark problems.
\end{abstract}

\begin{keywords}
    value function approximation, Markov decision processes, reinforcement learning, approximate dynamic programming
\end{keywords}


\section{Motivation}

Solving large Markov Decision Problems~(MDPs) is a very useful, but computationally challenging problem addressed widely in the AI literature, particularly in the area of reinforcement learning. It is widely accepted that large MDPs can only be solved approximately. The commonly used approximation methods can be divided into three broad categories: 1) \emph{policy search}, which explores a restricted space of all policies, 2) \emph{approximate dynamic programming}, which searches a restricted space of value functions, and 3) \emph{approximate linear programming}, which approximates the solution using a linear program. While all of these methods have achieved impressive results in many application domains, they have significant limitations.

Policy search methods rely on local search in a restricted policy space. The policy may be represented, for example, as a finite-state controller~\citep{Stanley2004} or as a greedy policy with respect to an approximate value function~\citep{Szita2006}. Policy search methods have achieved impressive results in such domains as Tetris~\citep{Szita2006} and helicopter control~\citep{Abbeel2006}. However, they are notoriously hard to analyze. We are not aware of any established theoretical guarantees regarding the quality of the solution.

Approximate dynamic programming~(ADP) methods iteratively approximate the value function~\citep{Bertsekas1997,Powell2007a,Sutton1998}. They have been extensively analyzed and are the most commonly used methods. However, approximate dynamic programming methods typically do not converge and they only provide weak guarantees of approximation quality. The approximation error bounds are usually expressed in terms of the worst-case approximation of the value function over all policies~\citep{Bertsekas1997}. In addition, most available bounds are with respect to the $L_\infty$ norm, while the algorithms often minimize the $L_2$ norm. While there exist some $L_2$-based bounds~\citep{Munos2003}, they require values that are difficult to obtain.

Approximate linear programming~(ALP) uses a linear program to compute the approximate value function in a particular vector space~\citep{Farias2002}.  ALP has been previously used in a wide variety of settings~\citep{Adelman2004,Farias2004,Guestrin2003}. Although ALP often does not perform as well as ADP, there have been some recent efforts to close the gap~\citep{Petrik2009b}.  ALP has better theoretical properties than ADP and policy search. It is guaranteed to converge and return the closest $L_1$-norm approximation $\tilde\val$ of the optimal value function $v^*$ up to a multiplicative factor. However, the $L_1$ norm must be properly weighted to guarantee a small policy loss, and there is no \emph{reliable} method for selecting appropriate weights~\citep{Farias2002}.

To summarize, the existing reinforcement learning techniques often provide good solutions, but typically require significant domain knowledge~\citep{Powell2007a}. The domain knowledge is needed partly because useful a priori error bounds are not available, as mentioned above. Our goal is to develop a more \emph{reliable} method that is guaranteed to minimize bounds on the policy loss in various settings.

We present new formulations of value function approximation that provably minimize bounds on the policy loss using global optimization. Most of these bounds do not rely on values that are hard to obtain, unlike, for example, approximate linear programming. The focus of the work is on two broad bound minimization approaches: 1) minimizing $L_\infty$ bounds, and 2) minimizing weighted $L_1$ norm bounds on the policy loss. In some sense, the formulations minimize the bounds by unifying policy value-function search methods.

We start with a description of the framework and notation in \aref{sec:framework} and the description of value function approximation in \aref{sec:value_function_approximation}. Then, in \aref{sec:bilinear}, we describe the proposed approximate bilinear programming~(ABP) formulations. Bilinear programs are typically solved using global optimization methods, which we briefly discuss in \aref{sec:solutions}. A drawback of the bilinear formulation is that solving bilinear programs may require exponential time. We also show in \aref{sec:solutions} that this is unavoidable, because minimizing the approximation error bound is in fact NP-hard.

In practice, only sampled versions of ABPs are often solved. While a thorough treatment of sampling is beyond the scope of this paper, we examine the impact of sampling and establish some guarantees in \aref{sec:sampling}. Unlike classical sampling bounds on approximate linear programming, we describe bounds that apply to the worst-case error. \aref{sec:discussion} shows that ABP is related to other approximate dynamic programming methods, such as approximate linear programming and policy iteration. \aref{sec:experiments} demonstrates the applicability of ABP using common reinforcement learning benchmark problems. Technical proofs are provided in the appendix.

The general setting considered in this paper is a restriction of reinforcement learning.  Reinforcement learning methods can use samples without requiring a model of the environment.  The methods we propose can also be based on samples, but they require additional structure. In particular, they require that all or most actions are sampled for every state. Such samples can be easily generated when a model of the environment is available.

\section{Framework and Notation} \label{sec:framework}

This section formally defines the framework and the notation we use. We also define Markov decision processes and the approximation errors involved. Markov decision processes come in many flavors based on the objective function that is optimized. This work focuses  on the infinite horizon discounted MDPs, which are defined as follows.
\begin{defn}[e.g. \citep{Puterman2005}] \label{def:mdp}
A \emph{Markov Decision Process} is a tuple $(\states,\actions,\tran,\rew,\indist)$. Here, $\states$ is a finite set of states, $\actions$ is a finite set of actions, $\tran: \states \times \actions \times \states \mapsto [0,1]$ is the transition function ($\tran(s,a,s')$ is the probability of transiting to state $s'$ from state $s$ given action $a$), and $\rew: \states \times \actions \mapsto \RealPlus$ is a reward function. The initial distribution is: $\indist: \states \mapsto [0,1]$, such that $\sum_{s\in\states} \indist(s) = 1$.
\end{defn}
\noindent The goal is to find a sequence of actions that maximizes $\disc$-discounted discounted cumulative sum of the rewards, also called the \emph{return}. A solution of a Markov decision process is a policy, which is defined as follows.
\begin{defn}
A deterministic stationary \emph{policy} $\pol: \states \mapsto \actions$ assigns an action to each state of the Markov decision process. A stochastic policy \emph{policy} $\pol: \states \times\actions \mapsto [0,1]$. The set of all stochastic stationary policies is denoted as $\policies$ and satisfies $\sum_{a\in\actions} \pol(s,a) = 1$.
\end{defn}
General non-stationary policies may take different actions in states in different time-steps. We limit our treatment to stationary policies, since for infinite-horizon MDPs there exists an optimal \emph{stationary} and \emph{deterministic} policy. We also consider stochastic policies because they are more convenient to use in some settings that we consider.

The transition and reward functions for a given policy are denoted by $P_\pol$ and $r_\pol$. The value function update for a policy $\pol$ is denoted by $\Bell_\pol$, and the Bellman operator is denoted by $L$. That is:
\begin{align*}
\Bell_\pol v &= P_\pol v + r_\pol & \Bell v &= \max_{\pol\in\policies} \Bell_\pol v.
\end{align*}
The optimal value function, denoted $v^*$, satisfies $v^* = \Bell v^*$.

We assume a vector representation of the policy $\pol \in \Real^{|\states||\actions|}$. The variables $\pol$ are defined for all state-action pairs and represent policies. That is $\pol(s,a)$ represents the probability of taking action $a\in\actions$ in state $s\in\states$. The space of all correct (stochastic) policies can be represented using a set of linear equations:
\begin{align*}
\sum_{a \in \actions} \pol(s,a) &= 1 & \forall s \in \states \\
\pol(s,a) &\geq 0 &\forall s\in\states , \forall a\in\actions
\end{align*}
These inequalities can be represented using matrix notation as follows.
\begin{align*}
B \pol &= \one \\
\pol &\geq \zero,
\end{align*}
where the matrix $B: |\states|\times(|\states|\cdot|\actions|)$ is defined as follows.
\[ B(s',(s,a)) = \begin{cases} 1 & s = s' \\
0 & \text{otherwise} \end{cases}. \]
We use $\zero$ and $\one$ to denote vectors of all zeros or ones of the appropriate size respectively. The symbol $\eye$ denotes an identity matrix of the appropriate dimension.

In addition, a policy $\pol$ induces a \emph{state visitation frequency} $u_\pol:\states\rightarrow\Real$, defined as follows:
\[ u_\pol = \invm{\eye - \disc \tran_\pol\tr} \indist.\]
The return of a policy depends on the state-action visitation frequencies and $\indist\tr\val_\pol = \rew\tr u_\pol$. The optimal state-action visitation frequency is $u_{\pol^*}$. \emph{State-action visitation frequency} $u:\states\times\actions\rightarrow\Real$ is defined for all states and actions. Notice the missing subscript. We use $u_a$ to denote the part of $u$ that corresponds to action $a\in\actions$. State-action visitation frequencies must satisfy:
\[ \sum_{a\in\actions} (\eye - \disc\tran)\tr u_a = \indist.\]

To formulate approximate linear and bilinear programs, it is necessary to restrict the value functions so that their Bellman residuals are non-negative (or at least bounded from below). We call such value functions transitive-feasible and define them as follows.
\begin{defn}\label{defn:transitive_feasible}
A value function is \emph{transitive-feasible} when $\val \geq \Bell \val$. The set of transitive-feasible value functions is:
\[ \tf = \{ \val \in \Real^{|\states|} \ss \val \geq \Bell \val\}.\]
Given some $\epsilon \geq 0$, the set of $\epsilon$-\emph{transitive-feasible} value functions is:
\[ \tf(\epsilon) = \{ \val \in \Real^{|\states|} \ss \val \geq \Bell \val - \epsilon \one\}.\]
\end{defn}
Notice that the optimal value function $v^*$ is transitive-feasible. The following lemma summarizes the key property of transitive-feasible value functions:
\deflem{\lemgreatertransitivefeasible}{lem:greater_transitive_feasible}{
Transitive feasible value functions form an upper bound on the optimal value function.
If $\val \in \tf(\epsilon)$ is an $\epsilon$-transitive-feasible value function, then
\[ \val \geq \val^* - \frac{\epsilon}{1-\disc} \one. \]}

\section{Value Function Approximation} \label{sec:value_function_approximation}

This section describes the basic methods for value function approximation. MDPs used in practical applications are often too large for the optimal policy to be computed precisely. In these cases, we first calculate an approximate value function $\tilde\val$ and then take the greedy policy $\pol$ with respect to it. The quality of such a policy can be characterized using its value function $\val_\pol$ in one of the following two main ways.
\begin{defn}[Policy Loss] \label{def:policy_loss}
Let $\pol$ be a policy computed from value function approximation. The \emph{expected policy loss} measures the expected loss of $\pol$, defined as follows:
\begin{equation}
\label{eq:loss_cumulative} \| v^* - \val_\pol   \|_{1,\indist} = \indist\tr\val^* - \indist\tr\val_\pol
\end{equation}
where $\| x \|_{1,c}$ denotes the weighted $L_1$ norm:  $|| x \|_{1,c} =  \sum_{i} |c(i) x(i)|$.\\
The \emph{robust policy loss} measures the worst-case loss of  $\pol$, defined as follows:
\begin{equation}
\label{eq:loss_robust} \| v^* - \val_\pol \|_{\infty} = \max_{s\in\states} |\val^*(s) - \val_\pol(s)|
\end{equation}
\end{defn}

The expected policy loss captures the total loss of discounted reward when following the policy $\pol$ instead of the optimal policy assuming the initial distribution. The robust policy loss ignores the initial distribution and, in some sense, measures the difference for the worst-case initial distribution.

A set of state features is a necessary component of value function approximation. These features must be supplied in advance and must capture the essential structure of the problem. The features are defined by mapping each state $s$ to a vector $\feats(s)$ of features. We denote $\feats_i: \states\rightarrow \Real$ to be a function that maps states to the value of feature $i$:
\[ \feats_i(s) = ( \feats(s) )_i . \]
The desirable properties of the features depend strongly on the algorithm, samples, and attributes of the problem; the tradeoffs are not yet fully understood. The function $\feats_i$ can also be treated as a vector, similarly to the value function $\val$.

Value function approximation methods compute value functions that can be represented using the state features. We call such value functions \emph{representable} and define them below.
\begin{defn} \label{def:representable}
Given a \emph{convex} polyhedral set $\rep \subseteq \Real^{|\states|}$, a value function $\val$ is \emph{representable} (in $\rep$) if $v \in \rep$.
\end{defn}

Many methods that compute a value function based on a given set of features have been developed, such as neural networks and genetic algorithms~\citep{Bertsekas1997}. Most of these methods are extremely hard to analyze, computationally complex, and hard to use. Moreover, these complex methods do not satisfy the convexity assumption in \aref{def:representable}. A simpler, more common, method is \emph{linear value function approximation}. In linear value function approximation, the value function of state $s$ is represented as a linear combination of \emph{nonlinear features} $\feats(s)$. Linear value function approximation is easy to apply and analyze.

Linear value function approximation can be expressed in terms of matrices as follows. Let the matrix $\repm: |\states|\times m$ represent the features for the state-space, where $m$ is the number of features. The rows of the feature matrix $\repm$, also known as the \emph{basis}, correspond to the features of the states $\phi(s)$. The feature matrix can be defined in one of the following two ways:
\[ \repm = \begin{pmatrix} - & \feats(s_1)\tr & - \\  - & \feats(s_2)\tr & - \\  & \vdots & \end{pmatrix} \qquad \qquad
\repm = \begin{pmatrix} | & | & \\
\feats_1 & \feats_2 & \ldots \\
| & | & \end{pmatrix}\]
The value function $v$ is then represented as $\val = \repm x$ and the set of representable functions is $\rep = \cspan{\repm}$.

The  goal of value function approximation is not just to obtain a good value function $\tilde\val$ but a policy with a small policy loss. Unfortunately, the policy loss of a greedy policy, as formulated in \aref{def:policy_loss}, depends non-trivially on the approximate value function $\tilde\val$. Often, the only reliable method of precisely computing the policy loss is to simulate the policy, which can be very costly. Value function approximation methods, therefore, optimize bounds on the policy loss.
\begin{thm}\label{thm:policy_loss_greedy}[Robust Policy Loss, e.g. \citep{Williams1994}]
Let $\pol$ be a greedy policy with respect to a value function $\tilde\val$. Then:
\[\|v^* - v_\pol \|_\infty \leq \frac{2}{1-\disc} \| \tilde\val - \Bell \tilde\val \|_\infty.\]
In addition, if $\tilde\val \in \tf$, the policy loss is minimized for the greedy policy and:
\begin{align*} \|v^* - \val_{\pol} \|_\infty \leq \frac{1}{1-\disc} \|\tilde\val - \Bell \tilde\val\|_\infty.
\end{align*}
\end{thm}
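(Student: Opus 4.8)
The plan is to reduce the claim to two standard properties of the Bellman operators---that $\Bell$ and $\Bell_\pol$ are $\disc$-contractions in $\|\cdot\|_\infty$ and are monotone---together with \aref{lem:greater_transitive_feasible}. Write $\epsilon = \|\tilde\val - \Bell\tilde\val\|_\infty$ for the Bellman residual of $\tilde\val$. Because $\pol$ is greedy with respect to $\tilde\val$ we have $\Bell_\pol\tilde\val = \Bell\tilde\val$, so $\tilde\val$ has exactly the same residual against $\Bell_\pol$ as against $\Bell$. For the first inequality I would bound $\|\tilde\val - \val^*\|_\infty$ and $\|\tilde\val - \val_\pol\|_\infty$ separately and then apply the triangle inequality. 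Using $\val^* = \Bell\val^*$ and the contraction property,
\[ \|\tilde\val - \val^*\|_\infty \;\le\; \|\tilde\val - \Bell\tilde\val\|_\infty + \|\Bell\tilde\val - \Bell\val^*\|_\infty \;\le\; \epsilon + \disc\,\|\tilde\val - \val^*\|_\infty , \]
so $\|\tilde\val - \val^*\|_\infty \le \epsilon/(1-\disc)$; the same computation with $\Bell_\pol$ (whose fixed point is $\val_\pol$) gives $\|\tilde\val - \val_\pol\|_\infty \le \epsilon/(1-\disc)$, and adding the two estimates yields the factor-$2$ bound.

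For the refined bound, assume $\tilde\val \in \tf$. Then $\tilde\val \ge \Bell\tilde\val = \Bell_\pol\tilde\val$, so, writing $\delta = \tilde\val - \Bell_\pol\tilde\val = (\eye - P_\pol)\tilde\val - r_\pol$, we have $\zero \le \delta \le \epsilon\,\one$. Solving this identity for $\tilde\val$ gives $\tilde\val = \val_\pol + \invm{\eye - P_\pol}\delta$, and since $\invm{\eye - P_\pol} = \sum_{t\ge 0} P_\pol^t$ is nonnegative with $\invm{\eye - P_\pol}\one = \frac{1}{1-\disc}\one$ (as $P_\pol\one = \disc\one$), this sandwiches $\val_\pol \le \tilde\val \le \val_\pol + \frac{\epsilon}{1-\disc}\one$. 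On the other hand, \aref{lem:greater_transitive_feasible} applied with $\epsilon = 0$ gives $\tilde\val \ge \val^*$, while $\val_\pol \le \val^*$ holds for every policy. Chaining these, $\val_\pol \le \val^* \le \tilde\val \le \val_\pol + \frac{\epsilon}{1-\disc}\one$, hence $\zero \le \val^* - \val_\pol \le \frac{\epsilon}{1-\disc}\one$, which is the claimed bound. The accompanying optimality remark comes out of the same chain: for a policy $\pol'$ that is not greedy with respect to $\tilde\val$ one need not have $\tilde\val \ge \Bell_{\pol'}\tilde\val$, so $\tilde\val$ need not dominate $\val_{\pol'}$; the greedy policy is precisely the one maximizing $\Bell_{\pol'}\tilde\val$ pointwise, and hence the one for which this reasoning forces $\val_{\pol'}$ to be largest and the loss smallest.

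The contraction bookkeeping in the first part is routine. The step I expect to need the most care is the second: the factor $1$ (rather than $2$) cannot be obtained from a plain triangle inequality but requires the two-sided sandwich $\val_\pol \le \val^* \le \tilde\val \le \val_\pol + \frac{\epsilon}{1-\disc}\one$, and the argument has to make explicit that transitive-feasibility is exactly what puts $\tilde\val$ above both $\val^*$ and $\val_\pol$. Pinning down the precise sense in which the greedy policy ``minimizes'' the loss---as opposed to merely enjoying the sharper bound---is the point most likely to require a careful statement in the final write-up.
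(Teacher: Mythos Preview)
The paper does not give its own proof of this theorem; it is stated as a known result with the citation ``e.g.\ \citep{Williams1994}'' and is then used as a black box (for instance, in the proof of \aref{thm:abp_loss_robust}). So there is nothing to compare against on the paper's side.

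Your argument is the standard one and is correct. One small notational slip: you write $\delta = (\eye - P_\pol)\tilde\val - r_\pol$ and then $P_\pol\one = \disc\one$, which means you are silently absorbing the discount into $P_\pol$. In the paper's conventions (and in most references) $P_\pol$ is stochastic with $P_\pol\one = \one$ and $\Bell_\pol v = \disc P_\pol v + r_\pol$, so the correct identity is $\delta = (\eye - \disc P_\pol)\tilde\val - r_\pol$ and $\invm{\eye - \disc P_\pol}\one = \tfrac{1}{1-\disc}\one$. This does not affect the argument, only the write-up. Your own caveat about the phrase ``the policy loss is minimized for the greedy policy'' is well placed: the paper never makes this precise either, and your sandwich $\val_\pol \le \val^* \le \tilde\val$ already isolates the role of greediness as well as the statement allows.
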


The bounds above ignore the initial distribution and may often be overly conservative. We establish new bounds on the expected policy loss that also consider the initial distribution.
\begin{thm}
\label{thm:bound_dual_policy}[Expected Policy Loss]
Let $\pol$ be a greedy policy with respect to a value function $\tilde\val$ and let the state-action visitation frequencies of $\pol$ be bounded as  $\underbar{u} \leq u_\pol \leq \bar{u}$.
Then:\begin{align*}
\|v^* - v_\pol \|_{1,\indist} &= \indist\tr v^* - \indist\tr \tilde\val +u_\pol\tr\left(  \tilde\val - \Bell \tilde\val \right) \\
&\leq \indist\tr v^* - \indist\tr \tilde\val+ \underbar{u}\tr\negt{\tilde\val - \Bell \tilde\val} + \bar{u}\tr\pos{\tilde\val - \Bell \tilde\val}.
\end{align*}
The state-visitation frequency  $u_\pol$ depends on the initial distribution $\indist$, unlike $v^*$. In addition, when $\tilde\val\in\tf$, the bound is:
\begin{align*} \|v^* - v_\pol \|_{1,\indist} &\leq  - \| v^* - \tilde\val \|_{1,\indist} +  \| \tilde\val - \Bell \tilde\val \|_{1,\bar{u}} \\
\|v^* - v_\pol \|_{1,\indist} &\leq - \| v^* - \tilde\val \|_{1,\indist} +  \frac{1}{1-\disc} \| \tilde\val - \Bell \tilde\val \|_\infty
\end{align*}
\end{thm}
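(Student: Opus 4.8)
The plan is to establish the exact equality first, via the standard adjoint relation between $v_\pol$ and $u_\pol$, and then to read off all three inequalities from it by elementary coordinatewise monotonicity. \emph{Step 1 (the exact identity).} From the definition $u_\pol = \invm{\eye - \disc\tran_\pol\tr}\indist$ we get the adjoint relation $u_\pol\tr(\eye - \disc\tran_\pol) = \indist\tr$, and, as recorded in \aref{sec:framework}, $\indist\tr v_\pol = u_\pol\tr\rew_\pol$. Since $\pol$ is greedy with respect to $\tilde\val$, we have $\Bell_\pol\tilde\val = \Bell\tilde\val$, hence $\rew_\pol = \Bell_\pol\tilde\val - \disc\tran_\pol\tilde\val = (\eye - \disc\tran_\pol)\tilde\val - (\tilde\val - \Bell\tilde\val)$. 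Substituting this and using $u_\pol\tr(\eye - \disc\tran_\pol) = \indist\tr$ gives $\indist\tr v_\pol = \indist\tr\tilde\val - u_\pol\tr(\tilde\val - \Bell\tilde\val)$. Because $v^* \geq v_\pol$ componentwise and $\indist \geq \zero$, we have $\|v^* - v_\pol\|_{1,\indist} = \indist\tr v^* - \indist\tr v_\pol$; combining these relations yields the asserted equality.

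\emph{Step 2 (the general bound).} Split the Bellman residual into its positive and negative parts, $\tilde\val - \Bell\tilde\val = \pos{\tilde\val - \Bell\tilde\val} + \negt{\tilde\val - \Bell\tilde\val}$, where $\pos{\,\cdot\,} \geq \zero$ and $\negt{\,\cdot\,} \leq \zero$. Using $\zero \leq \underbar{u} \leq u_\pol \leq \bar{u}$ coordinatewise one gets $u_\pol\tr\pos{\tilde\val - \Bell\tilde\val} \leq \bar{u}\tr\pos{\tilde\val - \Bell\tilde\val}$ (since $u_\pol \leq \bar{u}$ and $\pos{\,\cdot\,} \geq \zero$) and $u_\pol\tr\negt{\tilde\val - \Bell\tilde\val} \leq \underbar{u}\tr\negt{\tilde\val - \Bell\tilde\val}$ (since $u_\pol \geq \underbar{u}$ and $\negt{\,\cdot\,} \leq \zero$). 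Adding these and substituting into the identity from Step 1 gives the stated inequality.

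\emph{Step 3 (the transitive-feasible case).} If $\tilde\val \in \tf$ then $\tilde\val - \Bell\tilde\val \geq \zero$, so $\negt{\tilde\val - \Bell\tilde\val} = \zero$ and $\pos{\tilde\val - \Bell\tilde\val} = \tilde\val - \Bell\tilde\val$; moreover \aref{lem:greater_transitive_feasible} with $\epsilon = 0$ gives $\tilde\val \geq v^*$, whence $\indist\tr v^* - \indist\tr\tilde\val = -\|v^* - \tilde\val\|_{1,\indist}$. Substituting both facts into the bound of Step 2, together with $\bar{u}\tr(\tilde\val - \Bell\tilde\val) = \|\tilde\val - \Bell\tilde\val\|_{1,\bar{u}}$, gives the first refined bound. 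For the second, return to the exact identity of Step 1, which under $\tilde\val \in \tf$ reads $\|v^* - v_\pol\|_{1,\indist} = -\|v^* - \tilde\val\|_{1,\indist} + u_\pol\tr(\tilde\val - \Bell\tilde\val)$, and estimate the last term by $\|u_\pol\|_1\,\|\tilde\val - \Bell\tilde\val\|_\infty$; since $\tran_\pol$ is stochastic, $\invm{\eye - \disc\tran_\pol}\one = \frac{1}{1-\disc}\one$, so $\|u_\pol\|_1 = \one\tr u_\pol = \indist\tr\invm{\eye - \disc\tran_\pol}\one = \frac{1}{1-\disc}$, which produces the $L_\infty$ bound.

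\emph{Main obstacle.} The proof is essentially bookkeeping; there is no conceptual difficulty. The points that require care are getting the transpose/adjoint relations $\indist\tr v_\pol = u_\pol\tr\rew_\pol$ and $u_\pol\tr(\eye - \disc\tran_\pol) = \indist\tr$ exactly right, and keeping the sign conventions for $\pos{\,\cdot\,}$ and $\negt{\,\cdot\,}$ consistent so that the monotonicity estimates in Step 2 point in the intended direction. Everything else rests on $v^* \geq v_\pol$, $\indist \geq \zero$, and \aref{lem:greater_transitive_feasible}.
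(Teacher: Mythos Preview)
Your proof is correct and follows essentially the same route as the paper: derive the exact identity from the adjoint relation $u_\pol\tr(\eye-\disc\tran_\pol)=\indist\tr$ together with $\indist\tr v_\pol = u_\pol\tr\rew_\pol$, then read off the bounds by monotonicity and H\"older. You are in fact a bit more careful than the paper in two places --- you make explicit the use of greediness to pass from $\Bell_\pol\tilde\val$ to $\Bell\tilde\val$, and you actually spell out the positive/negative-part argument for the $\underbar{u},\bar{u}$ inequality, which the paper states without proof --- but the underlying argument is the same.
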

Here we use $\pos{x} = \max\{x,\zero\}$ componentwise.
\begin{proof}
The bound is derived as follows:
\begin{align*}
\indist\tr v^* - \indist\tr \val_\pol &= \indist\tr v^* - \indist\tr \val_\pol + (u_\pol\tr (\eye-\disc\tran_\pol) - \indist\tr) \tilde\val  \\
&= \indist\tr v^* - \rew_\pol\tr u_\pol + (u_\pol\tr (\eye-\disc\tran_\pol) - \indist\tr) \tilde\val \\
&= \indist\tr v^* - \rew_\pol\tr u_\pol + u_\pol\tr (\eye-\disc\tran_\pol) \tilde\val - \indist\tr \tilde\val \\
&= \indist\tr v^* - \indist\tr \tilde\val + u_\pol\tr\left( (\eye-\disc\tran_\pol) \tilde\val - \rew_\pol \right) \\
&= \indist\tr v^* - \indist\tr \tilde\val + u_\pol\tr\left( \tilde\val - \Bell \tilde\val \right).
\end{align*}
We used the fact that $u_\pol\tr(\eye-\disc\tran_\pol) - \indist\tr = \zero$, which follows from the definition of state-action visitation frequencies and that $\val^* \geq \val_\pol$. The inequalities for $\tilde\val\in\tf$ follow from \aref{lem:greater_bellman}, \aref{lem:dual_sum}, and the trivial version of the Holder's inequality:
\begin{align*}
\indist\tr\val^* - \indist\tr\tilde\val &=  - \| v^* - \tilde\val \|_{1,\indist}  \\
u_\pol\tr\left(  \tilde\val - \Bell \tilde\val \right) &\leq  \|u_\pol\|_1 \left\| \tilde\val - \Bell \tilde\val \right\|_\infty  = \frac{1}{1-\disc} \| \tilde\val - \Bell \tilde\val \|_\infty
\end{align*}
\end{proof}

Notice that the bounds in \aref{thm:bound_dual_policy} can be minimized even without knowing the optimal $\val^*$. The optimal value function $\val^*$ is independent of the approximate value function $\tilde\val$ and the greedy policy $\pol$ depends only on $\tilde\val$.

\begin{rem}\label{rem:alp_farias}
The bounds in \aref{thm:bound_dual_policy} generalize the bounds established by \citet[Theorem 1.3]{Farias2002}, which state that whenever $\val \in \tf$:
\[ \| \val^* - \val_\pol \|_{1,u} \leq \frac{1}{1-\disc} \| \val^* - \tilde\val \|_{1,(1-\disc)u}. \]
This bound is a special case of \aref{thm:bound_dual_policy} because $\indist\tr v^* - \indist\tr \tilde\val\leq 0$ and:
\[ \| \tilde\val - \Bell \tilde\val \|_{1,u} \leq  \|\val^* - \tilde\val \|_{1,u} \leq \frac{1}{1-\disc} \| \val^* - \tilde\val \|_{1,(1-\disc)u},\]
from $\val^* \leq \Bell \tilde\val \leq \tilde\val$.
The proof of \aref{thm:bound_dual_policy} also simplifies the proof of Theorem 1.3 in \citep{Farias2002}.
\end{rem}

The methods that we propose require the following standard assumption.
\begin{asm}\label{asm:contains_one}
All multiples of the constant vector $\one$ are representable in $\rep$. That is, for all $k \in \Real$ we have that $k \one \in \rep$.
\end{asm}
Notice that the representation set $\rep$ satisfies \aref{asm:contains_one} when a first column of $\repm$ is $\one$. The impact of including the constant feature is typically negligible because adding a constant to the value function does not change the greedy policy.

Value function approximation algorithms are typically variations of the exact algorithms for solving MDPs. Hence, they can be categorized as approximate value iteration, approximate policy iteration, and approximate linear programming. The ideas of approximate value iteration could be traced to \citet{Bellman1957}, which was followed by many additional research efforts~\citep{Bertsekas1996,Sutton1998,Powell2007a}. Below, we only discuss approximate policy iteration and approximate linear programming, because they are the most closely related to our approach.

\begin{algorithm}[t]
$\pol_0, k \leftarrow $ rand, 1 \;
\While{$\pol_k \neq \pol_{k-1}$}{
    $\tilde\val_k \leftarrow \appr(\pol_{k-1})$ \;
    $\pol_k(s) \leftarrow \arg \max_{a \in \actions} r(s,a) + \disc \sum_{s'\in\states} \tran(s,a,s') \tilde\val_k(s) \quad \forall s \in \states$ \;
    $k \leftarrow k + 1$ \;
}
\caption{Approximate policy iteration, where $\appr(\pol)$ denotes a custom value function approximation for the policy $\pol$.} \label{alg:api}
\end{algorithm}

\emph{Approximate policy iteration}~(API) is summarized in \aref{alg:api}. The function $\appr(\pol)$ denotes the specific method used to approximate the value function for the policy $\pol$.  The two most commonly used methods --- \emph{Bellman residual approximation} and \emph{least-squares approximation}~\citep{Lagoudakis2003} --- minimize the $L_2$ norm of the Bellman residual.

The approximations based on minimizing $L_2$ norm of the Bellman residual are common in practice since they are easy to compute and often lead to good results. Most theoretical analyses of API, however, assume minimization of the $L_\infty$ norm of the Bellman residual:
\begin{equation} \label{eq}
\appr(\pol) \in \arg \min_{\val \in \rep} \left\| (\eye - \disc \tran_\pol) \val - \rew_\pol \right\|_\infty
\end{equation}
\lapi~is shown in \aref{alg:api}, where $\appr(\pol)$ is calculated using the following program:
\begin{mprog}\label{mpr:linfty_min}
\minimize{\phi,v} \phi
\stc (\eye-\disc P_\pol) v + \one \phi \geq r_\pol
\cs -(\eye-\disc P_\pol) v + \one \phi \geq -r_\pol
\cs v \in \rep
\end{mprog}
We are not aware of convergence or divergence proofs of \lapi, and this analysis is beyond the scope of this paper. Theoretically, it is also possible to minimize the $L_1$ norm of the Bellman residual, but we are not aware of any study of such an approximation.

In the above description of API, we assumed that the value function is approximated for all states and actions. This is impossible in practice due to the size of the MDP. Instead, API only relies on a subset of states and actions, provided as samples. API is not guaranteed to converge in general and its analysis is typically in terms of limit behavior. The limit bounds are often very loose. We discuss the performance of API and how it relates to approximate bilinear programming in more detail in \aref{sec:discussion}.

Approximate linear programming --- a method for value function approximation --- is  based on the linear program formulation of exact MDPs:
\begin{mprog} \label{mpr:alp_a}
\minimize{v} \sum_{s\in \states} \tobj(s) v(s)
\stc v(s) - \disc  \sum_{s' \in \states} \tran(s',s,a) v(s') \geq \rew(s,a)  \quad \forall (s,a) \in (\states,\actions)
\end{mprog}
We use $\tmat$ as a shorthand notation for the constraint matrix and $\trew$ for the right-hand side. The value $c$ represents a distribution over the states, usually a uniform one. That is, $\sum_{s\in\states} c(s) = 1$. The linear program \eqref{mpr:alp_a} is often too large to be solved precisely, so it is approximated to get an \emph{approximate linear program} by assuming that $v \in \rep$~\citep{Farias2003}, as follows:
\begin{mprog} \label{mpr:alp} \tag{ALP--$L_1$}
\minimize{\val} c\tr \val
\stc \tmat \val \geq \trew
\cs \val \in \rep
\end{mprog}
The constraint $\val\in\rep$ denotes the approximation. To actually solve this linear program, the value function is represented as $\val = \repm x$. \aref{asm:contains_one} guarantees the feasibility of the ALP. The optimal solution of the ALP, $\tilde\val$, satisfies: $\tilde\val \geq v^*$. Therefore, the objective of \eqref{mpr:alp} represents the minimization of $\| \tilde\val - v^* \|_{1,c}$~\citep{Farias2002}.

Approximate linear programming is guaranteed to converge to a solution and minimize a weighted $L_1$ norm on the solution quality.
\begin{thm}[e.g. \citep{Farias2002}] \label{thm:alp_offline}
Given \aref{asm:contains_one}, let $\tilde\val$ be the solution of the approximate linear program \eqref{mpr:alp}. If $\tobj = \indist$ then
\[ \| \val^* - \tilde\val \|_{1,\indist} \leq \frac{2}{1-\disc} \min_{\val\in\rep} \| \val^* - \val\|_\infty.\]
\end{thm}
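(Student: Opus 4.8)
The plan is to turn the weighted $L_1$ error into a linear functional that the approximate linear program is directly minimizing, and then bound it by evaluating the ALP objective at a cleverly chosen feasible point built from a best $L_\infty$ approximation. First I would observe that the constraints $\tmat\val\geq\trew$ of \eqref{mpr:alp} are precisely the requirement $\val\geq\Bell\val$, so the optimal solution $\tilde\val$ lies in $\tf$, and \aref{lem:greater_transitive_feasible} (with $\epsilon=0$) gives $\tilde\val\geq\val^*$. Consequently
\[
\|\val^*-\tilde\val\|_{1,\indist}=\indist\tr(\tilde\val-\val^*).
\]
Since the ALP objective is $c\tr\val$ with $c=\indist$, this means $\tilde\val$ minimizes $\indist\tr\val$ over all $\val\in\rep\cap\tf$, so evaluating the objective at any single feasible point gives an upper bound on $\indist\tr\tilde\val$, hence on the error.

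The heart of the proof is constructing such a point near a best $L_\infty$ approximant. Let $\varepsilon=\min_{\val\in\rep}\|\val^*-\val\|_\infty$ and fix $\val'\in\rep$ attaining it, so $\val^*-\varepsilon\one\leq\val'\leq\val^*+\varepsilon\one$. This $\val'$ is generally not transitive-feasible, so I would \emph{lift} it by a constant vector, which stays in $\rep$ by \aref{asm:contains_one}: set $\hat\val=\val'+\frac{1+\disc}{1-\disc}\,\varepsilon\one$. Monotonicity of $\Bell$ together with the shift identity $\Bell(\val+k\one)=\Bell\val+\disc k\one$ and $\Bell\val^*=\val^*$ yields $\Bell\val'\leq\Bell(\val^*+\varepsilon\one)=\val^*+\disc\varepsilon\one$, and combined with $\val'\geq\val^*-\varepsilon\one$ this gives $\Bell\val'-\val'\leq(1+\disc)\varepsilon\one$. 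The constant $\frac{1+\disc}{1-\disc}\varepsilon$ is exactly what is needed to absorb this slack: $\hat\val-\Bell\hat\val=\val'-\Bell\val'+(1-\disc)\frac{1+\disc}{1-\disc}\varepsilon\one=\val'-\Bell\val'+(1+\disc)\varepsilon\one\geq\zero$, so $\hat\val\in\rep\cap\tf$ and is feasible for \eqref{mpr:alp}.

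Finally I would chain the estimates. Optimality of $\tilde\val$ gives $\indist\tr\tilde\val\leq\indist\tr\hat\val$, and since $\indist$ is a probability distribution,
\[
\|\val^*-\tilde\val\|_{1,\indist}=\indist\tr(\tilde\val-\val^*)\leq\indist\tr(\hat\val-\val^*)=\indist\tr(\val'-\val^*)+\frac{1+\disc}{1-\disc}\varepsilon\leq\varepsilon+\frac{1+\disc}{1-\disc}\varepsilon=\frac{2}{1-\disc}\varepsilon,
\]
which is the claimed bound since $\varepsilon=\min_{\val\in\rep}\|\val^*-\val\|_\infty$. The only genuinely delicate point is the lifting step --- picking the shift constant and verifying $\hat\val\geq\Bell\hat\val$; everything else is bookkeeping with monotonicity of $\Bell$ and linearity of the objective. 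One should also note in passing that \eqref{mpr:alp} is feasible (by \aref{asm:contains_one}, exactly as remarked after the program) and bounded below (since $\val\geq\val^*$ on the feasible set), so that $\tilde\val$ exists.
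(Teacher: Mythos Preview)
Your argument is correct and is essentially the standard de Farias--Van Roy proof: identify $\|\val^*-\tilde\val\|_{1,\indist}$ with the ALP objective gap via $\tilde\val\geq\val^*$, then produce a feasible competitor by lifting a best $L_\infty$ approximant by a constant multiple of $\one$, using the shift identity for $\Bell$ and \aref{asm:contains_one}. The paper does not supply its own proof of this theorem; it is stated as a known result with a citation to \citep{Farias2002}, so there is nothing to compare against beyond noting that your derivation matches the classical one.
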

The difficulty with the solution of ALP is that it is hard to derive guarantees on the policy loss based on the bounds in terms of the $L_1$ norm; it is possible when the objective function $\tobj$ represents $\bar{u}$, as \aref{rem:alp_farias} shows. In addition, the constant $1/(1-\disc)$ may be very large when $\disc$ is close to 1.

Approximate linear programs are often formulated in terms of samples instead of the full formulation above. The performance guarantees are then based on analyzing the probability that a large number of constraints is violated. It is generally hard to translate the constraint violation bounds to bounds on the quality of the value function and the policy.

\section{Bilinear Program Formulations} \label{sec:bilinear}

This section shows how to formulate value function approximation as a separable bilinear program. Bilinear programs are a generalization of linear programs with an additional bilinear term in the objective function. A separable bilinear program consists of two linear programs with independent constraints and are fairly easy to solve and analyze.
\begin{defn}[Separable Bilinear Program]
A \emph{separable} bilinear program in the normal form is defined as follows:
\begin{mprog} \label{mpr:bilinear} \tag{BP--m}
\minsep{w,x}{y,z}{s_1\tr w + r_1\tr x + x\tr C y + r_2\tr y + s_2\tr z}
\stc A_1 x + B_1 w = b_1 & A_2 y + B_2 z = b_2
\cs w, x \geq \zero & y, z \geq \zero
\end{mprog}
\end{defn}
The objective of the bilinear program \eqref{mpr:bilinear} is denoted as $f(w,x,y,z)$. We separate the variables using a vertical line and the constraints using different columns to emphasize the separable nature of the bilinear program. In this paper, we only use \emph{separable} bilinear programs and refer to them simply as bilinear programs.

We present three different approximate bilinear formulations that minimize the following bounds on the approximate value function.
\begin{enumerate}
    \item \emph{Robust policy loss}: Minimizes $\|\val^* - \val_\pol \|_\infty$ by minimizing the bounds in \aref{thm:policy_loss_greedy}:
     \[ \min_{\pol\in\policies} \|v^* - \val_{\pol} \|_\infty \leq \min_{\val\in\rep} \frac{1}{1-\disc} \|\val - \Bell \val\|_\infty \]
   \item \emph{Expected policy loss}: Minimizes $\| \val^* - \val)\pol \|_{1,\indist}$ by minimizing the bounds in \aref{thm:bound_dual_policy}:
        \begin{align*}
\min_{\pol\in\policies} \|v^* - v_\pol \|_{1,\indist} &\leq \indist\tr v^* +\min_{\val\in\rep} \left( - \indist\tr \tilde\val+  \frac{1}{1-\disc}\| \val - \Bell \val \|_\infty \right)\\
\min_{\pol\in\policies} \|v^* - v_\pol \|_{1,\indist} &\leq \indist\tr v^* + \min_{\val\in\rep} \left(- \indist\tr \tilde\val+  \| \val - \Bell \val \|_{\bar{u}}\right).
        \end{align*}    \item \emph{The sum of $k$ largest errors}: This formulation represents a hybrid between the robust and expected formulations. It is more robust than simply minimizing the expected performance but is not as sensitive to worst-case performance.
\end{enumerate}
The appropriateness of each formulation depends on the particular circumstances of the domain. For example, minimizing robust bounds is advantageous when the initial distribution is not known and the performance must be consistent under all circumstances. On the other hand, minimizing expected bounds on the value function is useful when the initial distribution is known.

In the formulations described below, we initially assume that samples of all states and actions are used. This means that the precise version of the operator $\Bell$ is available. To solve large problems, the number of samples would be much smaller; either simply subsampled or reduced using the structure of the MDP. Reducing the number of constraints in linear programs corresponds to simply removing constraints. In approximate bilinear programs it also reduces the number of some variables, as \aref{sec:sampling} describes.

The formulations below denote the value function approximation generically by $\val \in \rep$. That restricts the value functions to be representable using features. Representable value functions $\val$ can be replaced by a set of variables $x$ as $\val = \repm x$. This reduces the number of variables to the number of features.

\subsection{Robust Policy Loss}

The solution of the robust approximate bilinear program minimizes the $L_\infty$ norm of the Bellman residual $\| \val - \Bell \val \|_\infty$. This minimization can be formulated as follows.
\begin{mprog} \label{mpr:abp_robust} \tag{ABP--$L_\infty$}
\minsep{\pol}{\lambda,\lambda',v}{\pol\tr \lambda + \lambda'}
\stc B \pol = \one        &   \tmat \val - \trew \geq \zero
\cs \pol \geq \zero       &   \lambda + \lambda' \one \geq \tmat \val - \trew
\cs                       &   \lambda, \lambda' \geq \zero
\cs                       &   \val \in \rep
\end{mprog}
All the variables are vectors except $\lambda'$, which is a scalar. The matrix $\tmat$ represents constraints that are identical to the constraints in \eqref{mpr:alp}. The variables $\lambda$ correspond to all state-action pairs. These variables represent the Bellman residuals that are being minimized.
This formulation offers the following guarantees.
\begin{thm}
\label{thm:abp_loss_robust}
Given \aref{asm:contains_one}, any optimal solution $(\tilde{\pol}, \tilde\val, \tilde{\lambda}, \tilde{\lambda'})$ of the approximate bilinear program \eqref{mpr:abp_robust} satisfies:
\begin{align*}
\tilde{\pol}\tr \tilde{\lambda} + \tilde{\lambda'} = \| \Bell \tilde\val - \tilde\val \|_\infty &\leq \min_{v \in \tf \cap \rep} \| \Bell v - v\|_\infty\\
&\leq 2 \min_{v\in\rep} \| \Bell v - v \|_\infty  \\
&\leq 2(1+\disc) \min_{v\in\rep} \| v - v^* \|_\infty.
\end{align*}
Moreover, there exists an optimal solution $\tilde{\pol}$ that is greedy with respect to $\tilde\val$ for which the policy loss is bounded by:
\[ \|\val^* - \val_{\tilde\pol} \|_\infty \leq \frac{2}{1-\disc} \left(\min_{v\in\rep} \| \Bell v - v \|_\infty \right). \]
\end{thm}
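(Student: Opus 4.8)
The plan is to strip the bilinear program down to a nested minimization and then read each inequality in the claimed chain off a short computation. First I would note that the right-block constraint $\tmat\val - \trew \geq \zero$ is precisely the statement $\val \in \tf$: it is the same constraint system as in \eqref{mpr:alp}, and componentwise it says $\val(s) \geq (\Bell_a\val)(s)$ for every state-action pair, i.e. $\val \geq \Bell\val$. Fix any feasible $\val$ and put $d := \tmat\val - \trew \geq \zero$. Minimizing the objective over the remaining right-block variables for this $\val$, the best choice is $\lambda = \pos{d - \lambda'\one}$, so the objective becomes $\sum_{s\in\states}\sum_{a\in\actions}\pol(s,a)\pos{d(s,a) - \lambda'} + \lambda'$; minimizing over the policy simplex $B\pol = \one,\ \pol\geq\zero$ puts all weight on a greedy action in each state, because $\pos{\cdot}$ is nondecreasing and $\min_{a\in\actions}d(s,a) = \val(s) - \max_{a\in\actions}(\Bell_a\val)(s) = (\val - \Bell\val)(s) \geq 0$. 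What remains is $\min_{\lambda'\geq 0}\left(\sum_{s\in\states}\pos{(\val-\Bell\val)(s) - \lambda'} + \lambda'\right)$, and a one-line estimate — the sum is at least its largest term, and $\pos{M - \lambda'} + \lambda' = \max\{M,\lambda'\} \geq M$ — shows this equals $\|\val - \Bell\val\|_\infty$, attained at $\lambda' = \|\val - \Bell\val\|_\infty$.

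From this, the optimal value of \eqref{mpr:abp_robust} equals $\min_{\val\in\tf\cap\rep}\|\val - \Bell\val\|_\infty$; feasibility of the program follows from \aref{asm:contains_one}, since for $k$ large enough $k\one \in \rep$ and $k\one \in \tf$ (as $(1-\disc)k \geq \max_{s,a}\rew(s,a)$ makes $k\one - \Bell(k\one) \geq \zero$). Applying the same computation to an \emph{arbitrary} optimal solution $(\tilde\pol, \tilde\val, \tilde\lambda, \tilde\lambda')$ pins down $\tilde\pol\tr\tilde\lambda + \tilde\lambda' = \|\Bell\tilde\val - \tilde\val\|_\infty = \min_{\val\in\tf\cap\rep}\|\Bell\val - \val\|_\infty$: the triple $(\tilde\pol,\tilde\lambda,\tilde\lambda')$ is feasible for the inner minimization with $\val = \tilde\val$, so its value is at least $\|\tilde\val - \Bell\tilde\val\|_\infty \geq \min_{\val\in\tf\cap\rep}\|\Bell\val - \val\|_\infty$, which already equals the program optimum. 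For the greedy-solution statement I would keep an optimal $\tilde\val$ and swap the rest for the minimizers just identified — a deterministic policy greedy with respect to $\tilde\val$, together with $\tilde\lambda' = \|\tilde\val - \Bell\tilde\val\|_\infty$ and $\tilde\lambda = \pos{(\tmat\tilde\val - \trew) - \tilde\lambda'\one}$ — and verify this is feasible with the same (optimal) objective; crucially $\tilde\val \in \tf$.

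Next I would establish the factor-two step $\min_{\val\in\tf\cap\rep}\|\Bell\val - \val\|_\infty \leq 2\min_{\val\in\rep}\|\Bell\val - \val\|_\infty$. Let $\val^\circ$ attain the right-hand minimum, $\epsilon := \|\Bell\val^\circ - \val^\circ\|_\infty$. By \aref{asm:contains_one} the shift $\hat\val := \val^\circ + \frac{\epsilon}{1-\disc}\one$ stays in $\rep$, and since $\tran_\pol\one = \one$ one has $\Bell(\val + c\one) = \Bell\val + \disc c\one$, whence $\hat\val - \Bell\hat\val = (\val^\circ - \Bell\val^\circ) + \epsilon\one \in [\zero, 2\epsilon\one]$; so $\hat\val \in \tf\cap\rep$ with $\|\hat\val - \Bell\hat\val\|_\infty \leq 2\epsilon$. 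The final inequality $2\min_{\val\in\rep}\|\Bell\val - \val\|_\infty \leq 2(1+\disc)\min_{\val\in\rep}\|\val - \val^*\|_\infty$ is the triangle inequality $\|\val - \Bell\val\|_\infty \leq \|\val - \val^*\|_\infty + \|\Bell\val^* - \Bell\val\|_\infty$ combined with $\val^* = \Bell\val^*$ and the $\disc$-contraction of $\Bell$ in $\|\cdot\|_\infty$.

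Finally, for the policy-loss bound I would feed the greedy optimal pair $(\tilde\pol, \tilde\val)$ above into the transitive-feasible case of \aref{thm:policy_loss_greedy}, which gives $\|\val^* - \val_{\tilde\pol}\|_\infty \leq \frac{1}{1-\disc}\|\tilde\val - \Bell\tilde\val\|_\infty$, and then chain with $\|\tilde\val - \Bell\tilde\val\|_\infty = \min_{\val\in\tf\cap\rep}\|\Bell\val - \val\|_\infty \leq 2\min_{\val\in\rep}\|\Bell\val - \val\|_\infty$; using the $\tf$-version of \aref{thm:policy_loss_greedy} (not the generic $\frac{2}{1-\disc}$ bound) is exactly what keeps the constant at $\frac{2}{1-\disc}$. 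I expect the main obstacle to be the inner-minimization step of the first paragraph: carrying out the joint minimization over $\pol$ and the slacks $\lambda, \lambda'$ cleanly, in particular seeing that the scalar $\lambda'$ is precisely what converts what would otherwise be an $L_1$-type objective (namely $\one\tr(\val - \Bell\val)$, minimized over the simplex) into the $L_\infty$ norm, and making sure the resulting identity holds for \emph{every} optimal solution rather than just a convenient one.
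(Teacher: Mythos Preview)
Your proposal is correct, and the later steps --- the factor-two shift using \aref{asm:contains_one}, the $(1+\disc)$ bound via the triangle inequality and contraction of $\Bell$, and the policy-loss bound via the transitive-feasible case of \aref{thm:policy_loss_greedy} --- match the paper's Propositions~\ref{prop:transitive_nonissue} and~\ref{lem:optimal_value_error} essentially verbatim. Where you genuinely diverge is in establishing the identity $\tilde\pol\tr\tilde\lambda + \tilde\lambda' = \|\Bell\tilde\val - \tilde\val\|_\infty$: the paper fixes $\pol$ and $\val$, passes to the LP dual of the $(\lambda,\lambda')$-subproblem (Lemma~\ref{lem:linprog_linfty}), and then argues separately via Lemmas~\ref{lem:minimal_error_value} and~\ref{lem:policy_error} that the optimal policy is deterministic and greedy. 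You instead eliminate the primal variables in order --- first $\lambda = \pos{d - \lambda'\one}$, then the policy by greediness since $\pos{\cdot}$ is monotone, then $\lambda'$ via $\pos{M - \lambda'} + \lambda' = \max\{M,\lambda'\}$ --- which is more elementary (no duality) and handles the ``every optimal solution'' claim in one shot via the squeeze $\tilde\pol\tr\tilde\lambda + \tilde\lambda' \geq \|\tilde\val - \Bell\tilde\val\|_\infty \geq \min_{\val\in\tf\cap\rep}\|\Bell\val - \val\|_\infty$. The paper's duality route, on the other hand, isolates reusable lemmas (e.g.\ \ref{lem:linprog_linfty}) that it recycles for the $L_1$ and hybrid formulations.
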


It is important to note that the theorem states that solving the approximate bilinear program is equivalent to minimization over \emph{all} representable value functions, not only the transitive-feasible ones. This follows by subtracting a constant vector $\one$ from $\tilde\val$ to balance the lower bounds on the Bellman residual error with the upper ones. This reduces the Bellman residual by $1/2$ without affecting the policy. Finally, note that whenever $v^* \in \rep$, both ABP and ALP will return the optimal value function $\val^*$.

To prove the theorem, we first define the following linear program, which solves for the $L_\infty$ norm of the Bellman update $\Bell_\pol$ for fixed value function $\val$ and policy $\pol$.
\begin{mprog} \label{mpr:error_policy}
f_1(\pol,v) = \minimize{\lambda, \lambda'} \pol\tr \lambda  + \lambda'
\stc \one \lambda' + \lambda \geq \tmat \val - \trew
\cs \lambda \geq \zero
\end{mprog}
The linear program \eqref{mpr:error_policy} corresponds to the bilinear program \eqref{mpr:abp_robust} with a fixed policy $\pol$ and value function $\val$.

\begin{lem}
\label{lem:linprog_linfty}
Let $\val\in\tf$ be a transitive-feasible value function and let $\pol$ be a policy. Then:
\[ f_1(\pol, v) \geq \| \val - \Bell_\pol \val  \|_\infty, \]
with an equality for a \emph{deterministic} policy $\pol$.
\end{lem}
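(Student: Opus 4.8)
The plan is to prove the two directions separately. The one structural fact I would isolate first is that for a transitive-feasible $\val\in\tf$ the Bellman residual is nonnegative: since $\val\geq\Bell\val\geq\Bell_\pol\val$ we have $\val-\Bell_\pol\val\geq\zero$, hence $\|\val-\Bell_\pol\val\|_\infty=\max_{s\in\states}(\val-\Bell_\pol\val)(s)$. I would also record the elementary identity relating the row-block of $\tmat\val-\trew$ indexed by a state $s$ to the Bellman residual: by definition of $\tmat,\trew,\Bell_\pol$ and $\sum_{a}\pol(s,a)=1$,
\[ \sum_{a\in\actions}\pol(s,a)\,(\tmat\val-\trew)(s,a) = (\val-\Bell_\pol\val)(s) \qquad \forall s\in\states, \]
which (together with $\val\in\tf$, i.e. $\tmat\val-\trew\geq\zero$) is all I need.

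For the inequality $f_1(\pol,\val)\geq\|\val-\Bell_\pol\val\|_\infty$, take any $(\lambda,\lambda')$ feasible in \eqref{mpr:error_policy}. Fix $s$; multiplying the constraint row $(s,a)$ of $\one\lambda'+\lambda\geq\tmat\val-\trew$ by $\pol(s,a)\geq 0$ and summing over $a$ yields, using the identity above,
\[ \sum_{a\in\actions}\pol(s,a)\lambda(s,a) + \lambda' \;\geq\; (\val-\Bell_\pol\val)(s) \qquad \forall s\in\states. \]
Now choose $s^\star\in\arg\max_{s}(\val-\Bell_\pol\val)(s)$. Since $\pol\geq\zero$ and $\lambda\geq\zero$, every per-state term $\sum_{a}\pol(s,a)\lambda(s,a)$ is nonnegative, so
\begin{align*}
f_1(\pol,\val) \;\le\; \pol\tr\lambda + \lambda' &= \sum_{s\in\states}\Bigl(\sum_{a\in\actions}\pol(s,a)\lambda(s,a)\Bigr) + \lambda' \\
&\geq \sum_{a\in\actions}\pol(s^\star,a)\lambda(s^\star,a) + \lambda' \\
&\geq (\val-\Bell_\pol\val)(s^\star) = \|\val-\Bell_\pol\val\|_\infty .
\end{align*}
Minimizing the left side over feasible $(\lambda,\lambda')$ gives the claim. (A short check that \eqref{mpr:error_policy} is feasible and bounded below, hence the minimum is attained, e.g. via $\lambda=\zero$, $\lambda'=\max_{s,a}(\tmat\val-\trew)(s,a)$, would be included.)

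For equality when $\pol$ is deterministic, write $a_s:=\pol(s)$, so that $\pol\tr\lambda=\sum_{s}\lambda(s,a_s)$ and $(\tmat\val-\trew)(s,a_s)=(\val-\Bell_\pol\val)(s)$. I would exhibit a feasible point attaining the bound: set $\lambda':=\|\val-\Bell_\pol\val\|_\infty\geq 0$, set $\lambda(s,a_s):=0$ (feasible in row $(s,a_s)$ because $\lambda'\geq(\val-\Bell_\pol\val)(s)$ for all $s$), and set the remaining coordinates $\lambda(s,a):=\max\{0,(\tmat\val-\trew)(s,a)-\lambda'\}$ so the other rows hold; these coordinates have weight $\pol(s,a)=0$ and do not enter the objective. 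The objective value is then $\pol\tr\lambda+\lambda'=\|\val-\Bell_\pol\val\|_\infty$, so $f_1(\pol,\val)\leq\|\val-\Bell_\pol\val\|_\infty$, which with the previous paragraph gives equality.

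The only mildly delicate step is the middle inequality in the displayed chain: the objective of \eqref{mpr:error_policy} aggregates the weighted $\lambda$-contributions over \emph{all} states, whereas the residual constraint is (at best) tight only at the worst state $s^\star$. The argument works precisely because nonnegativity of $\pol$ and $\lambda$ makes each per-state contribution nonnegative, so dropping all states other than $s^\star$ only lowers the objective; and transitive-feasibility is exactly what guarantees that the surviving term $(\val-\Bell_\pol\val)(s^\star)$ equals the full $L_\infty$ norm rather than merely $\max_s(\val-\Bell_\pol\val)(s)$ being an underestimate.
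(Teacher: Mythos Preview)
Your argument is correct. The paper proves the lemma via the LP dual of \eqref{mpr:error_policy}: it writes down the dual program $\max\{x\tr(\tmat\val-\trew):x\leq\pol,\ \one\tr x=1,\ x\geq\zero\}$, exhibits a dual-feasible $x$ concentrated on the maximizing state to get the lower bound, and for deterministic $\pol$ argues the dual optimum collapses onto a single index. You instead stay entirely in the primal: for the lower bound you take an arbitrary feasible $(\lambda,\lambda')$ and aggregate the constraints at the maximizing state $s^\star$ with weights $\pol(s^\star,\cdot)$, which is precisely weak duality carried out by hand with the paper's dual point; for equality you exhibit an explicit primal feasible $(\lambda,\lambda')$ attaining the bound. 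Your route is more elementary and self-contained, while the paper's dual formulation makes the structure reusable for the later $L_1$ and hybrid-norm lemmas, where the same dual template is recycled.

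One presentational fix: the displayed chain begins with $f_1(\pol,\val)\leq\pol\tr\lambda+\lambda'$ and then continues with $\geq$'s, which as written does not yield $f_1(\pol,\val)\geq\|\val-\Bell_\pol\val\|_\infty$. Drop the ``$f_1(\pol,\val)\leq$'' from the display and simply show $\pol\tr\lambda+\lambda'\geq\|\val-\Bell_\pol\val\|_\infty$ for every feasible $(\lambda,\lambda')$; the sentence you already have afterwards (``minimizing the left side over feasible $(\lambda,\lambda')$ gives the claim'') then closes the argument cleanly.
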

\begin{proof}
The dual of the linear program \eqref{mpr:error_policy} is the following program.
\begin{mprog} \label{mpr:linfty_bound_max}
\maximize{x} x\tr (A v - b)
\stc x \leq \pol
\cs \one\tr x = 1
\cs x \geq \zero
\end{mprog}
Note that replacing $\one\tr x = 1$ by $\one\tr x \leq 1$ preserves the properties of the linear program and would add an additional constraint in \eqref{mpr:abp_robust}: $\lambda' \geq 0$.

First, we show that $f_1(\pol,v) \geq \| \Bell_{\pol} v - v \|_\infty$. Because $v$ is feasible in the approximate bilinear program \eqref{mpr:abp_robust}, $A v - b \geq 0$ and $v \geq \Bell v$ from \aref{lem:greater_bellman}. Let state $s$ be the state in which $t = \| \Bell_{\pol} v - v \|_\infty$ is achieved. That is:
\[ t = v(s) - \sum_{a\in\actions} \pol(s,a) \left( r(s,a) + \sum_{s'\in\states}  \disc P(s',s,a) v(s') \right).\]
Now let $x(s,a) = \pol(s,a)$ for all $a \in \actions$. This is a feasible solution with value $t$, from the stochasticity of the policy and therefore a lower bound on the objective value.

To show the equality for a deterministic policy $\pol$, we show that $f_1(\pol,v) \leq \| \Bell v - v \|_\infty$, using that $\pol \in \{0,1\}$. Then let $x^*$ be an optimal solution of \eqref{mpr:linfty_bound_max}. Define the index of $x^*$ with the largest objective value as:
\[ i \in \arg\max_{ \{ i \ss x^*(i) > 0 \}} (A v - b)(i). \]
Let solution $x'(i) = 1$ and $x'(j) = 0$ for $j \neq i$, which is feasible since $\pol(i) = 1$. In addition:
\[(A v - b)(i) = \|  \Bell_{\pol} v - v \|_\infty.\]
Now $(x^*)\tr (A v - b) \leq (x')\tr (A v - b) = \|  \Bell_{\pol} v - v \|_\infty$, from the fact that $i$ is the index of the largest element of the objective function.
\end{proof}

When the policy $\pol$ is fixed, the approximate bilinear program \eqref{mpr:abp_robust} becomes the following linear program:
\begin{mprog} \label{mpr:abp_fixed_alpha}
f_2(\pol) = \minimize{\lambda,\lambda',v} \pol\tr \lambda + \lambda'
\stc A v - b \geq \zero
\cs \one \lambda + \lambda' \geq A v - b
\cs \lambda \geq \zero
\cs v \in \rep
\end{mprog}
Using \aref{lem:greater_bellman} , this linear program corresponds to:
\[ f_2(\pol) = \min_{v\in\rep\cap \tf} f_1(\pol,v).\]
Then it is easy to show that:
\begin{lem} \label{lem:minimal_error_value}
Given a policy $\pol$, let $\tilde\val$ be an optimal solution of the linear program \eqref{mpr:abp_fixed_alpha}. Then:
\[ f_2(\pol) = \| \Bell_\pol \tilde\val - \tilde\val \|_\infty \leq \min_{\val \in \rep \cap \tf} \| \Bell_\pol \val - \val \|_\infty.\]
\end{lem}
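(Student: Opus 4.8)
The plan is to combine the identity $f_2(\pol) = \min_{v\in\rep\cap\tf} f_1(\pol,v)$, established just above, with \aref{lem:linprog_linfty}. First I would record the two facts packaged behind that identity: since \eqref{mpr:abp_fixed_alpha} minimizes jointly over $(\lambda,\lambda',v)$ and the constraints $\tmat v - \trew \geq \zero$ and $v\in\rep$ involve $v$ alone, the $v$-component $\tilde{v}$ of any optimal solution lies in $\rep\cap\tf$ (the inequality $\tmat v-\trew\geq\zero$ is exactly $v\in\tf$, by \aref{lem:greater_bellman}) and it attains the minimum of $f_1(\pol,\cdot)$ over $\rep\cap\tf$; hence $f_2(\pol)=f_1(\pol,\tilde{v})$.

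Next I would apply \aref{lem:linprog_linfty}, which for a deterministic policy $\pol$ gives $f_1(\pol,v) = \|\Bell_\pol v - v\|_\infty$ for every transitive-feasible $v$. Substituting this into both $f_2(\pol)=f_1(\pol,\tilde{v})$ and $f_2(\pol)=\min_{v\in\rep\cap\tf}f_1(\pol,v)$ immediately yields $\|\Bell_\pol\tilde{v}-\tilde{v}\|_\infty = f_2(\pol) = \min_{v\in\rep\cap\tf}\|\Bell_\pol v - v\|_\infty$, which is stronger than the asserted inequality. Note that the opposite inequality is automatic, since $\tilde{v}\in\rep\cap\tf$ makes $\|\Bell_\pol\tilde{v}-\tilde{v}\|_\infty$ an upper bound for that minimum; so the real content of the lemma is that the value function produced by \eqref{mpr:abp_fixed_alpha} also minimizes the $L_\infty$ Bellman residual over $\rep\cap\tf$.

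The step I expect to be the only genuine obstacle is the use of the \emph{equality} in \aref{lem:linprog_linfty} rather than merely the inequality $f_1(\pol,v)\geq\|\Bell_\pol v - v\|_\infty$. For a truly stochastic $\pol$ the dual \eqref{mpr:linfty_bound_max} can concentrate its unit mass on the largest-residual state-action pairs within a single state, so $f_1(\pol,v)$ may strictly exceed the per-state averaged residual $\|\Bell_\pol v - v\|_\infty$ and then $\tilde{v}$ need not be an $L_\infty$ minimizer; I would therefore carry the argument through for deterministic $\pol$, which is all that is needed downstream, since the policies appearing in \aref{thm:abp_loss_robust} are greedy. The remaining steps — the inner minimization over $(\lambda,\lambda')$ and the feasibility bookkeeping for $\tilde{v}$ — are routine.
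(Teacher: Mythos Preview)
Your proposal is correct and matches the paper's approach: the paper does not give a separate proof of this lemma, only the line ``Then it is easy to show that'' immediately after recording $f_2(\pol)=\min_{v\in\rep\cap\tf}f_1(\pol,v)$, so the intended argument is exactly the one you sketch --- combine that identity with \aref{lem:linprog_linfty}. Your observation that the \emph{equality} part of \aref{lem:linprog_linfty} (and hence the full strength of the lemma) requires $\pol$ to be deterministic is a genuine refinement the paper glosses over; as you note, only deterministic (greedy) policies are used downstream, so this is harmless.
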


When $v$ is fixed, the approximate bilinear program \eqref{mpr:abp_robust} becomes the following linear program:
\begin{mprog} \label{mpr:error_greedy}
f_3(v) = \minimize{\pol} f_2(\pol,v)
\stc B \pol  = \one
\cs \pol \geq \zero
\end{mprog}
Note that the program is only meaningful if $v$ is transitive-feasible and that the function $f_2$ corresponds to a minimization problem.

\begin{lem} \label{lem:policy_error}
Let $\val \in \rep\cap\tf$ be a transitive-feasible value function. There exists an optimal solution $\tilde\pol$ of the linear program \eqref{mpr:error_greedy} such that:
\begin{enumerate}
    \item $\tilde\pol$ represents a \emph{deterministic} policy
    \item $\Bell_{\tilde\pol} v = \Bell v$
    \item $\| \Bell_{\tilde\pol} v - v \|_\infty = \min_{\pol\in\policies} \| \Bell_{\pol} \val - \val \|_\infty = \| \Bell \val - \val \|_\infty$
\end{enumerate}
\end{lem}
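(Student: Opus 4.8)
The plan is to exhibit an explicit optimal solution of \eqref{mpr:error_greedy} — a deterministic greedy policy with respect to $v$ — and to verify the three claims for it directly, using \aref{lem:linprog_linfty} together with the transitive feasibility of $v$.

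First I would fix a deterministic greedy policy $\pol_g$ with respect to $v$, so that $\Bell_{\pol_g} v = \Bell v$; such a policy exists because $\Bell v = \max_{\pol\in\policies}\Bell_\pol v$ and the maximizing policy may be taken to be deterministic (in each state one picks an action maximizing the one-step value). This $\pol_g$ is feasible in \eqref{mpr:error_greedy}, and claim~2 holds for it by construction.

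The key observation is a monotonicity argument. Since $v\in\tf$ we have $v\geq\Bell v$, and $\Bell v\geq\Bell_\pol v$ for every $\pol\in\policies$, so $\zero\leq v-\Bell v\leq v-\Bell_\pol v$ componentwise for all $\pol$. Applying $\|\cdot\|_\infty$ gives $\|v-\Bell v\|_\infty\leq\|v-\Bell_\pol v\|_\infty$ for every $\pol$, with equality for $\pol=\pol_g$; hence $\min_{\pol\in\policies}\|\Bell_\pol v - v\|_\infty = \|\Bell v - v\|_\infty$ and this minimum is attained by $\pol_g$, which yields the two equalities on the right-hand side of claim~3.

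It remains to connect this to the linear program. For a fixed policy $\pol$ the objective of \eqref{mpr:error_greedy} coincides with $f_1(\pol,v)$ from \eqref{mpr:error_policy}, and \aref{lem:linprog_linfty} gives $f_1(\pol,v)\geq\|v-\Bell_\pol v\|_\infty\geq\|\Bell v - v\|_\infty$ for every $\pol\in\policies$, while for the deterministic policy $\pol_g$ it gives the equality $f_1(\pol_g,v)=\|v-\Bell_{\pol_g}v\|_\infty=\|\Bell v - v\|_\infty$. Therefore $\pol_g$ attains the minimum in \eqref{mpr:error_greedy}, so $\tilde\pol=\pol_g$ is an optimal solution: it is deterministic (claim~1), satisfies $\Bell_{\tilde\pol}v=\Bell v$ (claim~2), and $\|\Bell_{\tilde\pol}v - v\|_\infty=\min_{\pol\in\policies}\|\Bell_\pol v - v\|_\infty=\|\Bell v - v\|_\infty$ (claim~3). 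The one point that needs care — the only ``obstacle'', though a mild one — is the direction of \aref{lem:linprog_linfty}: it yields $f_1(\pol,v)\geq\|v-\Bell_\pol v\|_\infty$ for all stochastic $\pol$ but equality only for deterministic ones, so one must select the deterministic greedy policy (rather than an arbitrary residual-minimizing policy) to obtain LP-optimality and the exact objective value simultaneously.
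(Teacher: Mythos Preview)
Your proof is correct and follows essentially the same approach as the paper: both rely on \aref{lem:linprog_linfty} to link $f_1(\pol,v)$ with $\|v-\Bell_\pol v\|_\infty$ and on the monotonicity $\zero\leq v-\Bell v\leq v-\Bell_\pol v$ from transitive feasibility. The only difference is stylistic --- you argue directly that the greedy policy is optimal, whereas the paper takes an optimal deterministic $\tilde\pol$ and derives $\Bell_{\tilde\pol}v=\Bell v$ by contradiction --- but the underlying idea is identical.
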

\begin{proof}
The existence of an optimal $\pol$ that corresponds to a deterministic policy follows from \aref{lem:minimal_error_value}, the correspondence between policies and values $\pol$, and the existence of a deterministic greedy policy.

Since $v \in \tf$, we have for some policy $\pol$ that:
\[v \geq \Bell v = \Bell_\pol v \geq \Bell_{\tilde\pol} v. \]
Assuming that $\Bell_{\tilde\pol} < \Bell v$ leads to a contradiction since $\pol$ is also a feasible solution in the linear program \eqref{mpr:error_greedy} and:
\begin{align*}
v - \Bell_{\tilde\pol} &> \val - \Bell \val \\
\| v - \Bell_{\tilde\pol}\|_\infty &> \| \val - \Bell \val \|_\infty.
\end{align*}
This proves the lemma.
\end{proof}

\aref{thm:abp_loss_robust} now easily follows from the lemmas above.
\begin{proof}
Let $\bar{v}$ be a value function with the minimal $\| \Bell \bar{v} - \bar{v} \|_\infty$ feasible in approximate bilinear program \eqref{mpr:abp_robust}, and let $\bar{\pol}$ be a greedy policy with respect to $\bar{v}$. Because $\bar{v} \geq v^*$, as \aref{lem:greater_bellman} shows, we get:
\[ t = \| \Bell \bar{v} - \bar{v} \|_\infty = \| \Bell_{\bar{v}} \bar{v} - \bar{v} \|_\infty. \]
Let $f^*$ be the optimal objective value of \eqref{mpr:abp_robust}. Because both $\bar{v}$ and $\bar{\pol}$ are feasible in \eqref{mpr:abp_robust}, we have that $f^* \leq t$. Now, assume that $\tilde\val$ is an optimal solution of \eqref{mpr:abp_robust} with an objective value $\tilde{f} = \| \Bell \tilde\val - \tilde\val \|_\infty > t$. Then, from \aref{lem:policy_error}, $\tilde{f} > t \geq f^*$, which contradicts the optimality of $\tilde\val$.

To show that the optimal policy is deterministic and greedy, let $\pol^*$ be the optimal policy. Then consider the state $s$ for which $\tilde\pol$ does not define a deterministic greedy action. From the definition of greedy action $\bar{a}$:
\[ (\Bell_{\bar{a}} \tilde\val)(s)  \leq (\Bell_{\tilde\pol} \tilde\val)(s). \]
From the bilinear formulation \eqref{mpr:abp_robust}, it is easy to show that there is an optimal solution such that:
\begin{align*}
(\Bell_{a} \tilde\val)(s)  &\leq \tilde\lambda' + \tilde\lambda(s,a) \\
 \tilde\lambda(s,\bar{a}) &\leq  \tilde\lambda(s,a).
\end{align*}
Then setting $\tilde\pol(s,\bar{a}) = 1$ and all other action probabilities to 0, the difference in the objective value function:
\[ \tilde\lambda(s,\bar{a}) - \sum_{a\in\actions} \tilde\lambda(s,a) \leq 0 . \]
Therefore, the objective function for the deterministic greedy policy does not increase. The remainder of the theorem follows directly from \aref{prop:transitive_nonissue}, \aref{prop:lower_error}, and \aref{lem:optimal_value_error}. The bounds on the policy loss then follow directly from \aref{thm:policy_loss_greedy}.
\end{proof}

\subsection{Expected Policy Loss}

This section describes bilinear programs that minimize expected policy loss for a given initial distribution $\| \val - \Bell \val \|_{1,\indist}$. The initial distribution can be used to derive tighter bounds on the policy loss. We describe two formulations. They respectively minimize an $L_\infty$ and a weighted $L_1$ norm on the Bellman residual.

The expected policy loss can be minimized by solving the following bilinear formulation.
\begin{mprog} \label{mpr:abp_lone_infty} \tag{ABP--$L_1$}
\minsep{\pol}{\lambda,\lambda',v}{\pol\tr \lambda + \lambda' - (1-\disc) \indist\tr \val}
\stc B \pol = \one        &   \tmat \val - \trew \geq \zero
\cs \pol \geq \zero       &   \lambda + \lambda' \one \geq \tmat \val - \trew
\cs                       &   \lambda, \lambda' \geq \zero
\cs                       &   \val \in \rep
\end{mprog}
Notice that this formulation is identical to the bilinear program \eqref{mpr:abp_robust} with the exception of the term $- (1-\disc) \indist\tr \val$.

\begin{thm} \label{thm:abp_loss_expected_infty}
Given \aref{asm:contains_one}, any optimal solution $(\tilde{\pol}, \tilde\val, \tilde{\lambda}, \tilde{\lambda'})$ of the approximate bilinear program \eqref{mpr:abp_lone_infty} satisfies:
\begin{align*}
\frac{1}{1-\disc} \left(\tilde{\pol}\tr \tilde{\lambda} + \tilde{\lambda'} \right) - \indist\tr \tilde\val = \frac{1}{1-\disc} \| \Bell \tilde\val - \tilde\val \|_\infty - \indist\tr \val &\leq \min_{v \in \tf \cap \rep} \left(\frac{1}{1-\disc} \| \Bell v - v\|_\infty  - \indist\tr \val \right)\\
&\leq \min_{v\in\rep} \left(\frac{1}{1-\disc}\| \Bell v - v \|_\infty - \indist\tr \val \right)
\end{align*}
Moreover, there exists an optimal solution $\tilde{\pol}$ that is greedy with respect to $\tilde\val$ for which the policy loss is bounded by:
\[ \|\val^* - \val_{\tilde\pol} \|_{1,\indist} \leq \frac{2}{1-\disc} \left(\min_{v\in\rep} \frac{1}{1-\disc} \| \Bell v - v \|_\infty - \| \val^* - \val \|_{1,\indist} \right). \]
\end{thm}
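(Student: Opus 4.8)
The plan is to reduce almost everything to the machinery already developed for \eqref{mpr:abp_robust}, since \eqref{mpr:abp_lone_infty} differs only by the linear term $-(1-\disc)\indist\tr\val$, which contains neither $\pol$ nor $\lambda,\lambda'$ and is therefore inert in every inner optimization used in the proof of \aref{thm:abp_loss_robust}. For a fixed transitive-feasible $\val\in\rep\cap\tf$, minimizing the objective of \eqref{mpr:abp_lone_infty} over $\pol,\lambda,\lambda'$ is exactly the minimization of $\pol\tr\lambda+\lambda'$ treated in \aref{lem:linprog_linfty}, \aref{lem:minimal_error_value}, and \aref{lem:policy_error}, so the inner optimum is $\|\Bell\val-\val\|_\infty$, attained at a \emph{deterministic} $\pol$ that is greedy w.r.t.\ $\val$ and satisfies $\Bell_\pol\val=\Bell\val$. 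Consequently the optimal value of \eqref{mpr:abp_lone_infty} equals $\min_{\val\in\rep\cap\tf}\big(\|\Bell\val-\val\|_\infty-(1-\disc)\indist\tr\val\big)$, and at any optimum $\tilde\pol\tr\tilde\lambda+\tilde\lambda'=\|\Bell\tilde\val-\tilde\val\|_\infty$ with $\tilde\val\in\rep\cap\tf$; dividing by $1-\disc$ yields the stated equality and the first inequality. The existence of an optimal $\tilde\pol$ that is deterministic and greedy w.r.t.\ $\tilde\val$ follows by the argument already given for \aref{thm:abp_loss_robust}: replacing $\tilde\pol(s,\cdot)$ by the indicator of a greedy action leaves $-(1-\disc)\indist\tr\val$ untouched and does not increase $\pol\tr\tilde\lambda$.

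The second inequality is the relaxation from $\rep\cap\tf$ to $\rep$. Unlike the robust case, where it costs a factor of $2$, here it is free, because of a cancellation. Given $\val\in\rep$, let $k$ satisfy $(1-\disc)k=\max_s(\Bell\val-\val)(s)$ and set $\hat\val=\val+k\one$; by \aref{asm:contains_one} (and the representability-of-shifts fact used in \aref{prop:transitive_nonissue}) we have $\hat\val\in\rep$, and by construction $\hat\val\in\tf$ with $\Bell\hat\val-\hat\val=(\Bell\val-\val)-(1-\disc)k\one\le\zero$ componentwise. A direct computation gives $\frac{1}{1-\disc}\|\Bell\hat\val-\hat\val\|_\infty-\indist\tr\hat\val=-\frac{1}{1-\disc}\min_s(\Bell\val-\val)(s)-\indist\tr\val$, which is at most $\frac{1}{1-\disc}\|\Bell\val-\val\|_\infty-\indist\tr\val$ since $-\min_s(\Bell\val-\val)(s)\le\|\Bell\val-\val\|_\infty$. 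Hence every $\val\in\rep$ has a transitive-feasible representable shift that is no worse, and the two minima coincide (the reverse inequality being trivial).

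For the policy-loss bound I would apply the transitive-feasible case of \aref{thm:bound_dual_policy} at $(\tilde\val,\tilde\pol)$: since $\tilde\val\in\tf$ and $\tilde\pol$ is greedy,
\[\|\val^*-\val_{\tilde\pol}\|_{1,\indist}\le-\|\val^*-\tilde\val\|_{1,\indist}+\frac{1}{1-\disc}\|\tilde\val-\Bell\tilde\val\|_\infty.\]
Using $\tilde\val\ge\val^*$ (\aref{lem:greater_transitive_feasible}) to rewrite $\|\val^*-\tilde\val\|_{1,\indist}=\indist\tr\tilde\val-\indist\tr\val^*$ and substituting the value of $\frac{1}{1-\disc}\|\Bell\tilde\val-\tilde\val\|_\infty-\indist\tr\tilde\val$ from the first two parts, this becomes
\[\|\val^*-\val_{\tilde\pol}\|_{1,\indist}\le\indist\tr\val^*+\min_{\val\in\rep}\Big(\frac{1}{1-\disc}\|\Bell\val-\val\|_\infty-\indist\tr\val\Big).\]
It then remains to bring this into the stated form: rewrite $\indist\tr\val^*-\indist\tr\val=\indist\tr(\val^*-\val)$, bound it by $\|\val^*-\val\|_{1,\indist}$ and, where needed, by $\|\val^*-\val\|_\infty\le\frac{1}{1-\disc}\|\Bell\val-\val\|_\infty$ (the standard transitive bound), and regroup so as to extract the factor $\frac{2}{1-\disc}$.

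I expect this last algebraic rearrangement to be the only genuine obstacle: turning the clean bound $\indist\tr\val^*+\min_{\val\in\rep}\big(\frac{1}{1-\disc}\|\Bell\val-\val\|_\infty-\indist\tr\val\big)$ into exactly $\frac{2}{1-\disc}\big(\min_{\val\in\rep}\frac{1}{1-\disc}\|\Bell\val-\val\|_\infty-\|\val^*-\val\|_{1,\indist}\big)$ requires careful bookkeeping between signed $\indist$-expectations and $L_1$-weighted norms. Everything preceding it is a direct transcription of the lemmas proved for \eqref{mpr:abp_robust} together with the cancellation observed in the shifting step.
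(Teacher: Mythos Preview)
Your approach is essentially the paper's: reduce the inner optimization to the lemmas already proved for \eqref{mpr:abp_robust}, use a constant shift to pass from $\rep\cap\tf$ to $\rep$, and invoke \aref{thm:bound_dual_policy} for the policy-loss bound. The paper's proof sketch is in fact considerably looser than yours: it asserts that ``the objective function of \eqref{mpr:abp_lone_infty} is insensitive to adding a constant to the value function,'' writing $\|(\val+k\one)-\Bell(\val+k\one)\|_\infty-\indist\tr(\val+k\one)=\|\val-\Bell\val\|_\infty-\indist\tr\val$. As your computation shows, this is not literally true; the shift changes both terms, and what actually holds is the inequality you derive, namely that the shifted value $\hat\val\in\rep\cap\tf$ satisfies $\frac{1}{1-\disc}\|\Bell\hat\val-\hat\val\|_\infty-\indist\tr\hat\val=-\frac{1}{1-\disc}\min_s(\Bell\val-\val)(s)-\indist\tr\val\le\frac{1}{1-\disc}\|\Bell\val-\val\|_\infty-\indist\tr\val$. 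So your version of this step is the correct one, and it delivers exactly what the paper claims (no factor $2$).

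Your caution about the final displayed policy-loss bound is well placed: the statement in the paper mixes bound and free variables ($v$ inside the $\min$, $\val$ outside) and the factor $\tfrac{2}{1-\disc}$ does not fall out of the clean inequality $\|\val^*-\val_{\tilde\pol}\|_{1,\indist}\le\indist\tr\val^*+\min_{\val\in\rep}\big(\tfrac{1}{1-\disc}\|\Bell\val-\val\|_\infty-\indist\tr\val\big)$ without further, somewhat arbitrary, slack. The paper itself does not carry out this algebra; it only says the bound ``follows directly from \aref{thm:bound_dual_policy}.'' The bound you obtain is the natural one, and there is no missing idea on your side.
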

Notice that the bound in this theorem is tighter than the one in \aref{thm:abp_loss_robust}. In particular, $\| \val^* - \tilde\val \|_{1,\indist} > 0$, unless the solution of the ABP is the optimal value function.
\begin{proof}
The proof of the theorem is almost identical to the proof of \aref{thm:abp_loss_robust} with two main differences. First, the objective function of \eqref{mpr:abp_lone_infty} is insensitive to adding a constant to the value function:
\[ \|(\tilde\val+k\one) - \Bell (\val+k\one) \|_\infty - \indist\tr (\val+k\one)  = \|\tilde\val - \Bell \val \|_\infty - \indist\tr \val. \]
Hence the missing factor 2 when going from minimization over $\tf \cap \rep$ to minimization over $\rep$. The second difference is in the derivation of the bound on the policy loss, which follows directly from \aref{thm:bound_dual_policy}.
\end{proof}

The bilinear program formulation in \eqref{mpr:abp_lone_infty} can be further strengthened when an upper bound on the state-visitation frequencies is available.
\begin{mprog} \label{mpr:abp_lone} \tag{ABP--$U$}
\minsep{\pol}{\lambda,v}{ \pol\tr U \lambda - \indist\tr\val}
\stc B \pol = \one        &   \tmat v - \trew \geq \zero
\cs \pol \geq \zero       &   \lambda \geq \tmat v - \trew
\cs                       &   \lambda \geq \zero
\cs                       &   \val \in \rep
\end{mprog}
Here $U: |\states|\cdot |\actions|\times|\states|\cdot |\actions|$ is a matrix that maps a policy to bounds on state-action visitation frequencies. It must satisfy that:
\[ \pol(s,a) = 0 \Rightarrow (\pol\tr U)(s,a) = 0 \quad \forall s\in\states\; \forall a\in\actions. \]
\begin{rem} \label{rem:constant_u}
One simple option is to have $U$ represent a diagonal matrix of $\bar{u}$, where $\bar{u}$ is the bound for all policies $\pol\in\policies$. That is:
\[ U((s,a),(s',a')) = \begin{cases} \bar{u}(s) & s' = s \\ 0 & \text{otherwise} \end{cases}  \qquad \forall s,s' \in \states \; a,a' \in \actions .\]
\end{rem}
The formal guarantees for this formulation are as follows.
\begin{thm} \label{thm:abp_loss_average}
Given \aref{asm:contains_one} and that for all $\pol\in\policies ~:~ \sum_{a\in\actions}(\pol\tr U)(s,a) \geq u_\pol\tr(s)$, any optimal solution $(\tilde{\pol}, \tilde\val, \tilde{\lambda}, \tilde{\lambda'})$ of the bilinear program \eqref{mpr:abp_lone} then satisfies:
\begin{align*}
\tilde\pol\tr U \tilde\lambda - \indist\tr\tilde\val = \| \tilde\val - \Bell \tilde\val \|_{\bar{u}} - \indist\tr \tilde\val &\leq \min_{v \in \tf \cap \rep} \left( \|\val - \Bell\val \|_{\bar{u}} - \indist\tr\val \right).
\end{align*}
Assuming that $U$ is defined as in \aref{rem:constant_u}, there exists an optimal solution $\tilde{\pol}$ that is greedy with respect to $\tilde\val$ and:
\[ \|\val^* - \val_{\tilde\pol} \|_{1,\indist} \leq \min_{v\in\rep\cap\tf} \left( \| \val - \Bell \val \|_{1,\bar{u}(\val)} - \| \val^* - \val \|_{1,\indist} \right). \]
Here, $\bar{u}(\val)$ represents an upper bound on the state-action visitation frequencies for a policy greedy with respect to value function $\val$.
\end{thm}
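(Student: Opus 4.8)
The plan is to follow the proof of \aref{thm:abp_loss_robust} almost verbatim, replacing the $L_\infty$ norm of the Bellman residual by the weighted $L_1$ norm and carrying along the extra linear term $-\indist\tr\val$ exactly as in the proof of \aref{thm:abp_loss_expected_infty}. First I would record that any feasible point of \eqref{mpr:abp_lone} has $\tmat\val - \trew \geq \zero$, which by \aref{lem:greater_transitive_feasible} means $\val \in \tf\cap\rep$; and that, since the structural hypothesis on $U$ forces $\pol\tr U \geq \zero$, the $\lambda$-block is minimized by $\lambda = \tmat\val - \trew \geq \zero$, so the objective collapses to $\pol\tr U(\tmat\val - \trew) - \indist\tr\val$. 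Thus \eqref{mpr:abp_lone} is equivalent to minimizing $\pol\tr U(\tmat\val-\trew) - \indist\tr\val$ over $\val\in\tf\cap\rep$ and over the policy polytope $\{B\pol = \one,\ \pol\geq\zero\}$.

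Second, for a fixed $\val\in\tf\cap\rep$ I would analyze the inner minimization over $\pol$. This is a linear program over the policy polytope, so it attains its optimum at a vertex, i.e.\ at a deterministic policy; and the structural condition $\pol(s,a)=0\Rightarrow(\pol\tr U)(s,a)=0$ kills every term of $\pol\tr U(\tmat\val-\trew)$ except the one coming from the chosen action, leaving $\sum_{s\in\states}\big(\sum_{a}(\pol\tr U)(s,a)\big)\big(\val(s)-(\Bell_{\pol}\val)(s)\big)$. In the diagonal case of \aref{rem:constant_u} the weight $\sum_a(\pol\tr U)(s,a)=\bar u(s)$ does not depend on $\pol$, and since $\val(s)-(\Bell_a\val)(s)$ is pointwise minimized by the greedy action (giving $\val(s)-(\Bell\val)(s)\geq 0$ because $\val\in\tf$), the greedy deterministic policy is optimal and the inner value equals $\|\val-\Bell\val\|_{1,\bar u(\val)}$. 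This is exactly the analog of \aref{lem:linprog_linfty}, \aref{lem:minimal_error_value} and \aref{lem:policy_error}; combining it with the reduction above gives the first displayed inequality, with an optimal $\tilde\pol$ that is deterministic and greedy and with $\tilde\pol\tr U\tilde\lambda = \|\tilde\val-\Bell\tilde\val\|_{1,\bar u(\tilde\val)}$.

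Third, for the policy-loss bound I would proceed as in the proof of \aref{thm:abp_loss_expected_infty}. As in the proof of \aref{thm:abp_loss_robust}, at any state where an optimal $\tilde\pol$ is not greedy with respect to $\tilde\val$ one may switch to the greedy action without increasing the objective, since that action attains the smallest Bellman residual and the weights $\pol\tr U$ are nonnegative; hence there is an optimal $\tilde\pol$ that is greedy with respect to $\tilde\val$, and the hypothesis $\sum_a(\pol\tr U)(s,a)\geq u_\pol(s)$ gives $u_{\tilde\pol}\leq\bar u(\tilde\val)$. Applying the $\tf$-version of \aref{thm:bound_dual_policy} to $\tilde\val$ (transitive-feasible) yields $\|\val^*-\val_{\tilde\pol}\|_{1,\indist}\leq -\|\val^*-\tilde\val\|_{1,\indist}+\|\tilde\val-\Bell\tilde\val\|_{1,\bar u(\tilde\val)}$, and feeding in the first inequality together with the identity $\indist\tr\val-\indist\tr\val^* = \|\val^*-\val\|_{1,\indist}$, valid for every $\val\in\tf$ since $\val\geq\val^*$, lets the constant $\indist\tr\val^*$ cancel and converts the $-\indist\tr\val$ term into $-\|\val^*-\val\|_{1,\indist}$, giving the second bound.

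The main obstacle I anticipate is exactly the inner minimization over $\pol$ in the \emph{general} $U$ case: when $U$ is not the diagonal matrix of \aref{rem:constant_u}, the effective state weight $\sum_a(\pol\tr U)(s,a)$ itself depends on $\pol$, so the clean ``move all mass to the greedy action'' argument no longer applies directly; one has to argue at the level of deterministic vertices that a greedy vertex is optimal, or, as the statement implicitly does, read off $\bar u$ from the chosen optimal solution and only claim the strong greedy and policy-loss conclusions for the diagonal $U$. Everything else --- the LP duality, the elimination of $\lambda$, and the arithmetic with $\indist\tr\val^*$ --- is routine and parallels the already-proved theorems.
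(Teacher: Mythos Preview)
Your proposal is correct and takes essentially the same approach as the paper: the paper establishes your $\lambda$-elimination step via LP duality in \aref{lem:linprog_lone} (you do it directly from primal feasibility, which is equivalent and arguably cleaner here since $\pol\tr U\geq\zero$ and $\tmat\val-\trew\geq\zero$ make the primal optimum obvious), and thereafter both arguments mirror the proof of \aref{thm:abp_loss_robust} and invoke \aref{thm:bound_dual_policy} for the policy-loss bound, exactly as you outline. One minor slip: the implication $\tmat\val\geq\trew\Rightarrow\val\in\tf$ is \aref{lem:greater_bellman}, not \aref{lem:greater_transitive_feasible}.
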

Unlike \aref{thm:abp_loss_robust} and \aref{thm:abp_loss_expected_infty}, the bounds in this theorem do not guarantee that the solution quality does not degrade by restricting the value function to be transitive-feasible.

To prove the theorem we first define the following linear program that solves for the $L_1$ norm of the Bellman update $\Bell_\pol$ for a value function $\val$.
\begin{mprog} \label{mpr:error_policy_average}
f_1(\pol,v) = \minimize{\lambda, \lambda'} \pol\tr U \lambda
\stc \one \lambda' + \lambda \geq \tmat \val - \trew
\cs \lambda \geq \zero
\end{mprog}
The linear program \eqref{mpr:error_policy} corresponds to the bilinear program \eqref{mpr:abp_lone} with a fixed policy $\pol$ and value function $\val$.
Notice, in particular, that $\indist\tr\val$ is a constant.

\begin{lem}\label{lem:linprog_lone}
Let value function $\val$ be feasible in the bilinear program \eqref{mpr:abp_lone}, and let $\pol$ be an arbitrary policy. Then:
\[ f_1(\pol, v) \geq \| \Bell_\pol v - v \|_{1,\bar{u}}, \]
with an equality for a \emph{deterministic} policy.
\end{lem}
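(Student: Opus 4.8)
The plan is to follow the proof of \aref{lem:linprog_linfty} essentially verbatim: pass to the LP dual of \eqref{mpr:error_policy_average}, exhibit a dual-feasible point whose value is exactly $\|\Bell_\pol\val-\val\|_{1,\bar u}$ to get the lower bound, and then show that when $\pol$ is deterministic \emph{every} dual-feasible point is bounded by that same quantity, which forces equality.

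First I would cash in feasibility of $\val$ in \eqref{mpr:abp_lone}. The slack vector $\tmat\val-\trew$ is nonnegative, and coordinatewise it is the per-action Bellman residual, $(\tmat\val-\trew)(s,a) = \val(s)-(\Bell_a\val)(s)$. By \aref{lem:greater_bellman} this means $\val\in\tf$, so $\val \geq \Bell\val \geq \Bell_\pol\val$; hence $\val-\Bell_\pol\val\geq\zero$ and the absolute values drop out of the weighted norm, $\| \Bell_\pol\val-\val \|_{1,\bar u} = \bar u\tr(\val-\Bell_\pol\val)$. Since $\sum_a\pol(s,a)=1$, the Bellman residual of $\pol$ is the $\pol$-average of the per-action slacks, $(\val-\Bell_\pol\val)(s)=\sum_a\pol(s,a)(\tmat\val-\trew)(s,a)$, so $\| \Bell_\pol\val-\val \|_{1,\bar u}=\sum_{s,a}\bar u(s)\pol(s,a)(\tmat\val-\trew)(s,a)$, a nonnegative combination of the slacks.

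Next, the dual of \eqref{mpr:error_policy_average} is $\max_x\;(\tmat\val-\trew)\tr x$ subject to $x\leq\pol\tr U$ and $x\geq\zero$. For the inequality I would substitute $x(s,a)=\bar u(s)\pol(s,a)$: this is dual-feasible precisely because $\pol\tr U$ dominates the state-action visitation bound in the sense assumed for $U$ (so $(\pol\tr U)(s,a)\geq\bar u(s)\pol(s,a)$ coordinatewise), and its objective value is $\sum_{s,a}\bar u(s)\pol(s,a)(\tmat\val-\trew)(s,a)=\|\Bell_\pol\val-\val\|_{1,\bar u}$; weak duality then gives $f_1(\pol,\val)\geq\|\Bell_\pol\val-\val\|_{1,\bar u}$. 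For the reverse inequality with $\pol$ deterministic, the support property of $U$ ($\pol(s,a)=0\Rightarrow(\pol\tr U)(s,a)=0$) forces every dual-feasible $x$ to vanish off the chosen actions, so $(\tmat\val-\trew)\tr x\leq\sum_s(\pol\tr U)(s,\pol(s))(\tmat\val-\trew)(s,\pol(s))=\sum_s\bar u(s)(\val-\Bell_{\pol(s)}\val)(s)=\|\Bell_\pol\val-\val\|_{1,\bar u}$, using that the weight on the single chosen action is exactly $\bar u(s)$; maximizing over $x$ and invoking LP duality gives $f_1(\pol,\val)\leq\|\Bell_\pol\val-\val\|_{1,\bar u}$, which together with the lower bound is the claimed equality.

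I expect the main obstacle to be the bookkeeping with the matrix $U$: one must check that the assumed relation between $\pol\tr U$ and $\bar u$, together with the support condition, simultaneously makes $x(s,a)=\bar u(s)\pol(s,a)$ dual-feasible (for the lower bound) and pins the weight on a deterministic policy's chosen action to exactly $\bar u(s)$ (for the matching upper bound) --- this is the $L_1$ counterpart of the ``pick the largest element of the objective'' step that closes \aref{lem:linprog_linfty}. A secondary point to settle before any of this is valid: the scalar $\lambda'$ written in \eqref{mpr:error_policy_average} appears to be carried over from \eqref{mpr:error_policy} and, since it does not enter the objective, would make $f_1$ collapse to $0$; the intended program is $f_1(\pol,\val)=\min_\lambda \pol\tr U\lambda$ subject to $\lambda\geq\tmat\val-\trew$, $\lambda\geq\zero$, and one should state it that way.
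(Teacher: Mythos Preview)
Your proposal is correct and follows essentially the same route as the paper: pass to the LP dual $\max_x (\tmat\val-\trew)\tr x$ subject to $x\leq U\tr\pol$, $x\geq\zero$, then exhibit $x=U\tr\pol$ (which under \aref{rem:constant_u} is exactly your $x(s,a)=\bar u(s)\pol(s,a)$) as a feasible point attaining $\|\val-\Bell_\pol\val\|_{1,\bar u}$, and argue optimality of that point in the deterministic case. Your version is more explicit about the bookkeeping---you spell out transitive feasibility, the coordinatewise identity $(\tmat\val-\trew)(s,a)=\val(s)-(\Bell_a\val)(s)$, and you use the support property of $U$ to pin down the upper bound---whereas the paper simply observes that since $\tmat\val-\trew\geq\zero$ and $U\geq\zero$ the dual optimum saturates at $x=U\tr\pol$; both arguments are valid and lead to the same conclusion. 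Your remark that the variable $\lambda'$ in \eqref{mpr:error_policy_average} is a leftover typo from \eqref{mpr:error_policy} is also correct and worth flagging.
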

\begin{proof}
The dual of the linear program \eqref{mpr:error_policy_average} program is the following.
\begin{mprog} \label{mpr:lone_bound_max}
\maximize{x} x\tr (A v - b)
\stc x \leq U\tr \pol
\cs x \geq \zero
\end{mprog}
We have that $f_1(\pol,v) \geq \| \Bell_{\pol} v - v \|_{1,\bar{u}}$ since $x = U\tr \pol$ is a feasible solution.
To show the equality for a deterministic policy $\pol$, let $x^*$ be an optimal solution of linear program \eqref{mpr:lone_bound_max}. Since $\tmat v \geq \trew$ and $U$ is non-negative, an optimal solution satisfies $x = U\tr \pol$. The optimal value of the linear program thus corresponds to the definition of the weighted $L_1$ norm.
\end{proof}
The proof of \aref{thm:abp_loss_average} is similar to the proof of \aref{thm:abp_loss_robust}, but using \aref{thm:bound_dual_policy} instead of \aref{thm:policy_loss_greedy} to bound the policy loss. The existence of a deterministic and greedy optimal solution $\tilde\pol$ follows also like \aref{thm:abp_loss_robust}, omitting $\lambda'$ and weighing $\lambda$ by $\bar{u}$.

\subsection{Hybrid Formulation}

While the robust bilinear formulation \eqref{mpr:abp_robust} guarantees to minimize the robust approximation error it may be overly pessimistic. The bilinear program \eqref{mpr:abp_lone}, on the other hand, optimizes the average performance, but does not provide strong guarantees. It is possible to combine the advantages (and disadvantaged) of these programs using a hybrid formulation. The hybrid formulation minimizes the hybrid norm of the Bellman residual, defined as:
\[ \| x \|_{k,c} = \max_{ \{ y \ss \one\tr y = k, \one \geq y\geq \zero \} } \sum_{i=1}^n y(i) c(i)|x(i)|, \]
where $n$ is the length of vector $x$ and $c\geq \zero$. It is easy to show that this norm represents the $c$-weighted $L_1$ norm of the $k$ largest components of the vector. As such, it is more robust than the plain $L_1$ norm, but is not as sensitive to outliers as the $L_\infty$ norm. Notice that the solution may be fractional when $k\notin\Int$ --- that is, some elements are counted only partially.

The bilinear program that minimizes the hybrid norm is defined as follows.
\begin{mprog} \label{mpr:abp_lonehyb} \tag{ABP--h}
\minsep{\pol}{\lambda,\lambda',\val}{\pol\tr U \lambda + k \lambda'}
\stc B \pol = \one        &   \tmat \val - \trew \geq \zero
\cs \pol \geq \zero       &   \lambda + \lambda' \inv{U} \one \geq \tmat \val - \trew
\cs                       &   \lambda, \lambda' \geq \zero
\cs                       &   \val \in \rep
\end{mprog}
Here $U$ is a matrix that maps a policy to bounds on state-action visitation frequencies, for example, as defined in \aref{rem:constant_u}.

\begin{thm}\label{thm:abp_loss_hybrid}
Given \aref{asm:contains_one} and  $U$ that is defined as in \aref{rem:constant_u}, any optimal solution $(\tilde{\pol}, \tilde\val, \tilde{\lambda}, \tilde{\lambda'})$ of \eqref{mpr:abp_lonehyb} then satisfies:
\begin{align*}
\tilde{\pol}\tr U \tilde{\lambda} + k \tilde{\lambda'} = \| \Bell \tilde\val - \tilde\val \|_{k,\bar{u}(\tilde\pol)} &\leq \min_{\val\in\rep\cap\tf} \| \Bell \tilde\val - \tilde\val \|_{k,\bar{u}(\val)}.
\end{align*}
Here, $\bar{u}(\val)$ represents the upper bound on the state-action visitation frequencies for policy greedy with respect to value function $\val$.
\end{thm}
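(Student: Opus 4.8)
I would follow the template of the proofs of \aref{thm:abp_loss_robust} and \aref{thm:abp_loss_average}, peeling off the minimization over $\lambda,\lambda'$ first, then over $\pol$, then over $\val$. \emph{Step 1 (inner LP over $\lambda,\lambda'$).} Fix $\pol$ and a value function $\val$ feasible in \eqref{mpr:abp_lonehyb}; the constraint $\tmat\val-\trew\geq\zero$ is exactly $\val\in\tf$. Let $f_1(\pol,\val)$ be the value of the linear program $\min_{\lambda,\lambda'\geq\zero}\{\pol\tr U\lambda + k\lambda' : \lambda + \lambda'\inv{U}\one \geq \tmat\val - \trew\}$. Its dual is $\max_{x\geq\zero}\{\,x\tr(\tmat\val-\trew) : x\leq U\tr\pol,\ \one\tr\inv{U}x \leq k\,\}$. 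Write $c(s,a):=(\tmat\val-\trew)(s,a)=\val(s)-(\Bell_a\val)(s)\geq0$, so $(\val-\Bell_\pol\val)(s)=\sum_a\pol(s,a)c(s,a)=|(\Bell_\pol\val-\val)(s)|$. With $U$ as in \aref{rem:constant_u}, the substitution $y=\inv{U}x$ turns the dual feasible set into $\{0\leq y(s,a)\leq\pol(s,a),\ \one\tr y\leq k\}$ and the objective into $\sum_{s,a}\bar u(s)\pol(s,a)c(s,a)\,y(s,a)$. Taking $x(s,a)=\bar u(s)\pol(s,a)\,y^*(s)$, where $y^*$ attains the hybrid norm $\|\Bell_\pol\val-\val\|_{k,\bar u}$ of the length-$|\states|$ residual, is dual-feasible with value $\|\Bell_\pol\val-\val\|_{k,\bar u}$, so $f_1(\pol,\val)\geq\|\Bell_\pol\val-\val\|_{k,\bar u}$; for a \emph{deterministic} $\pol$ the bound $x\leq U\tr\pol$ forces $x$ to be supported on $\{(s,\pol(s))\}$ and the dual collapses to the defining maximization of $\|\cdot\|_{k,\bar u}$, giving equality. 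This is the hybrid-norm counterpart of \aref{lem:linprog_lone}.

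\emph{Steps 2 and 3 (over $\pol$, then over $\val$).} Since $\Bell_\pol\val\leq\Bell\val\leq\val$ componentwise for $\val\in\tf$, the residual $\val-\Bell_\pol\val$ dominates $\val-\Bell\val\geq\zero$ componentwise; because $\|\cdot\|_{k,\bar u}$ is a maximum of sums of nonnegative terms it is monotone in the absolute values of the entries, so $\|\Bell_\pol\val-\val\|_{k,\bar u}\geq\|\Bell\val-\val\|_{k,\bar u}$ with equality for any deterministic greedy $\pol$ (for which $\Bell_\pol\val=\Bell\val$). Combined with Step 1, $\min_{B\pol=\one,\pol\geq\zero}f_1(\pol,\val)=\|\Bell\val-\val\|_{k,\bar u}$, attained by a deterministic greedy $\tilde\pol$ — the analogue of \aref{lem:policy_error}. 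The constraint $\val\in\rep$ together with $\tmat\val-\trew\geq\zero$ restricts to $\val\in\rep\cap\tf$, nonempty by \aref{asm:contains_one}, so the optimum of \eqref{mpr:abp_lonehyb} is $\min_{\val\in\rep\cap\tf}\|\Bell\val-\val\|_{k,\bar u}$. For any optimal solution $(\tilde\pol,\tilde\val,\tilde\lambda,\tilde\lambda')$, its objective equals this optimum; it is also $\geq$ the value of the re-minimization over $\pol,\lambda,\lambda'$ with $\tilde\val$ held fixed, which by Steps 1--2 equals $\|\Bell\tilde\val-\tilde\val\|_{k,\bar u}\geq\min_{\val\in\rep\cap\tf}\|\Bell\val-\val\|_{k,\bar u}$; squeezing yields $\tilde\pol\tr U\tilde\lambda+k\tilde\lambda'=\|\Bell\tilde\val-\tilde\val\|_{k,\bar u}=\min_{\val\in\rep\cap\tf}\|\Bell\val-\val\|_{k,\bar u}$, and $\tilde\pol$ may be taken deterministic and greedy with respect to $\tilde\val$ exactly as in \aref{thm:abp_loss_robust} (dropping $\lambda'$ and reweighting $\lambda$ by $\bar u$, as noted after \aref{thm:abp_loss_average}).

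\emph{Main obstacle.} The delicate point is Step 1: pinning down the LP dual and, in particular, seeing that the $\lambda'$-column contributes the single constraint $\one\tr\inv{U}x\leq k$, which after the rescaling $y=\inv{U}x$ becomes precisely the truncated-simplex set $\{0\leq y\leq\one,\ \one\tr y\leq k\}$ from the definition of $\|\cdot\|_{k,\bar u}$. Transitive feasibility is used twice — to drop the absolute values in the dual objective and to obtain $\val-\Bell_\pol\val\geq\val-\Bell\val\geq\zero$, which drives the monotonicity argument of Step 2. Unlike \aref{thm:abp_loss_robust}, no factor $2$ arises, since the theorem compares only against $\min_{\val\in\rep\cap\tf}$ and not against all of $\rep$: the one-sided hybrid norm cannot be halved by balancing upper against lower Bellman-residual bounds, just as for \aref{thm:abp_loss_average}.
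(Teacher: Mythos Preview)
Your proposal is correct and follows essentially the same route as the paper: take the dual of the inner LP over $(\lambda,\lambda')$, rescale by $U^{-1}$ to expose the truncated-simplex constraint $\{0\leq y\leq\pol,\ \one\tr y\leq k\}$, and identify the hybrid-norm maximization (this is exactly the paper's \aref{lem:linprog_lonehyb}), then handle the outer minimizations over $\pol$ and $\val$ as in \aref{thm:abp_loss_robust}/\aref{lem:policy_error}. One small slip: after the substitution $y=U^{-1}x$ the dual objective is $\sum_{s,a}\bar u(s)\,c(s,a)\,y(s,a)$, with no extra factor $\pol(s,a)$ --- but this does not affect your argument, since your feasibility and value check with $x(s,a)=\bar u(s)\pol(s,a)\,y^*(s)$ is carried out directly in the $x$-variables and is correct.
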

The implication of these bounds on the policy loss is beyond the scope of this paper, but it is likely that some form of policy loss bounds can be developed.

The proof of the theorem is almost identical to the proof of \aref{thm:abp_loss_robust} lemma. We first define the following linear program, which solves for the required norm of the Bellman update $\Bell_\pol$ for value function $\val$ and policy $\pol$.
\begin{mprog} \label{mpr:error_policy_lonehyb}
f_1(\pol,v) = \minimize{\lambda, \lambda'} \pol\tr U \lambda  + k \lambda'
\stc  \lambda' \inv{U} \one + \lambda \geq \tmat \val - \trew
\cs \lambda,\lambda' \geq \zero
\end{mprog}
The linear program \eqref{mpr:error_policy_lonehyb} corresponds to the bilinear program \eqref{mpr:abp_lonehyb} with a fixed policy $\pol$ and value function $\val$.

\begin{lem} \label{lem:linprog_lonehyb}
Let $\val \in \tf$ be a transitive-feasible value function and let $\pol$ be a policy and $U$ be defined as in \aref{rem:constant_u}. Then:
\[ f_1(\pol, v) \geq \| \val - \Bell_\pol \val  \|_{k,\bar{u}}, \]
with an equality for a \emph{deterministic} policy $\pol$.
\end{lem}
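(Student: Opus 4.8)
The plan is to mirror the proofs of \aref{lem:linprog_linfty} and \aref{lem:linprog_lone}: pass to the linear-programming dual of \eqref{mpr:error_policy_lonehyb}, identify the resulting maximization with the optimization that defines the hybrid norm $\|\cdot\|_{k,\bar{u}}$, and then argue the two directions separately — the general lower bound by exhibiting one feasible dual point, and the deterministic equality by strong duality.

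First I would record the two structural facts the argument rests on. Since $\val\in\tf$, transitive-feasibility says precisely that $\tmat\val\geq\trew$, so the Bellman residual $g := \tmat\val-\trew$ is componentwise nonnegative; moreover $\val-\Bell_\pol\val = \sum_{a\in\actions}\pol(\cdot,a)\,g(\cdot,a)\geq\zero$ for every policy $\pol$, so the absolute values in $\|\val-\Bell_\pol\val\|_{k,\bar{u}}$ may be dropped. With $U$ as in \aref{rem:constant_u}, $U$ is the diagonal matrix with $(s,a)$-entry $\bar{u}(s)$, so $\inv{U}$ and $U\tr=U$ are well defined. Dualizing \eqref{mpr:error_policy_lonehyb}, with one multiplier $x\geq\zero$ for the inequality, gives
\begin{mprog}
\maximize{x} x\tr (\tmat \val - \trew)
\stc x \leq U\tr \pol
\cs (\inv{U} \one)\tr x \leq k
\cs x \geq \zero
\end{mprog}
and the change of variables $y(s,a) = x(s,a)/\bar{u}(s)$ rewrites this as $\max\{\, \sum_{s,a}\bar{u}(s)\,y(s,a)\,g(s,a) \ss \zero\leq y\leq\pol,\ \one\tr y\leq k \,\}$.

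For the inequality $f_1(\pol,\val)\geq\|\val-\Bell_\pol\val\|_{k,\bar{u}}$, let $\tilde{y}\in\Real^{|\states|}$ attain the maximum defining $\|\val-\Bell_\pol\val\|_{k,\bar{u}}$ (so $\one\tr\tilde{y}=k$ and $\zero\leq\tilde{y}\leq\one$) and set $x(s,a)=\bar{u}(s)\,\pol(s,a)\,\tilde{y}(s)$. This $x$ is dual-feasible — $x\leq U\tr\pol$ because $\tilde{y}\leq\one$, and $(\inv{U}\one)\tr x = \sum_s\tilde{y}(s)\sum_a\pol(s,a)=\one\tr\tilde{y}=k$ — and its objective value equals $\sum_s\bar{u}(s)\,\tilde{y}(s)\sum_a\pol(s,a)g(s,a) = \|\val-\Bell_\pol\val\|_{k,\bar{u}}$, so weak duality gives the bound. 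For equality when $\pol$ is deterministic, I would invoke strong duality (the primal is feasible and bounded below by $0$) and check that the dual optimum cannot exceed the norm: $y\leq\pol$ forces $y$ to be supported on the graph of $\pol$, so the dual collapses to $\max\{\, \sum_s\bar{u}(s)\,\tilde{y}(s)\,g(s,\pol(s)) \ss \zero\leq\tilde{y}\leq\one,\ \one\tr\tilde{y}\leq k \,\}$ over $\tilde{y}\in\Real^{|\states|}$; the summands are nonnegative and $k\leq|\states|$, so the optimum is attained with $\one\tr\tilde{y}=k$, which by definition is $\|\val-\Bell_\pol\val\|_{k,\bar{u}}$ since $g(s,\pol(s))=(\val-\Bell_\pol\val)(s)$ for deterministic $\pol$.

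The main obstacle is the bookkeeping around $U$: taking the dual of \eqref{mpr:error_policy_lonehyb} correctly given the scalar variable $\lambda'$ and the term $\lambda'\inv{U}\one$, and then justifying the substitution $x\mapsto y$ that turns the dual feasible region into the $\bar{u}$-weighted ``$k$-largest-components'' polytope appearing in the definition of $\|\cdot\|_{k,\bar{u}}$. Once that correspondence is established, the remainder is the same ``a feasible point bounds the LP value'' / strong-duality argument already used in \aref{lem:linprog_linfty} and \aref{lem:linprog_lone}.
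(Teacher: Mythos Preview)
Your proposal is correct and follows essentially the same route as the paper: dualize \eqref{mpr:error_policy_lonehyb}, change variables by dividing out $\bar{u}(s)$ (the paper writes this as $x=U\tr z$), and then match the resulting LP against the $k$-largest-components program that defines $\|\cdot\|_{k,\bar{u}}$. Your construction $x(s,a)=\bar{u}(s)\,\pol(s,a)\,\tilde{y}(s)$ for the inequality is exactly the paper's $z=X\tr y$ written componentwise, and your deterministic-case argument (the dual collapses onto the graph of $\pol$ and the $\one\tr\tilde{y}\leq k$ constraint is tight by nonnegativity) is the same observation the paper phrases as ``the two linear programs are identical when $\pol\in\{0,1\}$.''
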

\begin{proof}
The dual of the linear program \eqref{mpr:error_policy} program is the following.
\begin{mprog} \label{mpr:hybrid_bound_max}
\maximize{x} x\tr (\tmat \val - \trew)
\stc x \leq U\tr \pol
\cs \one\tr \invm{U\tr} x \leq k
\cs x \geq \zero
\end{mprog}
First, change the variables in the linear program to $x = U\tr z$ to get:
\begin{mprog} \label{mpr:hybrid_bound_max_cv}
\maximize{z} z\tr U (\tmat \val - \trew)
\stc z \leq \pol
\cs \one\tr z \leq k
\cs z \geq \zero
\end{mprog}
using the fact that $U$ is diagonal and positive.

The norm$\| \Bell_{\pol} v - v \|_{k,c}$ can be expressed as the following linear program:
\begin{mprog} \label{mpr:norm_def}
\maximize{y} y\tr X U (\tmat \val - \trew)
\stc y \leq \one
\cs \one\tr y \leq k
\cs y \geq \zero
\end{mprog}
Here, the matrix $X:|\states|\times |\states|\cdot|\actions|$ selects the subsets of the Bellman residuals that correspond the the policy as defined:
\[ X(s,(s',a')) = \begin{cases} \pol(s',a') & \text{when } s = s' \\ 0 & \text{otherwise} \end{cases}. \]
It is easy to shows that $v - \Bell_{\pol} v = X(\tmat \val - \trew)$. Note that $XU = UX$ from the definition of $U$.

Clearly, when $\pol \in \{0,1\}$ is deterministic the linear programs \eqref{mpr:hybrid_bound_max_cv} and \eqref{mpr:norm_def} are identical. When the policy $\pol$ is stochastic, assume an optimal solution $y$ of \eqref{mpr:norm_def} and let $z = X\tr y$. Then, $z$ is feasible in \eqref{mpr:hybrid_bound_max_cv} with the identical objective value, which shows the inequality.
\end{proof}

\section{Solving Bilinear Programs} \label{sec:solutions}

This section describes methods for solving approximate bilinear programs. Bilinear programs can be easily mapped to other global optimization problems, such as mixed integer linear programs~\citep{Horst1996}.
 We focus on a simple iterative algorithm for solving bilinear programs approximately, which also serves as a basis for many optimal algorithms.

Solving a bilinear program is an NP-complete problem~\citep{Bennett1992}. The membership in NP follows from the finite number of basic feasible solutions of the individual linear programs, each of which can be checked in polynomial time. The NP-hardness is shown by a reduction from the SAT problem.

There are two main approaches to solving bilinear programs optimally. In the first approach, a relaxation of the bilinear program is solved. The solution of the relaxed problem represents a lower bound on the optimal solution. The relaxation is then iteratively refined, for example by adding cutting plane constraints, until the solution becomes feasible. This is a common method used to solve integer linear programs. The relaxation of the bilinear program is typically either a linear or semi-definite program~\citep{Carpara2009}.

In the second approach, feasible, but suboptimal, solutions of the bilinear program are calculated approximately. The approximate algorithms are usually some variation of \aref{alg:simple}. The bilinear program formulation is then refined --- using concavity cuts~\citep{Horst1996} --- to eliminate previously computed feasible solutions and solved again. This procedure can be shown to find the optimal solution by eliminating all suboptimal feasible solutions.

The most common and simplest approximate algorithm for solving bilinear programs is \aref{alg:simple}. This algorithm is shown for the general bilinear program \eqref{mpr:bilinear}, where $f(w,x,y,z)$ represents the objective function. The minimizations in the algorithm are linear programs which can be easily solved. Interestingly, as we will show in \aref{sec:discussion}, \aref{alg:simple} applied to ABP generalizes a version of API.

\begin{algorithm}[t]
    $(x_0,w_0) \leftarrow $ random \;
    $(y_0,z_0) \leftarrow \arg\min_{y,z} f(w_0,x_0,y,z)$ \;
    $i \leftarrow 1$ \;
    \While{$y_{i-1} \neq y_i$ or $x_{i-1} \neq x_i$}{
        $(y_i,z_i) \leftarrow \arg\min_{\{y,z \ss A_2 y + B_2 z = b_2 \; y,z \geq \zero \}} f(w_{i-1},x_{i-1},y,z)$ \;
        $(x_i,w_i) \leftarrow \arg\min_{\{x,w \ss A_1 x + B_1 w = b_1 \; x,w \geq \zero \}} f(w,x,y_i,z_i)$ \;
        $i \leftarrow i + 1$
    }
    \Return $f(w_i,x_i,y_i,z_i)$
\caption{Iterative algorithm for solving \eqref{mpr:bilinear}}\label{alg:simple}
\end{algorithm}

While \aref{alg:simple} is not guaranteed to find an optimal solution, its empirical performance is often remarkably good~\citep{Mangasarian1995}. Its basic properties are summarized by the following proposition.
\begin{prop}[e.g. \citep{Bennett1992}] \label{prop:convergence}
\aref{alg:simple} is guaranteed to converge, assuming that the linear program solutions are in a vertex of the optimality simplex. In addition, the global optimum is a fixed point of the algorithm, and the objective value monotonically improves during execution.
\end{prop}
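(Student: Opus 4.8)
The statement to prove is Proposition~\ref{prop:convergence}, concerning Algorithm~\ref{alg:simple}: convergence, the global optimum being a fixed point, and monotone improvement of the objective.

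\medskip

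\noindent\textbf{Proof proposal.}
The plan is to exploit the separable structure of \eqref{mpr:bilinear}: once one block of variables is fixed, the problem in the other block is an ordinary linear program. First I would establish \emph{monotone improvement}. At iteration $i$, the step computing $(y_i,z_i)$ minimizes $f(w_{i-1},x_{i-1},\cdot,\cdot)$ over the (nonempty, as we may assume the program is feasible) polyhedron $\{A_2 y + B_2 z = b_2,\ y,z\ge\zero\}$; since $(y_{i-1},z_{i-1})$ is feasible there, $f(w_{i-1},x_{i-1},y_i,z_i)\le f(w_{i-1},x_{i-1},y_{i-1},z_{i-1})$. Symmetrically, the step computing $(x_i,w_i)$ gives $f(w_i,x_i,y_i,z_i)\le f(w_{i-1},x_{i-1},y_i,z_i)$. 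Chaining these yields a nonincreasing sequence of objective values, which proves the monotonicity claim. I would also note the objective is bounded below on the feasible region whenever the program has a finite optimum, so the value sequence converges.

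\medskip

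Next, \emph{finite convergence}. The key observation is that each linear-program subproblem, by the hypothesis that solutions are taken at a vertex of the optimality face, returns a basic feasible solution; there are only finitely many basic feasible solutions of $\{A_1 x + B_1 w = b_1,\ x,w\ge\zero\}$ and of $\{A_2 y + B_2 z = b_2,\ y,z\ge\zero\}$, hence only finitely many possible pairs $((x_i,w_i),(y_i,z_i))$. If the algorithm does not terminate, some pair repeats; combined with the fact that the objective value is monotone nonincreasing and that the maps are deterministic (a fixed vertex-selection rule), a repeated state forces the iterates to have been constant from that point on — contradicting non-termination, since the while-loop condition $y_{i-1}\neq y_i$ or $x_{i-1}\neq x_i$ would already have failed. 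Thus the algorithm halts after finitely many steps. The mild subtlety here is handling degeneracy/ties in the linear programs: the vertex-selection assumption in the hypothesis is exactly what rules out cycling among equal-objective vertices, so I would phrase the argument to lean on it explicitly.

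\medskip

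Finally, \emph{the global optimum is a fixed point}. Let $(w^*,x^*,y^*,z^*)$ be a global minimizer. Then $(y^*,z^*)$ already minimizes $f(w^*,x^*,\cdot,\cdot)$ over its feasible polyhedron — otherwise a strictly better choice of $(y,z)$ would, together with $(w^*,x^*)$, give a feasible point of the bilinear program with smaller objective, contradicting global optimality. Symmetrically $(w^*,x^*)$ minimizes $f(\cdot,\cdot,y^*,z^*)$. Hence, starting the loop body at this point reproduces it (assuming the vertex-selection rule returns an optimal basic solution consistent with the optimum), so the while-condition fails immediately and the point is a fixed point. I expect the main obstacle to be the convergence argument rather than the other two parts: the monotonicity and fixed-point claims are essentially immediate from the definition of the alternating minimization, whereas pinning down finite termination requires the finiteness-of-vertices observation together with care about how the deterministic vertex rule interacts with ties — precisely the content of the stated assumption — so the proof should state clearly that this assumption is what prevents the iterates from oscillating between distinct vertices achieving the same objective value.
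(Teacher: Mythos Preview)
Your proposal is correct and follows the same approach as the paper: the paper's own justification is a two-sentence sketch invoking the finite count of basic feasible solutions of the individual linear programs together with the nonincreasing objective, which is exactly the core of your argument. You have simply filled in the details the paper omits, including the explicit verification that the global optimum is a fixed point.
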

The proof is based on the finite count of the basic feasible solutions of the individual linear programs. Because the objective function does not increase in any iteration, the algorithm will eventually converge.

\aref{alg:simple} can be further refined in case of approximate bilinear programs. For example, the constraint $\val\in\rep$ in the bilinear programs serves just to simplify the bilinear program and a value function that violates it may still be acceptable. The following proposition motivates the construction of a new value function from two transitive-feasible value functions.
\defprop{\propminimum}{prop:minimum}{
Let $\tilde\val_1$ and $\tilde\val_2$ be feasible value functions in \eqref{mpr:abp_robust}. Then the value function
\[\tilde\val(s) = \min \{ \tilde\val_1(s), \tilde\val_2(s) \}\]
is also feasible in bilinear program \eqref{mpr:abp_robust}.  Therefore $\tilde\val \geq v^*$ and
\[\| v^* - \tilde\val \|_\infty \leq \min \left\{ \| v^* - \tilde\val_1 \|_\infty, \| v^* - \tilde\val_2 \|_\infty  \right\}.\] }
The proof of the proposition is based on Jensen's inequality and is provided in the appendix. Note that $\tilde\val$ may have a greater Bellman residual than either $\tilde\val_1$ or $\tilde\val_2$.

\aref{prop:minimum} can be used to extend \aref{alg:simple} when solving ABPs. One option is to take the state-wise minimum of values from multiple random executions of \aref{alg:simple}, which preserves the transitive feasibility of the value function. However, the increasing number of value functions used to obtain $\tilde\val$ also increases the potential sampling error.

\section{Sampling Guarantees} \label{sec:sampling}

In most practical problems, the number of states is too large to be explicitly enumerated. Even though the value function is restricted to be representable, the problem cannot be solved. The usual approach is to sample a limited number of states, actions, and their transitions to approximately calculate the value function. This section shows basic properties of the samples that can provide guarantees of the solution quality with incomplete samples.

First, we show a formal definition of the samples and then show how to use them. The simplest samples are defined as follows.
\begin{defn}
\emph{One-step simple samples} are defined as:
\[\tilde{\samples} \subseteq \{ (s,a,(s_1 \ldots s_n),\rew(s,a)) \ss s,s' \in \states, \; a \in \actions \},\]
where $s_1 \ldots s_n$ are selected i.i.d. from the distribution
$\tran(s,a)$.
\end{defn}
More informative samples include the full distribution instead of samples from the distribution. While these samples are often unavailable in practice, they are useful in the theoretical analysis of sampling issues.
\begin{defn}
\emph{One-step samples with expectation} are defined as
follows:
\[\bar{\samples} \subseteq \{ (s,a,\tran(s,a),\rew(s,a)) \ss s \in \states, \; a \in \actions \}.\]
\end{defn}
Membership a state in the samples is denoted simply as $s\in\samples$ or $(s,a) \in \samples$ with the remaining variables, such as $\rew(s,a)$ considered to be available implicitly.

The sampling models may vary significantly in different domains. The focus of this work is on problems with either a fixed set of available samples or a domain model. Therefore, we do not analyze methods for gathering samples. We also do not assume that the samples come from previous executions, but rather from a deliberate sample-gathering process.

The samples are used to approximate the Bellman operator and the set of transitive-feasible value functions. \begin{defn} The \emph{sampled Bellman operator} and the corresponding set of sampled transitive-feasible functions are defined as:
\begin{align}
(\bar{L}(\val))(\bar{s}) &= \max_{\{ a \ss (\bar{s},a) \in \bar\samples \}} \rew(\bar{s}, a)  + \disc \sum_{s' \in \states} \tran(\bar{s},a,s')\val(s') \qquad \forall \bar{s}  \in \bar{\samples} \\
\label{eq:upper_second_samples4} \bar{\tf} &= \left\{ \val \ss (\bar{s},a,\tran(\bar{s},a),\rew(\bar{s},a)) \in \bar{\samples},\; \val(\bar{s}) \geq (\bar\Bell \val)(\bar{s}) \right\}
\end{align}
\end{defn}
The less-informative set of samples $\tilde{\samples}$ can be used as follows.
\begin{defn} The \emph{estimated Bellman operator} and the corresponding set of estimated transitive-feasible functions are defined as:
\begin{align}
(\tilde\Bell(\val))(\bar{s}) &= \max_{\{ a \ss (\bar{s},a) \in \tilde\samples \}} \rew(\bar{s}, a) + \disc \frac{1}{n} \sum_{i=1}^n \val(s_i) \qquad \forall \bar{s} \in \tilde{\samples} \\
\label{eq:upper_second_samples3} \tilde{\tf} &= \left\{ \val \ss (\bar{s},a,(s_1 \ldots s_n),\rew(\bar{s},a)) \in \tilde{\samples}, \; \val(\bar{s}) \geq (\tilde{L} \val)(\bar{s}) \right\}
\end{align}
\end{defn}
Notice that operators $\tilde\Bell$ and $\bar\Bell$ map value functions to a subset of all states --- only states that are sampled. The values for other states are assumed to be \emph{undefined}.

The samples can also be used to create an approximation of the initial distribution, or the distribution of visitation-frequencies of a given policy. The estimated initial distribution $\bar\indist$ is defined as:
\[\bar\indist(s) =
\begin{cases}
\indist(s) \qquad& {(s,\cdot,\cdot,\cdot) \in \bar\samples} \\
0 & \text{otherwise}
\end{cases}. \]

The existing sampling bounds for approximate linear programming focus on bounding the probability that a large number of constraints is violated when assuming a distribution over the constraints~\citep{Farias2004}. The difficulty with this approach is that the bounds on the number of violated constraints do not transform easily to the bounds on the quality of the value function, or the policy. In addition, the constraint distribution  is often somewhat arbitrary because it is difficult to define and sampling from the appropriate distributions.

Our approach, on the other hand, is to define properties of the sampled operators that guarantee that the sampling error bounds are small. These bounds do not rely on distributions over constraints and transform directly to bounds on the policy loss. To define bounds on the sampling behavior, we propose the following assumptions.The first assumption limits the error due to missing transitions in the sampled Bellman operator $\bar{\Bell}$.
\begin{asm}[Constraint Sampling Behavior] \label{asm:sampling_behavior}
For all representable value functions $\val \in \rep$:
\[\tf \subseteq \bar{\tf} \subseteq \tf(\epsilon_p)\]
\end{asm}
The second assumption quantifies the error on the estimation of the transitions of the estimated Bellman operator $\tilde\Bell$.
\begin{asm}[Constraint Estimation Behavior] \label{asm:estimation_behavior}
For all representable value functions $\val \in \rep$ the following holds:
\[ \bar{\tf}(-\epsilon_s) \subseteq \tilde{\tf} \subseteq \bar{\tf}(\epsilon_s).\]
\end{asm}
These assumptions are intentionally made generic so that they apply to a wide range of scenarios. Domain specific assumptions are likely to lead to much tighter bounds, but these are beyond the scope of this paper.

Although  we define the sampled Bellman operator directly, in practice only its approximate version is typically estimated. The direct definitions are defined only for the sake of theoretical analysis. The sampled matrices used in bilinear program \eqref{mpr:abp_robust} are defined as follows for all $(s_i, a_j) \in \tilde\samples$.
\begin{align*}
\tilde\tmat \repm &=
\begin{pmatrix}
- & \feats(s_i)\tr - \disc \frac{1}{m} \sum_{s' \in s_1' \ldots s_m'}\tran(s_i,a_j,s') \feats(s')\tr & -  \\
- & \vdots & -
\end{pmatrix}
&\tilde\trew &= \begin{pmatrix} \rew(s_i,a_j) \\ \vdots \end{pmatrix} \\
\tilde{B}(s', (s_i,a_j)) &= \I{ s' = s_i } \quad \forall s' \in \tilde\samples
\end{align*}
The ordering over states in the definitions above is also assumed to be consistent. The sampled version of the bilinear program \eqref{mpr:abp_robust} is then:
\begin{mprog} \label{mpr:abp_robust_s} \tag{s--ABP--$L_\infty$}
\minsep{\pol}{\lambda,\lambda',x}{\pol\tr \lambda + \lambda'}
\stc \tilde{B} \pol = \one &   \tilde\tmat\repm x - \trew \geq \zero
\cs \pol \geq \zero       &   \lambda + \lambda' \one \geq \tilde\tmat\repm x  - \tilde\trew
\cs                       &   \lambda, \lambda' \geq \zero
\end{mprog}
The size of the bilinear program \eqref{mpr:abp_robust_s} scales with the number of samples and features, not with the size of the full MDP, because the variables $\lambda$ and $\pol$ are defined only for state --- action pairs in $\tilde\samples$. That is $|\pol| = |\lambda| = |\{ (s,a) \in \samples \}|$. The number of constraints in \eqref{mpr:abp_robust_s} is approximately three times the number of variables $\lambda$. Finally, the number of variables $x$ corresponds to the number of approximation features.

\aref{thm:abp_loss_robust} shows that sampled robust ABP minimizes $\| \val - \tilde\Bell\val \|_\infty$ or $\| \val - \bar\Bell\val \|_\infty$, depending on the samples used. It is then easy to derive sampling bounds that rely on the sampling assumptions defined above.
\begin{thm} \label{cor:abp_loss_robust_sampled}
Let the optimal solutions to the sampled and precise Bellman residual minimization problems be:
\begin{align*}
v_1 &\in \min_{\val\in\rep\cap\tf} \| \val - \Bell \val \|_\infty  &
v_2 &\in \min_{\val\in\rep\cap\tf} \| \val - \bar\Bell \val \|_\infty &
v_3 &\in \min_{\val\in\rep\cap\tf} \| \val - \tilde\Bell \val \|_\infty
\end{align*}
Value functions $\val_1$, $\val_2$, $\val_3$ correspond to solutions of instances of robust approximate bilinear programs for the given samples. Also let $\hat\val_i = \val_{\pol_i}$, where $\pol_i$ is greedy with respect to $\val_i$. Then, given Assumptions \ref{asm:contains_one}, \ref{asm:sampling_behavior}, and \ref{asm:estimation_behavior}, the following holds:
\begin{align*}
\| \val^* - \hat\val_1 \|_\infty &\leq \frac{2}{1-\disc} \min_{\val\in\rep} \| \val - \Bell \val \|_\infty  \\
\| \val^* - \hat\val_2 \|_\infty &\leq \frac{2}{1-\disc} \left( \min_{\val\in\rep} \| \val - \Bell \val \|_\infty + \epsilon_p \right) \\
\| \val^* - \hat\val_3 \|_\infty &\leq \frac{2}{1-\disc} \left( \min_{\val\in\rep} \| \val - \Bell \val \|_\infty + \epsilon_p + 2 \epsilon_s \right)
\end{align*}
\end{thm}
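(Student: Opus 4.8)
The bound for $\hat\val_1$ is immediate: $\val_1$ is an optimal value function of the exact robust approximate bilinear program \eqref{mpr:abp_robust}, so the inequality for $\hat\val_1$ is exactly the policy-loss conclusion of \aref{thm:abp_loss_robust}. For $\hat\val_2$ and $\hat\val_3$ I would run the same three-step template. First, show that $\val_i$ is ``almost'' transitive-feasible with respect to the \emph{true} operator $\Bell$, with slack $\delta_2 = \epsilon_p$ and $\delta_3 = \epsilon_p + \epsilon_s$. Second, shift it to $\val_i' := \val_i + \frac{\delta_i}{1-\disc}\one$ (allowed by \aref{asm:contains_one}, and harmless because a constant shift does not change the greedy policy), so that $\val_i' \in \rep \cap \tf$; then the transitive-feasible branch of \aref{thm:policy_loss_greedy} gives $\|\val^* - \hat\val_i\|_\infty \le \frac{1}{1-\disc}\|\val_i' - \Bell\val_i'\|_\infty = \frac{1}{1-\disc}\bigl(\delta_i + \max_s (\val_i - \Bell\val_i)(s)\bigr)$, using $\val_i - \Bell\val_i \ge -\delta_i\one$. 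Third, bound $\max_s(\val_i - \Bell\val_i)(s)$ by combining that same lower bound with an \emph{upper} bound obtained from the optimality of $\val_i$ for the sampled program against the explicit comparison point $w := \bar\val + \frac{\rho^*}{1-\disc}\one$ (re-shifted as needed), where $\bar\val \in \arg\min_{\val\in\rep}\|\val - \Bell\val\|_\infty$ and $\rho^* := \min_{\val\in\rep}\|\val - \Bell\val\|_\infty$.

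\textbf{Step 1 (approximate transitive feasibility).} The sampled program forces $\val_2 \in \bar\tf$, and \aref{asm:sampling_behavior} gives $\bar\tf \subseteq \tf(\epsilon_p)$, so $\val_2 \in \tf(\epsilon_p)$. Likewise $\val_3 \in \tilde\tf$; \aref{asm:estimation_behavior} gives $\tilde\tf \subseteq \bar\tf(\epsilon_s)$, and shifting the inclusion $\bar\tf \subseteq \tf(\epsilon_p)$ by $\frac{\epsilon_s}{1-\disc}\one$ yields $\bar\tf(\epsilon_s) \subseteq \tf(\epsilon_p + \epsilon_s)$, hence $\val_3 \in \tf(\epsilon_p + \epsilon_s)$.

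\textbf{Step 3 (the residual bound, with constant bookkeeping).} The point $w$ lies in $\rep\cap\tf$ with $0 \le w - \Bell w \le 2\rho^*\one$; by $\tf \subseteq \bar\tf$ it is feasible for the $\bar\Bell$-program, and its $\bar\Bell$-residual exceeds its $\Bell$-residual by at most $\epsilon_p$, so $\|\val_2 - \bar\Bell\val_2\|_\infty \le 2\rho^* + \epsilon_p$; since $\bar\Bell\val \le \Bell\val$ on sampled states, $\max_s(\val_2 - \Bell\val_2)(s) \le 2\rho^* + \epsilon_p$, which fed into Step 2 gives $\|\val^* - \hat\val_2\|_\infty \le \frac{2}{1-\disc}(\rho^* + \epsilon_p)$. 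For $\val_3$, re-shifting $w$ to $w + \frac{\epsilon_s}{1-\disc}\one$ makes it feasible for the $\tilde\Bell$-program (using $\bar\tf(-\epsilon_s) \subseteq \tilde\tf$) at the cost of an extra $\epsilon_s$, and the $\tilde\Bell$-residual exceeds the $\bar\Bell$-residual by at most a further $\epsilon_s$, giving $\|\val_3 - \tilde\Bell\val_3\|_\infty \le 2\rho^* + \epsilon_p + 2\epsilon_s$; comparing $\Bell$, $\bar\Bell$, $\tilde\Bell$ on sampled states then yields $\max_s(\val_3 - \Bell\val_3)(s) \le 2\rho^* + \epsilon_p + 3\epsilon_s$, and with $\delta_3 = \epsilon_p + \epsilon_s$ Step 2 gives $\|\val^* - \hat\val_3\|_\infty \le \frac{2}{1-\disc}(\rho^* + \epsilon_p + 2\epsilon_s)$, as claimed.

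\textbf{Main obstacle.} The delicate point is the translation used repeatedly in Step 3: the hypotheses are stated as inclusions between $\epsilon$-transitive-feasible sets, but the residual estimates need genuine \emph{per-state, per-operator} comparisons ($\bar\Bell\val$ versus $\Bell\val$, and $\tilde\Bell\val$ versus $\bar\Bell\val$) on the sampled states, uniformly over $\val\in\rep$. I would need to argue carefully that these operator-level inequalities are legitimate consequences of \aref{asm:sampling_behavior} and \aref{asm:estimation_behavior} restricted to representable value functions, and to keep every $L_\infty$ norm over the set of sampled states throughout, since the sampled operators are undefined elsewhere and those states enter neither the program nor the greedy-policy computation. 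Everything else is the routine constant-chasing above together with the constant-shift invariance of greedy policies.
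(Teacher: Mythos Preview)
Your overall plan---bound the true-$\Bell$ Bellman residual of each $\val_i$ and then invoke a policy-loss bound---is exactly the paper's approach. However, you have misread the feasible region: as stated, \emph{all three} minimizations in the theorem range over $\rep\cap\tf$, the \emph{true} transitive-feasible set, not over $\rep\cap\bar\tf$ or $\rep\cap\tilde\tf$. Consequently $\val_2,\val_3\in\tf$ by hypothesis (the paper's proof invokes this explicitly: ``we use the fact that $\val_i\ge\Bell\val_i$''), and your Steps~1--2 (establishing approximate transitive feasibility and then shifting by $\frac{\delta_i}{1-\disc}\one$) are unnecessary detours. The paper's argument is correspondingly shorter: since $\val_2\in\tf\subseteq\bar\tf$, both residuals $\val_2-\Bell\val_2$ and $\val_2-\bar\Bell\val_2$ are nonnegative, and one simply chains
\[
\|\val_2-\Bell\val_2\|_\infty \;\le\; \|\val_2-\bar\Bell\val_2\|_\infty \;\le\; \|\val_1-\bar\Bell\val_1\|_\infty \;\le\; \|\val_1-\Bell\val_1\|_\infty+\epsilon_p,
\]
the middle step being optimality of $\val_2$ on $\rep\cap\tf\ni\val_1$; an analogous chain through $\bar\Bell$ and $\tilde\Bell$ handles $\val_3$. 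The finish is then a direct appeal to \aref{thm:abp_loss_robust} (which already supplies the factor-$2$ passage from $\rep\cap\tf$ to $\rep$), rather than your hand-assembled combination of a shift and \aref{thm:policy_loss_greedy}. Your route would still reach the stated bounds, but with noticeably more bookkeeping that the literal hypothesis makes redundant.

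Your ``main obstacle'' is well spotted and is equally present in the paper's proof: the last inequality in the displayed chain, and its $\tilde\Bell$ analogue, require per-state operator comparisons such as $\Bell\val_1-\bar\Bell\val_1\le\epsilon_p\one$, which are stronger than what the bare set-inclusion form of Assumptions~\ref{asm:sampling_behavior} and~\ref{asm:estimation_behavior} literally says. The paper glosses this by citing those assumptions together with \aref{lem:Bellman_one}; your instinct to flag it as the step needing care is correct.
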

These bounds show that it is possible to bound policy loss due to incomplete samples. As mentioned above, existing bounds on constraint violation in approximate linear programming~\citep{Farias2004} typically do not easily lead to policy loss bounds.

Sampling guarantees for other bilinear program formulations are very similar. Because they also rely on an approximation of the initial distribution and the policy loss, they require additional assumptions on uniformity of state-samples.
\begin{proof}
We show bounds on $\| \val_i - \Bell \val_i \|_\infty$; the remainder of the theorem follows directly from \aref{thm:abp_loss_robust}. The second inequality follows from \aref{asm:sampling_behavior} and \aref{lem:Bellman_one}, as follows:
\begin{align*}
\val_2 - \Bell \val_2 &\leq \val_2 - \bar\Bell \val_2  \\
&\leq \val_1 - \bar\Bell \val_1 \\
&\leq \val_1 - \Bell \val_1  + \epsilon_p \one
\end{align*}
The second inequality follows from Assumptions \ref{asm:sampling_behavior}, \ref{asm:estimation_behavior} and \aref{lem:Bellman_one}, as follows:
\begin{align*}
\val_3 - \Bell \val_3 &\leq \val_2 - \bar\Bell \val_2  + \epsilon_p \one \\
&\leq \val_2 - \tilde\Bell \val_2  + \epsilon_s \one + \epsilon_p \one \\
&\leq \val_1 - \tilde\Bell \val_1  + \epsilon_s \one + \epsilon_p \one \\
&\leq \val_1 - \Bell \val_1  + 2 \epsilon_s \one + \epsilon_p \one
\end{align*}
Here, we use the fact that $\val_i \geq \Bell \val_i$ and that $v_i$'s minimize the corresponding Bellman residuals.
\end{proof}

To summarize, this section identifies basic assumptions on the sampling behavior and shows that approximate bilinear programming scales well in the face of uncertainty caused by incomplete sampling. More detailed analysis will need to focus on identifying problem-specific assumptions and sampling modes that guarantee the basic conditions, namely satisfying \aref{asm:estimation_behavior} and \aref{asm:sampling_behavior}. Such analysis is beyond the scope of this paper.

\section{Discussion and Related ADP Methods} \label{sec:discussion}

This section describes connections between approximate bilinear programming and two closely related approximate dynamic programming methods: ALP, and \lapi, which are commonly used to solve factored MDPs~\citep{Guestrin2003}. Our analysis sheds light on some of their observed properties and leads to a new \emph{convergent} form of approximate policy iteration.

Approximate bilinear programming addresses some important issues with ALP: 1) ALP provides value function bounds with respect to $L_1$ norm, which does not guarantee small policy loss, 2) ALP's solution quality depends significantly on the heuristically-chosen objective function $\tobj$ in \eqref{mpr:alp}~\citep{Farias2002}, 3) the performance bounds involve a constant $1/(1-\disc)$ which can be very large when $\disc$ is close to 1 and 4) incomplete constraint samples in ALP easily lead to unbounded linear programs. The drawback of using approximate bilinear programming, however, is the higher computational complexity.

The first and the second issue in ALP can be addressed by choosing a problem-specific objective function $\tobj$ ~\citep{Farias2002}. Unfortunately, all existing bounds require that $\tobj$ is chosen based on the optimal ALP solution for $\tobj$. This is impossible to compute in practice. Heuristic values for $\tobj$ are used instead. Robust approximate bilinear program \eqref{mpr:abp_robust}, on the other hand, has no such parameters. On the other hand, the expected-loss bilinear program \eqref{mpr:abp_lone} can be seen as a method for simultaneously optimizing $\tobj$ and the approximate linear program.

The fourth issue in approximate linear programs arises when the constraints  need to be sampled. The ALP may become unbounded with incomplete samples because its objective value is defined using the $L_1$ norm on the value function, and the constraints are defined using the $L_\infty$ norm of the Bellman residual. In approximate bilinear programs, the Bellman residual is used in both the constraints and objective function. The objective function of ABP is then bounded below by 0 for an arbitrarily small number of samples.

The NP-completeness of ABP compares unfavorably with the polynomial complexity of ALP. However, most other approximate dynamic programming algorithms are not guaranteed to converge to a solution in finite time. As we show below, the exponential time complexity of ABP is unavoidable (unless P = NP).

The following theorem shows that the computational complexity of the ABP formulation is asymptotically the same as the complexity of tightly approximating the value function.
\defthm{\thmcomplexity}{thm:complexity}{
Assume $0 < \disc < 1$, and a given $\epsilon > 0$. Then it is NP-complete to determine:
\[\min_{v \in \tf \cap \rep} \| \Bell v - v \|_\infty < \epsilon \qquad \min_{v \in \rep} \| \Bell v - v \|_\infty < \epsilon.\]
The problem remains NP-complete when \aref{asm:contains_one} is satisfied. It is also NP-complete to determine:
\[ \min_{v \in \rep} \| \Bell v - v \|_\infty - \| \val^* - \val \|_{1,\indist} < \epsilon \qquad \min_{v \in \rep} \| \Bell v - v \|_{1,\bar{u}} - \| \val^* - \val \|_{1,\indist} < \epsilon,\]
assuming that $\bar{u}$ is defined as in \aref{rem:constant_u}.}
As the theorem states, the value function approximation does not become computationally simpler even when \aref{asm:contains_one} holds --- a universal assumption in the paper. Notice that ALP can determine whether $\min_{v \in \tf \cap \rep} \| \Bell v - v \|_\infty = 0$ in polynomial time.

The proof of \aref{thm:complexity} is based on a reduction from SAT and can be found in~\aref{sec:complexity}. The policy in the reduction determines the true literal in each clause, and the approximate value function corresponds to the truth value of the literals. The approximation basis forces literals that share the same variable to have consistent values.

Approximate bilinear programming can also improve on API with $L_\infty$ minimization (\lapi~for short), which is a leading method for solving factored MDPs~\citep{Guestrin2003}. Minimizing the $L_\infty$ approximation error is theoretically preferable, since it is compatible with the existing bounds on policy loss~\citep{Guestrin2003}. The bounds on value function approximation in API are typically~\citep{Munos2003}:
\[ \lim\sup_{k\rightarrow\infty} \| v^* - \hat{v}_k \|_{\infty} \leq \frac{2\disc}{(1-\disc)^2} \limsup_{k\rightarrow\infty} \| \tilde{v}_k - \val_k \|_\infty. \]
These bounds are looser than the bounds on solutions of ABP by at least a factor of $1/(1-\disc)$. Often the difference may be up to $1/(1-\disc)^2$ since the error $\|\tilde\val_k - \val_k \|_\infty$ may be significantly larger than $\| \tilde\val_k - \Bell \tilde\val_k\|_\infty$. Finally, the bounds cannot be easily used, because they only hold in the limit.

We propose \emph{Optimistic Approximate Policy Iteration}~(OAPI), a modification of API. OAPI is shown in \aref{alg:api}, where $\appr(\pol)$ is calculated using the following program:
\begin{mprog} \label{mpr:linfty_optimistic}
\minimize{\phi,v} \phi
\stc A v \geq b \quad (\equiv (\eye-\disc P_\pol) v \geq r_\pol \; \; \forall \pol \in \policies)
\cs -(\eye-\disc P_\pol) v + \one \phi \geq - r_\pol
\cs v \in \rep
\end{mprog}
In fact, OAPI corresponds to \aref{alg:simple} applied to ABP because the linear program \eqref{mpr:linfty_optimistic} corresponds to \eqref{mpr:abp_robust} with a fixed $\pol$ (see \eqref{mpr:abp_fixed_alpha}). Then, using \aref{prop:convergence}, we get the following corollary.
\begin{cor}
Optimistic approximate policy iteration converges in finite time. In addition, the Bellman residual of the generated value functions monotonically decreases.
\end{cor}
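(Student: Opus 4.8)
The plan is to observe that OAPI \emph{is} \aref{alg:simple} run on the robust approximate bilinear program \eqref{mpr:abp_robust}, and then to read off both claims from \aref{prop:convergence} together with the structural lemmas already proved for \eqref{mpr:abp_robust}. First I would make the correspondence precise. Split the variables of \eqref{mpr:abp_robust} into the two blocks of \eqref{mpr:bilinear}: the ``policy'' block $\pol$ (with constraints $B\pol=\one$, $\pol\ge\zero$) and the ``value'' block $(\lambda,\lambda',\val)$ (with the remaining constraints, including $\val\in\rep$). Fixing the policy block to $\pol$ and minimizing over $(\lambda,\lambda',\val)$ is exactly the linear program \eqref{mpr:abp_fixed_alpha}; its feasible $\val$ automatically lie in $\tf$ since $\tmat\val-\trew\ge\zero$ forces $\val\ge\Bell\val$, and by \aref{lem:minimal_error_value} the optimum is some $\tilde\val\in\rep\cap\tf$ with objective $\|\Bell_\pol\tilde\val-\tilde\val\|_\infty=\min_{\val\in\rep\cap\tf}\|\Bell_\pol\val-\val\|_\infty$ --- which is precisely the step $\appr(\pol)$ defined by \eqref{mpr:linfty_optimistic}. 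Conversely, fixing the value block (hence a $\tilde\val\in\rep\cap\tf$) and minimizing over $\pol$ is the linear program \eqref{mpr:error_greedy}, which by \aref{lem:policy_error} admits an optimal solution that is a \emph{deterministic} greedy policy with respect to $\tilde\val$ attaining $\|\Bell\tilde\val-\tilde\val\|_\infty$. Alternating these two steps is exactly the iteration of \aref{alg:api} with $\appr$ given by \eqref{mpr:linfty_optimistic}, i.e.\ OAPI.

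With the correspondence in hand, finite-time convergence is immediate from \aref{prop:convergence}: each of the two linear programs has finitely many basic feasible solutions, the alternating objective is non-increasing, and (under the vertex-selection hypothesis of that proposition) the algorithm reaches a fixed point after finitely many iterations.

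For the monotone decrease of the Bellman residual I would track the bilinear objective at the end of each OAPI iteration. After iteration $k$ produces $\val_k\in\rep\cap\tf$ together with its greedy policy $\pol_k$, the objective of \eqref{mpr:abp_robust} at that point equals $\|\Bell_{\pol_k}\val_k-\val_k\|_\infty=\|\Bell\val_k-\val_k\|_\infty$, the last equality by \aref{lem:policy_error} since a greedy $\pol_k$ realizes the maximum over policies. In iteration $k+1$ the value block is re-optimized with $\pol_k$ held fixed; the iteration-$k$ point is still feasible, so the objective does not increase, and by \aref{lem:minimal_error_value} the new value equals $\|\Bell_{\pol_k}\val_{k+1}-\val_{k+1}\|_\infty$. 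Then the policy block is re-optimized with $\val_{k+1}$ fixed; again the previous point is feasible, so the objective does not increase, and by \aref{lem:policy_error} it equals $\|\Bell\val_{k+1}-\val_{k+1}\|_\infty$. Chaining the two non-increasing steps yields $\|\Bell\val_{k+1}-\val_{k+1}\|_\infty\le\|\Bell\val_k-\val_k\|_\infty$, the claimed monotone decrease, which also recovers the ``objective monotonically improves'' clause of \aref{prop:convergence} specialized to this objective.

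The only real care needed --- the mild obstacle --- is the bookkeeping that every iterate $\val_k$ stays in $\rep\cap\tf$ (so that \aref{lem:minimal_error_value} and \aref{lem:policy_error} apply at every step), and the matching of the order of the two alternation steps of \aref{alg:simple} with the approximation-then-greedy order of \aref{alg:api}; both become routine once the variable blocks are identified as above and one recalls that feasibility in \eqref{mpr:abp_robust} enforces transitive feasibility of $\val$.
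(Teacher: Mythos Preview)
Your proposal is correct and follows exactly the paper's route: the corollary is stated in the paper as an immediate consequence of the observation that OAPI is \aref{alg:simple} applied to \eqref{mpr:abp_robust}, together with \aref{prop:convergence}. You have simply spelled out in detail the correspondence (via \eqref{mpr:abp_fixed_alpha}, \eqref{mpr:error_greedy}, and Lemmas~\ref{lem:minimal_error_value} and~\ref{lem:policy_error}) and the monotone-decrease chain that the paper leaves implicit.
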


OAPI differs from \lapi~in two ways: 1) OAPI constrains the Bellman residuals by 0 from below and by $\phi$ from above, and then it minimizes $\phi$. \lapi~constrains the Bellman residuals by $\phi$ from both above and below. 2) OAPI, like API, uses only the current policy for the upper bound on the Bellman residual, but uses \emph{all} the policies for the lower bound on the Bellman residual.

\lapi~cannot return an approximate value function that has a lower Bellman residual than ABP, given the optimality of ABP described in \aref{thm:abp_loss_robust}. However, even OAPI, an approximate ABP algorithm, performs comparably to \lapi, as the following theorem states.
\defthm{\thmoapi}{thm:oapi_the_same}{
Assume that \lapi~converges to a policy $\pol$ and a value function $v$ that both satisfy:
$ \phi = \| v - \Bell_\pol v \|_\infty = \| v - \Bell v \|_\infty $. Then \[\tilde\val = v + \frac{\phi}{1-\disc} \one\]
is feasible in the bilinear program \eqref{mpr:abp_robust}, and it is a fixed point of OAPI.  In addition, the greedy policies with respect to $\tilde\val$ and $v$ are identical.}
Notice that while the optimistic and standard policy iterations can converge to the same solutions, the steps in their computation may not be identical. The actual results will depend on the initialization.

To prove the theorem, we first consider \lapitwo~as a modification of \lapi. \lapitwo~is shown in \aref{alg:api}, where $\appr(\pol)$ is calculated using the following program:
\begin{mprog}\label{mpr:linfty_min_bounded}
\minimize{\phi,v} \phi
\stc (\eye-\disc \tran_a) v + \one \phi \geq \rew_a \quad \forall a\in\actions
\cs -(\eye-\disc \tran_\pol) v + \one \phi \geq -\rew_\pol
\cs \val \in \rep
\end{mprog}
The difference between linear programs \eqref{mpr:linfty_min} and \eqref{mpr:linfty_min_bounded} is that \eqref{mpr:linfty_min} involves only the current policy, while \eqref{mpr:linfty_min_bounded} bounds $(\eye-\disc \tran_a) v + \one \phi \geq \rew_a$ from below for all policies. Linear program \eqref{mpr:linfty_min_bounded} differs from linear program \eqref{mpr:linfty_optimistic} by not bounding the Bellman residual from below by $\zero$.
\begin{prop} \label{prop:greater_thesame}
\lapi~and \lapitwo~generate the same sequence of policies if the initial policies and tie-breaking is the same.
\end{prop}
\begin{proof}
The proposition follows simply by induction from \aref{lem:two_problems}. The basic step follows directly from the assumption. For the inductive step, let $\pol^1_i = \pol^2_i$, where $\pol^1$ and $\pol^2$ are the policies with \eqref{mpr:linfty_min} and \eqref{mpr:linfty_min_bounded}. Then from \aref{lem:two_problems}, we have that the corresponding value functions $v^1_i = v^2_i + c \one$. Because $\pol^1_{i+1}$ and $\pol^2_{i+1}$ are chosen greedily, we have that $\pol^1_{i+1} = \pol^2_{i+1}$.
\end{proof}

The proof of \aref{thm:oapi_the_same} follows.
\begin{proof}
The proof is based on two facts. First, $\tilde\val$ is feasible with respect to the constraints in \eqref{mpr:abp_robust}. The Bellman residual changes for all the policies identically, since a constant vector is added. Second, because $\Bell_\pol$ is greedy with respect to $\tilde\val$, we have that $\tilde\val \geq \Bell_\pol \tilde\val \geq \Bell \tilde\val$. The value function $\tilde\val$ is therefore transitive-feasible.

From \aref{prop:greater_thesame}, \lapi~can be replaced by \lapitwo, which will converge to the same policy $\pol$. \lapitwo~will converge to the value function \[\tilde\val = v + \frac{\phi}{1-\disc} \one.\]
From the constraints in \eqref{mpr:linfty_min_bounded} we have that $\tilde\val \geq \Bell_\pol \tilde\val$.
Then, since $\pol$ is the greedy policy with regard to this value function, we have that $\tilde\val \geq \Bell_\pol \tilde\val \geq \Bell \tilde\val$.
Thus $\tilde\val$ is transitive-feasible and feasible in \eqref{mpr:bilinear} according to \aref{lem:greater_bellman}. The theorem then follows from \aref{lem:two_problems} and from the fact that the greedy policy minimizes the Bellman residual, as in the proof of \aref{lem:policy_error}.
\end{proof}

To summarize, OAPI guarantees convergence, while matching the performance of \lapi. The convergence of OAPI is achieved because given a non-negative Bellman residual, the greedy policy also minimizes the Bellman residual. Because OAPI ensures that the Bellman residual is always non-negative, it can progressively reduce it. In comparison, the greedy policy in \lapi~does not minimize the Bellman residual, and therefore \lapi~does not always reduce it. \aref{thm:oapi_the_same} also explains why API provides better solutions than ALP, as observed in \citep{Guestrin2003}. From the discussion above, ALP can be seen as an $L_1$-norm approximation of a single iteration of OAPI. \lapi, on the other hand, performs many such ALP-like iterations.

\section{Experimental Results} \label{sec:experiments}

In this section, we validate the approach by applying it to simple reinforcement learning benchmark problems. The focus of the paper is on the theoretical properties and the experiments are intentionally designed to avoid interaction between the approximation in the formulation and approximate solution methods.
As \aref{thm:oapi_the_same} shows,  even OAPI, the very simple approximate algorithm for ABP, can perform as well as existing methods on factored MDPs.

ABP is an off-policy approximation method, like LSPI~\citep{Lagoudakis2003} or ALP. That means that the samples can be gathered independently of the control policy. It is necessary, though, that multiple actions are sampled for each state to enable the selection of different policies.

First, we demonstrate and analyze the properties of ABP on a simple chain problem with 200 states, in which the transitions move to the right or left (2 actions) by one step with a centered Gaussian noise of standard deviation 3. The rewards were set to $\sin(i/20)$ for the right action and $\cos(i/20)$ for the left action, where $i$ is the index of the state. This problem is small enough to calculate the optimal value function and to control the approximation features. The approximation basis in this problem is represented by piece-wise linear features, of the form $\phi(s_i)=\pos{i-c}$, for $c$ from 1 to 200. The discount factor in the experiments was $\disc = 0.95$ and the initial distribution was $\indist(130)=1$. We verified that the solutions of the bilinear programs were always close to optimal, albeit suboptimal.

We experimented with the full state-action sample and randomly chose the features. All results are averages over 50 runs with 15 features. In the results, we use ABP to denote a close-to-optimal solution of robust ABP, ABPexp for the bilinear program \eqref{mpr:abp_lone_infty}, and ABPhyb for \eqref{mpr:abp_lonehyb} with $k=5$. API denotes approximate policy iteration that minimizes the $L_2$ norm.

\aref{fig:chain_bellman_residual} shows the Bellman residual attained by the methods. It clearly shows that the robust bilinear formulation most reliably minimizes the Bellman residual. The other two bilinear formulations are not much worse. Notice also the higher standard deviation of ALP and API.
\aref{fig:chain_pl_expected_case} shows the expected policy loss, as specified in \aref{def:policy_loss}, for the calculated value functions. It confirms that the ABP formulation outperforms the robust formulation, since its explicit objective is to minimize the expected loss. Similarly, \aref{fig:chain_pl_worst_case} shows the robust policy loss.  As expected, it confirms the better performance of the robust ABP formulation in this case.

Note that API and ALP may achieve lower policy loss on this particular domain than ABP formulations, even though their Bellman residual is significantly higher. This is possible since ABP simply minimizes bounds on the policy loss. The analysis of tightness of policy loss bounds is beyond the scope of this paper.

\begin{figure}[t]
\centering
\includegraphics[width=0.6\textwidth]{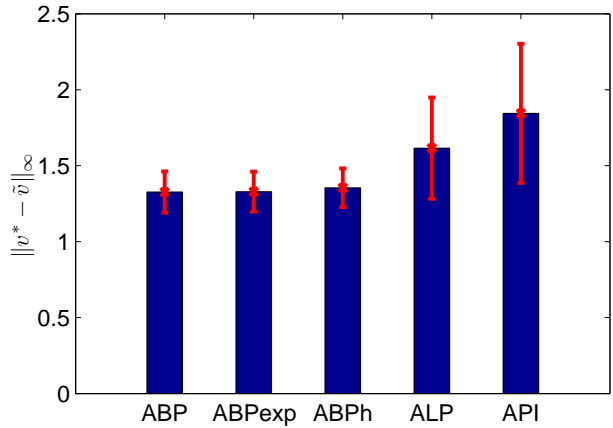}
\caption{$L_\infty$ Bellman residual for the chain problem} \label{fig:chain_bellman_residual}
\end{figure}

\begin{figure}[t]
\centering
\includegraphics[width=0.6\textwidth]{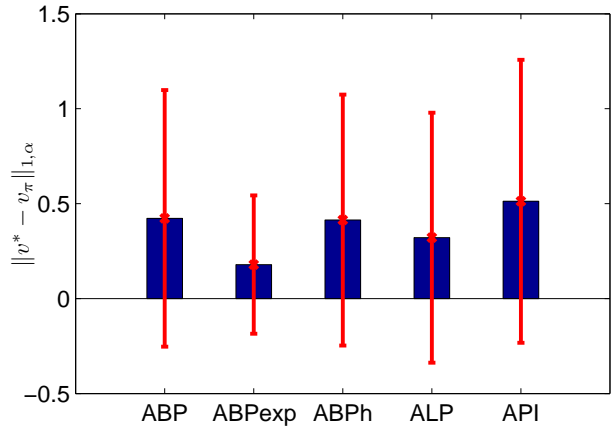}
\caption{Expected policy loss for the chain problem} \label{fig:chain_pl_expected_case}
\end{figure}

\begin{figure}
\centering
\includegraphics[width=0.6\textwidth]{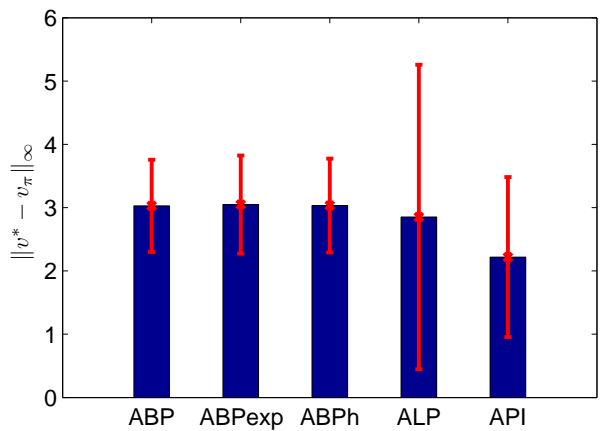}
\caption{Robust policy loss for the chain problem} \label{fig:chain_pl_worst_case}
\end{figure}

In the mountain-car benchmark, an underpowered car needs to climb a hill~\citep{Sutton1998}. To do so, it first needs to back up to an opposite hill to gain sufficient momentum. The car receives a reward of 1 when it climbs the hill. The discount factor in the experiments was $\disc = 0.99$.

\begin{table}
\centering
\subtable[$L_\infty$ error of the Bellman residual]{
\centering
\begin{tabular}{l|rrr}
Features &  \multicolumn{1}{c}{100} & \multicolumn{1}{c}{144} \\
\hline
OAPI &  0.21 (0.23) & 0.13 (0.1) \\
ALP &  13. (13.) & 3.6 (4.3) \\
LSPI &  9. (14.) &  3.9 (7.7) \\
API &  0.46 (0.08) & 0.86 (1.18)
\end{tabular}}
\quad
\subtable[$L_2$ error of the Bellman residual]{
\centering
\begin{tabular}{l|rrr}
Features & \multicolumn{1}{c}{100} & \multicolumn{1}{c}{144} \\
\hline
OAPI &  0.2 (0.3) & 0.1   (1.9) \\
ALP &  9.5 (18.) & 0.3  (0.4) \\
LSPI &  1.2 (1.5) & 0.9  (0.1)  \\
API &  0.04 (0.01) & 0.08 (0.08)
\end{tabular}}
\caption{Bellman residual of the final value function. The values are averages over 5 executions, with the standard deviations shown in parentheses.} \label{tbl:results}
\end{table}

The experiments are designed to determine whether OAPI reliably minimizes the Bellman residual in comparison with API and ALP. We use a uniformly-spaced linear spline to approximate the value function. The constraints were based on 200 uniformly sampled states with all 3 actions per state. We evaluated the methods with the number of the approximation features 100 and 144, which corresponds to the number of linear segments.

The results of robust ABP (in particular OAPI), ALP, API with $L_2$ minimization, and LSPI  are depicted in \aref{tbl:results}. The results are shown for both $L_\infty$ norm and uniformly-weighted $L_2$ norm. The run-times of all these methods are comparable, with ALP being the fastest. Since API (LSPI) is not guaranteed to converge, we ran it for at most 20 iterations, which was an upper bound on the number of iterations of OAPI. The results demonstrate that ABP minimizes the $L_\infty$ Bellman residual much more consistently than the other methods. Note, however, that all the considered algorithms would perform significantly better given a finer approximation.

\section{Conclusion and Future Work} \label{sec:conclusion}

We proposed and analyzed approximate bilinear programming, a new value-function approximation method, which provably minimizes bounds on policy loss. ABP returns the \emph{optimal} approximate value function with respect to the Bellman residual bounds, despite being formulated with regard to transitive-feasible value functions. We also showed that there is no asymptotically simpler formulation, since finding the closest value function and solving a bilinear program are both NP-complete problems. Finally, the formulation leads to the development of OAPI, a new convergent form of API which monotonically improves the objective value function.

While we only discussed approximate solutions of the ABP, a deeper study of bilinear solvers may render optimal solution methods feasible. ABPs have a small number of essential variables (that determine the value function) and a large number of constraints, which can be leveraged by the solvers~\citep{Petrik2007b}. The $L_\infty$ error bound provides good theoretical guarantees, but it may be too conservative in practice. A similar formulation based on $L_2$ norm minimization may be more practical.

We believe that the proposed formulation will help to deepen the understanding of value function approximation and the characteristics of existing solution methods, and potentially lead to the development of more robust and widely-applicable reinforcement learning algorithms.


\newpage

\bibliography{../MainReferences}

\newpage
\appendix

\section{Proofs}

\subsection{Properties of Transitive-Feasible Value Functions}

Basic properties of the Bellman operator, which we often use without a reference are the following.
\begin{lem} \label{lem:Bellman_one}
Let $v$ be any value function and let $c$ be a scalar. Then:
\[ \Bell (v + c \one) = \Bell v + \disc c \one. \]
\end{lem}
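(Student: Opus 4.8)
The plan is to reduce the statement to the policy-specific Bellman update $\Bell_\pol$ and then commute the maximisation with the shift. First I would note that, by the definition $\Bell_\pol v = P_\pol v + r_\pol$ in which $P_\pol$ enters scaled by the discount $\disc$ (as in the ALP constraints and \aref{alg:api}), and since $P_\pol$ is row-stochastic so that $P_\pol \one = \one$, we have for any scalar $c$
\[ \Bell_\pol (v + c\one) = r_\pol + \disc P_\pol (v + c\one) = r_\pol + \disc P_\pol v + \disc c\, P_\pol \one = \Bell_\pol v + \disc c\, \one. \]
Thus adding the constant vector $c\one$ to the argument shifts the output by $\disc c\,\one$, and crucially this shift is the \emph{same} for every policy $\pol \in \policies$.

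Next I would take the componentwise maximum over $\pol$. Because $\disc c\,\one$ is a fixed vector not depending on $\pol$, and the pointwise maximum of a family of vectors satisfies $\max_i (x_i + k) = \bigl(\max_i x_i\bigr) + k$ for a constant vector $k$, it follows that
\[ \Bell(v + c\one) = \max_{\pol\in\policies} \Bell_\pol (v + c\one) = \max_{\pol\in\policies}\bigl(\Bell_\pol v + \disc c\,\one\bigr) = \Bigl(\max_{\pol\in\policies}\Bell_\pol v\Bigr) + \disc c\,\one = \Bell v + \disc c\,\one, \]
which is exactly the claimed identity.

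There is essentially no obstacle in this argument; the only point worth stating carefully is that the additive term is policy-independent, so the policy attaining the maximum for $v + c\one$ is the same as the one attaining it for $v$, and the constant can be pulled outside the $\max$. I would keep the proof to the two displays above.
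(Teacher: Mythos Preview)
Your argument is correct: row-stochasticity of $P_\pol$ gives $\Bell_\pol(v+c\one)=\Bell_\pol v+\disc c\,\one$ for every $\pol$, and since the shift is policy-independent it passes through the componentwise maximum to yield the claim. The paper itself does not supply a proof of this lemma; it is listed among ``basic properties of the Bellman operator, which we often use without a reference,'' so your two-line derivation simply fills in what the paper leaves implicit.
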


\begin{lem}[Monotonicity] \label{lem:Bellman_monotonous}
Let $P$ be a stochastic matrix.  Then both the linear operators $P$ and $(I - \disc P)^{-1}$ are monotonous:
\begin{align*}
x \geq y &\Rightarrow P x \geq P y\\
x \geq y &\Rightarrow (I- \disc P)^{-1} x \geq (I- \disc P)^{-1} y
\end{align*}
for all $x$ and $y$.
\end{lem}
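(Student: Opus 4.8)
The plan is to prove both statements at once by reducing monotonicity to a nonnegativity-of-entries fact: if $M$ is a matrix with all entries nonnegative and $x \geq y$ (componentwise), then $M(x-y) \geq \zero$, hence $Mx \geq My$. So it suffices to check that $P$ and $(\eye - \disc P)^{-1}$ each have nonnegative entries.

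For $P$ this is immediate: a stochastic matrix has nonnegative entries by definition, so $x \geq y$ gives $P(x-y) \geq \zero$, i.e. $Px \geq Py$.

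For $(\eye - \disc P)^{-1}$ I would use the Neumann series. Since $P$ is row-stochastic we have $\|P\|_\infty = 1$, so the spectral radius of $\disc P$ is at most $\disc < 1$, and therefore $(\eye - \disc P)^{-1} = \sum_{k=0}^{\infty} \disc^k P^k$ with the series converging. Each term $\disc^k P^k$ is a nonnegative scalar times a product of matrices with nonnegative entries, hence has nonnegative entries; a convergent sum of such matrices again has nonnegative entries. Thus $(\eye - \disc P)^{-1}$ has nonnegative entries, and monotonicity follows exactly as above.

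The only delicate point is the convergence (and the identification of the limit with the inverse), which rests on $0 \leq \disc < 1$ together with $P$ being row-stochastic; everything else is routine. If one prefers to avoid series manipulations, I would instead give the direct minimum-component argument: setting $z = (\eye - \disc P)^{-1}(x-y)$ one has $(\eye - \disc P) z = x - y \geq \zero$, i.e. $z \geq \disc P z$; if $z$ had a most-negative entry $z_i < 0$, then $z_i \geq \disc (Pz)_i \geq \disc z_i$ (using $z_j \geq z_i$ for all $j$ and nonnegativity of the row of $P$), which forces $(1-\disc) z_i \geq 0$, contradicting $z_i < 0$. Hence $z \geq \zero$, i.e. $(\eye - \disc P)^{-1} x \geq (\eye - \disc P)^{-1} y$. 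I expect to present the Neumann-series version as the main proof and note the direct argument as an alternative.
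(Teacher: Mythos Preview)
Your argument is correct: reducing monotonicity to entrywise nonnegativity of the matrix and then verifying nonnegativity of $(\eye-\disc P)^{-1}$ via the Neumann series (or the minimum-entry contradiction) is the standard route, and both versions you sketch go through without issue. The paper itself states this lemma as a basic background fact and provides no proof at all, so there is nothing to compare against; your write-up would simply fill in a detail the authors left to the reader.
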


\lemgreatertransitivefeasible
\begin{proof}
Let $P^*$ and $r^*$ be the transition matrix and the reward vector of the policy. Then, we have using \aref{lem:Bellman_monotonous}:
\begin{align*}
v &\geq L v - \epsilon  \one \\
v &\geq \gamma P^* v + r^* - \epsilon  \one \\
(\eye - \gamma P^*) v &\geq r^* - \epsilon  \one \\
v &\geq \invm{\eye - \gamma P^*} r^* - \frac{\epsilon}{1-\gamma}
\end{align*}
\end{proof}

\begin{lem} \label{lem:greater_bellman}
A value function $v$ satisfies $\tmat \val \geq \trew$ if an only if $v \geq \Bell v$. In addition, if $v$ is feasible in \eqref{mpr:abp_robust}, then $v \geq v^*$.
\end{lem}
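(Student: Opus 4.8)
The plan is to unwind the definitions of the constraint matrix $\tmat$ and right-hand side $\trew$ and to exploit the state-wise structure of the Bellman operator. Recall that $\tmat$ and $\trew$ collect the ALP constraints of \eqref{mpr:alp_a}, which can be written compactly as $(\eye - \disc\tran_a)\val \geq \rew_a$ for all $a\in\actions$, i.e. $\val \geq \disc\tran_a\val + \rew_a = \Bell_a\val$ for every action $a$, where $\Bell_a$ is the one-step operator that takes action $a$ in every state. Thus $\tmat\val \geq \trew$ holds exactly when $\val \geq \Bell_a\val$ simultaneously for all $a\in\actions$.

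First I would record that the Bellman operator decomposes state-wise, $(\Bell\val)(s) = \max_{a\in\actions}(\Bell_a\val)(s)$, and that this state-wise maximum coincides with $\max_{\pol\in\policies}(\Bell_\pol\val)(s)$ because an optimal one-step response can be chosen deterministically in each state. Consequently, ``$\val \geq \Bell_a\val$ for all $a$'' is equivalent to ``$\val(s) \geq \max_{a\in\actions}(\Bell_a\val)(s) = (\Bell\val)(s)$ for all $s$'', that is, to $\val \geq \Bell\val$; this establishes the claimed equivalence. For the second claim, if $\val$ is feasible in \eqref{mpr:abp_robust} then its constraint set includes $\tmat\val - \trew \geq \zero$, hence $\val \geq \Bell\val$ by the equivalence, so $\val\in\tf$, and \aref{lem:greater_transitive_feasible} applied with $\epsilon = 0$ yields $\val \geq \val^*$.

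The only point that needs care is the passage from ``$\val \geq \Bell_a\val$ for each deterministic action $a$'' to ``$\val \geq \Bell\val$'': it rests on $\Bell$ being the state-wise maximum over actions and on the fact that enlarging the admissible policies from deterministic to stochastic leaves this maximum unchanged. Beyond that, the argument is pure substitution of definitions together with one invocation of \aref{lem:greater_transitive_feasible}, so I do not expect a genuine obstacle.
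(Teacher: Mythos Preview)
Your proposal is correct and follows essentially the same route as the paper: the equivalence $\tmat\val\geq\trew \Leftrightarrow \val\geq\Bell\val$ is obtained by unwinding the definition of the ALP constraint matrix, and the second claim is derived from transitive-feasibility (the paper cites the ALP literature for $\val\geq\val^*$, which is exactly \aref{lem:greater_transitive_feasible} with $\epsilon=0$ as you invoke it). Your write-up is in fact more explicit than the paper's terse argument; the paper's proof additionally notes that any $\val$ with $\tmat\val\geq\trew$ can be extended to a full feasible point of \eqref{mpr:abp_robust} via $\lambda=\zero$, $\lambda'=\|\pos{\tmat\val-\trew}\|_\infty$, but this is not needed for the lemma as stated.
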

\begin{proof}
The backward implication of the first part of the lemma follows directly from the definition. The forward implication follows by an existence of $\lambda=\zero$, $\lambda' = \| \pos{A v - r} \|_\infty$, which satisfy the constraints. The constraints on $\pol$ are independent and therefore can be satisfied independently. The second part of the lemma also holds in ALPs~\citep{Farias2002} and is proven identically.
\end{proof}

The minimization $\min_{v \in \rep} \| \Bell v - v \|_\infty$ for a policy $\pol$ can be represented as the following linear program.
\begin{mprog}\label{mpr:minimum_free}
\minimize{\phi,v} \phi
\stc (\eye-\disc P_\pol) v + \one \phi \geq r_\pol
\cs -(\eye-\disc P_\pol) v + \one \phi \geq -r_\pol
\cs v \in \rep
\end{mprog}
Consider also the following linear program.
\begin{mprog} \label{mpr:minimum_transitional}
\minimize{\phi,v} \phi
\stc (\eye - \disc P_\pol) v \geq r_\pol
\cs -(\eye-\disc P_\pol) v + \one \phi \geq - r_\pol
\cs v \in \rep
\end{mprog}
Next we show that the optimal solutions of \eqref{mpr:minimum_free} and \eqref{mpr:minimum_transitional} are closely related.
\begin{lem} \label{lem:two_problems}
Assume \aref{asm:contains_one} and a given policy $\pol$. Let $\phi_1, v_1$ and $\phi_2, v_2$ optimal solutions of linear programs \eqref{mpr:minimum_free} and \eqref{mpr:minimum_transitional} respectively. Define:
\begin{align*}
\bar{v}_1 &= v_1 + \frac{\phi_1}{1-\disc} &\bar{v}_2 &= v_1 - \frac{\phi_2}{2(1-\disc)} \one
\end{align*}
Then:
\begin{enumerate}
    \item $2 \phi_1 = \phi_2$
    \item $\bar{v}_1$ is an optimal solution in \eqref{mpr:minimum_transitional}.
    \item $\bar{v}_2$ is an optimal solution in \eqref{mpr:minimum_free}.
    \item Greedy policies with respect to $v_1$ and $\bar{v}_1$ are identical.
    \item Greedy policies with respect to $v_2$ and $\bar{v}_2$ are identical.
\end{enumerate}
\end{lem}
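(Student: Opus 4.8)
The plan is to read both linear programs as computing $\min\{\,\|v-\Bell_\pol v\|_\infty \ss v\in\rep\,\}$, with \eqref{mpr:minimum_transitional} carrying the extra transitive‑feasibility constraint $v\geq\Bell_\pol v$, and then to pass between them by adding a suitable constant vector to the value function. The one structural fact I would lean on is that $P_\pol$ is stochastic, so $\Bell_\pol(v+c\one)=\Bell_\pol v+\disc c\one$ (the $\Bell_\pol$‑analogue of \aref{lem:Bellman_one}); writing $g(v)=v-\Bell_\pol v$ for the Bellman residual, this gives
\[ g(v+c\one)=g(v)+(1-\disc)c\one, \]
i.e.\ adding $c\one$ to $v$ shifts every component of the residual by $(1-\disc)c$. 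By \aref{asm:contains_one} the set $\rep$ contains the whole line $\Real\one$, so it is invariant under translation along $\one$, and every shifted value function below stays in $\rep$. Note also that the constraints of \eqref{mpr:minimum_free} say exactly $-\phi\one\le g(v)\le\phi\one$, and those of \eqref{mpr:minimum_transitional} say $\zero\le g(v)\le\phi\one$, so at an optimum $\phi$ equals the relevant sup‑norm Bellman residual.

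First I would establish $\phi_2\le 2\phi_1$. At the optimum $(\phi_1,v_1)$ of \eqref{mpr:minimum_free} we have $-\phi_1\one\le g(v_1)\le\phi_1\one$; shifting up by $c=\phi_1/(1-\disc)$ gives $g(\bar v_1)=g(v_1)+\phi_1\one$, hence $\zero\le g(\bar v_1)\le 2\phi_1\one$, so $(2\phi_1,\bar v_1)$ is feasible in \eqref{mpr:minimum_transitional} and $\phi_2\le 2\phi_1$. Conversely, at the optimum $(\phi_2,v_2)$ of \eqref{mpr:minimum_transitional} we have $\zero\le g(v_2)\le\phi_2\one$; shifting down by $\phi_2/(2(1-\disc))$ centres the residual, $-\tfrac{\phi_2}{2}\one\le g(\bar v_2)\le\tfrac{\phi_2}{2}\one$, so $(\tfrac{\phi_2}{2},\bar v_2)$ is feasible in \eqref{mpr:minimum_free} and $\phi_1\le\phi_2/2$. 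Chaining these, $\phi_2\le 2\phi_1\le\phi_2$, which forces $2\phi_1=\phi_2$ (item~1) and makes both feasibility bounds tight; since the optimal values of the two programs are scalars, tightness means $\bar v_1$ attains the optimum of \eqref{mpr:minimum_transitional} (item~2) and $\bar v_2$ attains the optimum of \eqref{mpr:minimum_free} (item~3).

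For items~4 and~5, observe that $\bar v_1-v_1$ and $\bar v_2-v_2$ are constant multiples of $\one$, and adding a constant $c$ to a value function adds the same quantity $\disc c$ to every state–action value $r(s,a)+\disc\sum_{s'}\tran(s,a,s')v(s')$, so it leaves the $\arg\max$ in the greedy rule unchanged. I expect the proof to be essentially this bookkeeping; the only genuine subtleties are (i) justifying that the shifted value functions remain in $\rep$ — the recession‑cone argument from \aref{asm:contains_one} — and (ii) checking that ``feasible with objective value equal to the optimum'' really certifies optimality here, which is immediate once the scalars $\phi_1,\phi_2$ have been sandwiched as above. (The statement as printed appears to have a typo, defining $\bar v_2$ from $v_1$ rather than $v_2$; the argument above uses the natural reading $\bar v_2=v_2-\tfrac{\phi_2}{2(1-\disc)}\one$.)
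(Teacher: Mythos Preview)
Your proposal is correct and follows essentially the same route as the paper: both arguments construct feasible solutions of each program from optimal solutions of the other by shifting along $\one$ (using $(\eye-\disc P_\pol)\one=(1-\disc)\one$), then sandwich the optimal objective values to get $2\phi_1=\phi_2$, with items~4--5 following from the invariance of the greedy policy under constant shifts. Your observation about the typo is correct as well: the paper's own computations use $\bar v_2=v_2-\tfrac{\phi_2}{2(1-\disc)}\one$.
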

\begin{proof}
Let $\bar{\phi}_1 = 2 \phi_1$ and $\bar{\phi}_2 = \frac{\phi_2}{2}$. We first show  $\bar{\phi_1}, \bar{v}_1$ is feasible in \eqref{mpr:minimum_transitional}. It is representable since $\one \in \rep$ and it is feasible by the following simple algebraic manipulation:
\begin{align*}
(\eye - \disc P_\pol) \bar{v}_1 &= (\eye - \disc P_\pol) v_1 + (\eye - \disc P_\pol) \frac{\phi_1}{1-\disc} \one  \\
&= (\eye - \disc P_\pol) v_1 + \phi_1 \one \\
&\geq - \phi_1 \one + r_\pol + \phi_1 \one \\
&= r_\pol
\end{align*}
and
\begin{align*}
-(\eye - \disc P_\pol) \bar{v}_1 + \bar{\phi_1} \one &= -(\eye - \disc P_\pol) \bar{v}_1 + 2 \phi_1 \one \\
&=-(\eye - \disc P_\pol) v_1 - (\eye - \disc P_\pol) \frac{\phi_1}{1-\disc} \one  + 2 \phi_1 \one\\
&= -(\eye - \disc P_\pol) v_1 - \phi_1 \one + 2 \phi_1 \one \\
&\geq - \phi_1 \one - r_\pol + 2 \phi_1 \one \\
&= -r_\pol
\end{align*}

Next we show that $\bar{\phi_2}, \bar{v}_2$ is feasible in \eqref{mpr:minimum_free}. This solution is representable, since  $\one \in \rep$, and it is feasible by the following simple algebraic manipulation:
\begin{align*}
(\eye - \disc P_\pol) \bar{v}_2 + \bar{\phi}_2 \one &= (\eye - \disc P_\pol) v_2 - (\eye - \disc P_\pol) \frac{\phi_2}{2(1-\disc)} \one + \frac{\phi_2}{2} \one  \\
&= (\eye - \disc P_\pol) v_2 - \frac{\phi_2}{2} \one  + \frac{1}{2} \bar{\phi}_2 \one  \\
&= (\eye- \disc P_\pol) v_2 \\
&\geq r_\pol
\end{align*}
and
\begin{align*}
-(\eye- \disc P_\pol) \bar{v}_2 + \bar{\phi}_2 \one  &= -(\eye - \disc P_\pol) \bar{v}_2 + \frac{\phi_2}{2} \one\\
&= -(\eye- \disc P_\pol) v_2 - (\eye- \disc P_\pol) \frac{\phi_2}{1-\disc} \one + \frac{\phi_2}{2} \one \\
&= -(\eye - \disc P_\pol) v_2 + \phi_2 \one  \\
&\geq -r_\pol
\end{align*}

It is now easy to shows that $\bar{\phi}_1, \bar{v}_1$ is optimal by contradiction. Assume that there exists a solution $\phi_2 < \bar{\phi}_1$. But then:
\[ 2 \bar{\phi}_2 \leq \phi_2 < \bar{\phi}_1 \leq 2 \phi_1, \]
which is a contradiction with the optimality of $\phi_1$. The optimality of $\bar{\phi}_2, \bar{v}_2$ can be shown similarly.
\end{proof}

\begin{prop} \label{prop:transitive_nonissue}
 \aref{asm:contains_one} implies that:
\[\min_{v\in\rep\cap\tf} \| \Bell v - v\|_\infty \leq 2 \min_{v\in\rep} \| \Bell v - v\|_\infty.\]
\end{prop}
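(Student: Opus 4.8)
The plan is to take an arbitrary representable value function $v \in \rep$, shift it by a constant multiple of $\one$ to make it transitive-feasible, and show that this shift at most doubles the $L_\infty$ Bellman residual. Concretely, let $v$ be a minimizer of $\min_{v\in\rep}\|\Bell v - v\|_\infty$, and set $\delta = \|\Bell v - v\|_\infty$. Then $-\delta\one \leq \Bell v - v \leq \delta\one$ componentwise, i.e. $v - \delta\one \leq \Bell v \leq v + \delta\one$.

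The key step is to choose the shift $c = \tfrac{\delta}{1-\disc}$ and consider $\bar v = v + c\one$. By \aref{lem:Bellman_one}, $\Bell \bar v = \Bell v + \disc c\one$. First I would check transitive feasibility:
\[
\bar v - \Bell \bar v = v + c\one - \Bell v - \disc c\one = (v - \Bell v) + (1-\disc)c\,\one \geq -\delta\one + \delta\one = \zero,
\]
so $\bar v \in \tf$, and since $\one \in \rep$ by \aref{asm:contains_one} and $\rep$ is a linear space (or at least closed under such shifts), $\bar v \in \rep\cap\tf$. Next I would bound its Bellman residual from above:
\[
\bar v - \Bell \bar v = (v - \Bell v) + (1-\disc)c\,\one \leq \delta\one + \delta\one = 2\delta\one,
\]
using $(1-\disc)c = \delta$ and the upper bound $v - \Bell v \leq \delta\one$. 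Combined with the lower bound $\bar v - \Bell \bar v \geq \zero$ just established, this gives $\|\bar v - \Bell \bar v\|_\infty \leq 2\delta = 2\|\Bell v - v\|_\infty$. Since $\bar v \in \rep\cap\tf$, the left-hand side of the proposition is at most $\|\bar v - \Bell\bar v\|_\infty \leq 2\min_{v\in\rep}\|\Bell v - v\|_\infty$, which is exactly the claim.

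The calculation is essentially routine; the only subtlety — and the step I would be most careful about — is the membership $\bar v \in \rep$: this relies on \aref{asm:contains_one} to guarantee $c\one \in \rep$, together with $\rep$ being closed under addition (which holds since $\rep = \cspan{\repm}$ is a subspace, or more generally a convex polyhedral set containing the relevant shifts as noted after \aref{asm:contains_one}). Everything else is direct componentwise manipulation using \aref{lem:Bellman_one} and the definition of the $L_\infty$ norm.
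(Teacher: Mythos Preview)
Your proof is correct and uses the same key construction as the paper: shift the unconstrained minimizer by $\frac{\delta}{1-\disc}\one$ to land in $\tf$, and observe that the residual at most doubles. The paper, however, routes this through \aref{lem:two_problems} (a comparison of two linear programs stated for a \emph{fixed} policy $\pol$) and then brings in the greedy policy $\hat\pol$ to pass from $\Bell_{\hat\pol}$ back to $\Bell$; you instead work directly with the full Bellman operator via \aref{lem:Bellman_one} and a componentwise sandwich $-\delta\one \le v-\Bell v \le \delta\one$. Your route is shorter and avoids introducing any policy --- the paper's detour is only there because \aref{lem:two_problems} is phrased for $\Bell_\pol$ rather than $\Bell$. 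The caveat you raise about $\bar v \in \rep$ (needing $\rep$ to be closed under adding multiples of $\one$) is exactly the same step the paper glosses over with ``representable since $\one\in\rep$'' in the proof of \aref{lem:two_problems}, so you are no worse off there.
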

\begin{proof}
Let $\hat{v}$ be the minimizer of $\hat{\phi} = \min_{v\in\rep} \| \Bell v - v\|_\infty$, and let
$\hat{\pol}$ be a policy that is greedy with respect to $\hat{v}$. Define:
\[ \tilde\val = \hat{v} + \frac{\hat{\phi}}{1-\disc}. \]
Then from \aref{lem:two_problems}:
\begin{enumerate}
    \item Value function $\tilde\val$ is an optimal solution of \eqref{mpr:minimum_transitional}:
        $\tilde\val \geq \Bell_\pol \tilde\val$
    \item Policy $\hat{\pol}$ is greedy with regard to $\tilde\val$: $\Bell_{\hat{\pol}} \tilde\val \geq \Bell \tilde\val$
    \item $\|\Bell_\pol \tilde\val - \tilde\val  \|_\infty = 2 \hat{\phi}$
\end{enumerate}
Then using a simple algebraic manipulation:
\[\tilde\val \geq \Bell_{\hat{\pol}} \tilde\val = \Bell \tilde\val \]
and the proposition follows from \aref{lem:greater_bellman}.
\end{proof}

\begin{prop} \label{prop:lower_error}
Let $\tilde\val$ be a solution of the approximate bilinear program \eqref{mpr:abp_robust} and let:
\[ v' = v - \frac{1/2}{(1-\disc)} \|L v - v\|_\infty \one. \]
Then:
\begin{enumerate}
    \item $\| \Bell v' - v'  \|_\infty = \frac{\|L v - v\|_\infty}{2}$.
    \item Greedy policies with respect to $v$ and $v'$ are identical.
\end{enumerate}
\end{prop}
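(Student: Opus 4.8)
The guiding observation is that $v'$ is just $v$ shifted by a constant vector, so the behaviour of $\Bell$ (and of every $\Bell_\pol$) under this shift is completely governed by \aref{lem:Bellman_one}. Write $\phi = \| \Bell v - v \|_\infty$ and $c = -\frac{\phi}{2(1-\disc)}$, so that $v' = v + c\,\one$ with exactly the constant prescribed in the statement.

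First I would pin down the sign of the Bellman residual of $v$. Since $v$ is feasible in \eqref{mpr:abp_robust}, \aref{lem:greater_bellman} gives $v \geq \Bell v$; hence
\[ \zero \;\leq\; v - \Bell v \;\leq\; \phi\,\one, \]
and the right-hand inequality is tight at some state $\hat s$, by the definition of $\phi$.

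For part (1) I would then compute the residual of $v'$ directly: by \aref{lem:Bellman_one}, $\Bell v' = \Bell v + \disc c\,\one$, so
\[ v' - \Bell v' \;=\; (v - \Bell v) + (1-\disc)c\,\one \;=\; (v - \Bell v) - \frac{\phi}{2}\,\one. \]
Combining this with $\zero \le v - \Bell v \le \phi\,\one$ yields $-\frac{\phi}{2}\,\one \le v' - \Bell v' \le \frac{\phi}{2}\,\one$, i.e.\ $\| \Bell v' - v' \|_\infty \le \phi/2$; evaluating the displayed identity at $\hat s$ shows the value $\phi/2$ is actually attained, so $\| \Bell v' - v' \|_\infty = \phi/2$, which is the claim.

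For part (2) I would note that the one-line computation behind \aref{lem:Bellman_one} applies verbatim to a single policy, giving $\Bell_\pol(v + c\,\one) = \Bell_\pol v + \disc c\,\one$ for every $\pol \in \policies$. Since both $\Bell_\pol$ and $\Bell$ shift by the same amount, a policy satisfies $\Bell_\pol v' = \Bell v'$ if and only if $\Bell_\pol v = \Bell v$; that is, the greedy policies for $v'$ coincide with those for $v$. All of this is routine substitution, so there is no real obstacle; the only two points that need care are (i) invoking \aref{lem:greater_bellman} to fix the direction $v \ge \Bell v$ of the residual (this is where ABP-feasibility of $v$ enters), and (ii) checking that the constant $\frac{1/2}{1-\disc}\phi$ is precisely what converts the one-sided bound $0 \le v - \Bell v \le \phi\,\one$ into the symmetric bound $\pm\,\phi/2$.
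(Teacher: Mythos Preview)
Your argument is correct and is essentially the same computation the paper relies on. The paper's own proof is a single line, ``follows directly from \aref{lem:two_problems}''; that lemma is precisely the shift-by-a-constant calculation you carry out (using \aref{lem:Bellman_one}) together with the observation that greedy policies are unaffected by adding $c\,\one$. Your direct derivation is arguably cleaner here, since \aref{lem:two_problems} is stated for a fixed policy $\pol$ and for optimal solutions of the auxiliary programs \eqref{mpr:minimum_free} and \eqref{mpr:minimum_transitional}, so invoking it literally still requires noting that the ABP-feasible $v$ plays the role of $v_2$ with $\phi_2 = \|v-\Bell v\|_\infty$ once one takes $\pol$ greedy for $v$; you bypass that indirection.
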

The proposition follows directly from \aref{lem:two_problems}.

\begin{prop} \label{lem:optimal_value_error}
\aref{asm:contains_one} implies that:
\[ \min_{v\in\rep} \| \Bell v - v \|_\infty \leq (1+\disc) \min_{v \in \rep} \|v - v^* \|_\infty .\]
\end{prop}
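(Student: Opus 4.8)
The plan is to exhibit a single representable value function whose Bellman residual is at most $(1+\disc)\min_{v\in\rep}\|v-v^*\|_\infty$, namely the $L_\infty$-closest representable approximation of $v^*$ itself, and then invoke the fact that the left-hand minimum can only be smaller. First I would let $\bar v \in \arg\min_{v\in\rep}\|v-v^*\|_\infty$ and set $\epsilon = \|\bar v - v^*\|_\infty$; the minimum is attained because \aref{asm:contains_one} makes $\rep$ a nonempty closed convex set. By the definition of the $L_\infty$ norm this gives the componentwise sandwich $v^* - \epsilon\one \leq \bar v \leq v^* + \epsilon\one$.

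Next I would push this sandwich through the Bellman operator. Since $\Bell$ is the pointwise maximum of the operators $\Bell_\pol$, each of which is monotone by \aref{lem:Bellman_monotonous}, the operator $\Bell$ is itself monotone; combining this with \aref{lem:Bellman_one} and the fixed-point identity $\Bell v^* = v^*$ yields
\begin{align*}
\Bell \bar v &\leq \Bell(v^* + \epsilon \one) = v^* + \disc\epsilon\one, \\
\Bell \bar v &\geq \Bell(v^* - \epsilon \one) = v^* - \disc\epsilon\one.
\end{align*}
Subtracting the two bracketing inequalities for $\bar v$ from these then gives $-(1+\disc)\epsilon\one \leq \Bell\bar v - \bar v \leq (1+\disc)\epsilon\one$, i.e. $\|\Bell\bar v - \bar v\|_\infty \leq (1+\disc)\epsilon$. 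Since $\bar v \in \rep$, the quantity $\min_{v\in\rep}\|\Bell v - v\|_\infty$ is at most $\|\Bell\bar v-\bar v\|_\infty \leq (1+\disc)\epsilon = (1+\disc)\min_{v\in\rep}\|v-v^*\|_\infty$, which is the claim.

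There is essentially no hard step here; the only point requiring a little care is that the monotonicity lemma is stated for the policy-specific operators $\Bell_\pol$, so one must first observe that taking a maximum over policies preserves monotonicity before applying it to $\Bell$. Everything else is the two-sided sandwiching argument together with the scaling behaviour of $\Bell$ under constant shifts from \aref{lem:Bellman_one}.
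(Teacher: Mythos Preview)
Your proof is correct and follows essentially the same approach as the paper: take the $L_\infty$-closest representable $\bar v$, sandwich it within $\epsilon$ of $v^*$, push the sandwich through $\Bell$ using monotonicity together with \aref{lem:Bellman_one} and $\Bell v^* = v^*$, and subtract to obtain $|\Bell\bar v - \bar v| \leq (1+\disc)\epsilon$ componentwise. Your explicit remark that monotonicity of $\Bell$ follows from that of each $\Bell_\pol$ via the pointwise maximum is a nice clarification the paper leaves implicit.
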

\begin{proof}
Assume that $\hat{v}$ is the minimizer of  $\min_{v \in \rep} \| v - v^* \|_\infty \leq \epsilon$. Then:
\[ \begin{array}{>{\displaystyle}r>{\displaystyle}c>{\displaystyle}l}
v^* - \epsilon  \one \leq& v  &\leq v^* + \epsilon \one \\
L v^* - \disc \epsilon \one  \leq& \Bell v  &\leq \Bell v^* + \disc \epsilon \one \\
L v^* - \disc \epsilon \one - v \leq& \Bell v - v &\leq \Bell v^* + \disc \epsilon \one  - v\\
L v^* - v^* - (1 + \disc) \epsilon \one  \leq& \Bell v - v &\leq \Bell v^* -v^* + (1+\disc) \epsilon \one \\
- (1 + \disc) \epsilon \one  \leq& \Bell v - v &\leq  (1+\disc) \epsilon \one.
\end{array} \]
\end{proof}

\propminimum
\begin{proof}
Consider a state $s$ and action $a$. Then from transitive feasibility of the value functions $\tilde\val_1$ and $\tilde\val_2$ we have:
\begin{align*}
\tilde\val_1(s) &\geq \disc \sum_{s'\in\states} P(s',a,a) \tilde\val_1(s') + r(s,a) \\
\tilde\val_2(s) &\geq \disc \sum_{s'\in\states} P(s',a,a) \tilde\val_2(s') + r(s,a).
\end{align*}
From the convexity of the $\min$ operator we have that:
\[ \min \left\{\sum_{s'\in\states} P(s',a,a) \tilde\val_1, \sum_{s'\in\states} P(s',a,a) \tilde\val_2(s') \right\} \geq \sum_{s'\in\states} P(s',a,a) \min\{ \tilde\val_1(s'), \tilde\val_2)(s') \}. \]
Then the proposition follows by the following simple algebraic manipulation:
\begin{align*}
\tilde\val = \min \{ \tilde\val_1(s), \tilde\val_2(s) \} &\geq \disc \min \left\{ \sum_{s'\in\states} P(s',a,a) \tilde\val_1, \sum_{s'\in\states} P(s',a,a) \tilde\val_2(s') \right\} + r(s,a) \\
&\geq \disc \sum_{s'\in\states} P(s',a,a) \min\{ \tilde\val_1(s'), \tilde\val_2)(s') \} + r(s,a) \\
&= \disc \sum_{s'\in\states} P(s',a,a) \tilde\val(s) + r(s,a).
\end{align*}
\end{proof}

\begin{lem} \label{lem:dual_sum}
Let $u_\pol$ be the state-action visitation frequency of policy $\pol$. Then:
\[ \one\tr u = \frac{1}{1-\disc}. \]
\end{lem}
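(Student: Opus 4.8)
The plan is to reduce everything to the identity $\tran_\pol \one = \one$, which holds because $\tran_\pol$ is a stochastic matrix (its rows are probability distributions). Starting from the definition $u_\pol = \invm{\eye - \disc\tran_\pol\tr}\indist$, I would first transpose to write
\[ \one\tr u_\pol = \one\tr \invm{\eye - \disc\tran_\pol\tr}\indist = \indist\tr \invm{\eye - \disc\tran_\pol}\one, \]
using that the transpose commutes with the inverse. The matrix $\eye - \disc\tran_\pol$ is invertible since $0 < \disc < 1$ and $\tran_\pol$ has spectral radius $1$, so all of this is well defined.

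Next I would evaluate $\invm{\eye - \disc\tran_\pol}\one$ directly. Since $\tran_\pol\one = \one$, we get $(\eye - \disc\tran_\pol)\one = \one - \disc\one = (1-\disc)\one$, and therefore $\invm{\eye - \disc\tran_\pol}\one = \tfrac{1}{1-\disc}\one$. (Equivalently, one can expand the Neumann series $\invm{\eye - \disc\tran_\pol} = \sum_{t\geq 0}\disc^t\tran_\pol^t$ and use $\tran_\pol^t\one = \one$ for every $t$, giving $\sum_{t\geq 0}\disc^t\one = \tfrac{1}{1-\disc}\one$.)

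Finally, substituting back,
\[ \one\tr u_\pol = \indist\tr\left(\tfrac{1}{1-\disc}\one\right) = \tfrac{1}{1-\disc}\,\indist\tr\one = \tfrac{1}{1-\disc}, \]
since $\indist$ is a probability distribution, i.e.\ $\sum_{s\in\states}\indist(s) = \indist\tr\one = 1$. This completes the argument; the statement's use of $u$ without a subscript is simply shorthand for $u_\pol$.

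There is no real obstacle here — the only thing to be slightly careful about is justifying invertibility of $\eye - \disc\tran_\pol$ and, if one uses the series form, its convergence, both of which follow immediately from $\disc < 1$ and $\tran_\pol$ being stochastic. Everything else is one-line linear algebra.
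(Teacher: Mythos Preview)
Your argument is correct and rests on exactly the same two facts the paper uses: $(\eye-\disc\tran)\one=(1-\disc)\one$ for a stochastic matrix, and $\indist\tr\one=1$. The only cosmetic difference is that the paper starts from the implicit balance equation $\sum_{a}u_a\tr(\eye-\disc\tran_a)=\indist\tr$ for the \emph{state--action} visitation frequency and right-multiplies by $\one$, whereas you start from the explicit inverse defining the \emph{state} visitation frequency $u_\pol$; since $\one\tr u=\sum_{s,a}u(s,a)=\sum_s u_\pol(s)=\one\tr u_\pol$, the two computations coincide and your reading of the unsubscripted $u$ does no harm.
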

\begin{proof}
Let $u_a(s) = u_\pol(s,\pol(s,a))$ for all states $s\in\states$ and actions $a\in\actions$. The lemma follows as:
\begin{align*}
\sum_{a\in\actions} u_a\tr (\eye - \disc \tran_a) &= \tobj\tr \\
\sum_{a\in\actions} u_a\tr (\eye - \disc \tran_a) \one &= \tobj\tr \one \\
(1 - \disc) \sum_{a\in\actions} u_a\tr \one &= 1 \\
u\tr \one &= \frac{1}{1-\disc}. \\
\end{align*}
\end{proof}

\subsection{NP-Completeness} \label{sec:complexity}

\begin{prop}[e.g. \citep{Mangasarian1995}] \label{prop:bilinear_polynomial}
A bilinear program can be solved in NP time.
\end{prop}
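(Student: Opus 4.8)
The plan is to read \aref{prop:bilinear_polynomial} as the assertion that the \emph{decision} version of the separable bilinear program \eqref{mpr:bilinear} --- given the data and a threshold $t$, does a feasible point with objective value at most $t$ exist? --- lies in NP (so that it "can be solved in NP time"), with the optimization version following by the usual means. The certificate will be a single basic feasible solution of the first of the two linear systems.

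First I would record the structural fact that drives everything. In \eqref{mpr:bilinear} the objective $f(w,x,y,z)$ is \emph{affine} in the block $(y,z)$ once $(w,x)$ is fixed, and the feasible set of that block, $P_2 = \{(y,z)\geq\zero \ss A_2 y + B_2 z = b_2\}$, is a polyhedron with finitely many vertices. Hence the inner infimum $g(w,x) = \inf_{(y,z)\in P_2} f(w,x,y,z)$ is either $-\infty$ --- the inner LP is unbounded below, in which case \eqref{mpr:bilinear} is unbounded and the decision problem is trivial once this is detected in polynomial time --- or is attained at a vertex of $P_2$. Consequently $g$ is a pointwise minimum of finitely many affine functions of $(w,x)$, hence concave and piecewise affine on $P_1 = \{(w,x)\geq\zero \ss A_1 x + B_1 w = b_1\}$. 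A concave function attains its infimum over a bounded polyhedron at a vertex; and in the bilinear programs of this paper $P_1$ is bounded (the policy block lives in a simplex, and the remaining variables can be given trivial bounds), while in general one either has such a vertex minimizer or an improving recession ray that again makes the program unbounded. Therefore \eqref{mpr:bilinear} has an optimal solution whose $(w,x)$ component is a basic feasible solution of $P_1$.

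With that in hand, the NP algorithm is: nondeterministically guess a basic feasible solution $(\bar w,\bar x)$ of $P_1$ --- which, by the standard Cramer's-rule bound on vertices of rational polyhedra, has bit-length polynomial in the input --- then deterministically solve the linear program $\min\{ f(\bar w,\bar x, y, z) \ss (y,z)\in P_2\}$ in polynomial time, and accept iff its optimal value is at most $t$ (reporting unboundedness if that LP is unbounded below). Soundness is immediate, since an accepting certificate exhibits an actual feasible point of value at most $t$; completeness is exactly the structural fact above, since if the optimum of \eqref{mpr:bilinear} is at most $t$ then some vertex of $P_1$ witnesses it. The optimization version then follows either by binary search on $t$ or simply by outputting the $(y,z)$ and objective value computed for the correct guess.

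The main obstacle is the structural claim itself: making rigorous that $g$ is concave and that its minimum over $P_1$ is attained at a vertex, together with a clean treatment of the two unboundedness cases (inner LP unbounded below; $P_1$ unbounded along an improving recession direction). Everything else --- the polynomial bit-size of vertices, polynomial-time LP solvability, and the soundness/completeness bookkeeping --- is routine.
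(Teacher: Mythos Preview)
Your proposal is correct and follows essentially the same approach as the paper: both arguments rest on the structural fact that an optimal solution of \eqref{mpr:bilinear} can be taken at basic feasible solutions of the two independent linear systems, so a nondeterministic machine can guess a polynomial-size certificate and verify in polynomial time. The only difference is cosmetic --- the paper guesses a \emph{pair} of vertices (one from each polyhedron) and evaluates $f$, whereas you guess a single vertex of $P_1$ and solve the remaining LP deterministically; your concavity argument for $g$ and your handling of the unboundedness cases are more careful than the paper's three-sentence sketch, which simply asserts the vertex-vertex property.
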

There is an optimal solution of the bilinear program such that the solutions of the individual linear programs are basic feasible. The set of all basic feasible solutions is finite, because the feasible regions of $w,x$ and $y,z$ are independent. The value of a basic feasible solution can be calculated in polynomial time.

\thmcomplexity
\begin{proof}
The membership in NP follows from \aref{thm:abp_loss_robust} and \aref{prop:bilinear_polynomial}. We show NP-hardness by  a reduction from the 3SAT problem. We first don't assume \aref{asm:contains_one}. We show the theorem for $\epsilon = 1$. The appropriate $\epsilon$ can be obtained by  simply scaling the rewards in the MDP.

The main idea is to construct an MDP and an approximation basis, such that the approximation error is small whenever the SAT is satisfiable. The value of the states will correspond to the truth value of the literals and clauses. The approximation features $\feats$ will be used to constraint the values of literals that share the same variable. The MDP constructed from the SAT formula is depicted in \aref{fig:reduction}.

\begin{figure}
\centering
\input{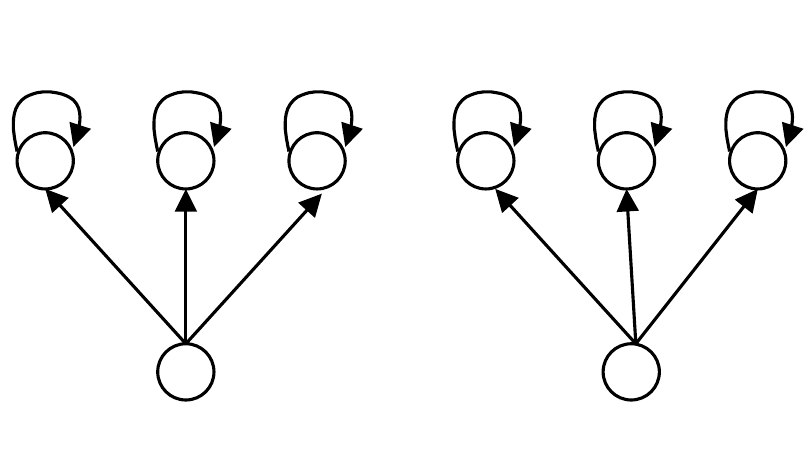tex_t}
\caption{MDP constructed from the corresponding SAT formula.} \label{fig:reduction}
\end{figure}

Consider a SAT problem with clauses $C_i$:
\[ \bigwedge_{\fromint{i}{1}{n}} C_i = \bigwedge_{\fromint{i}{1}{n}} \left(l_{i 1} \vee l_{i 2} \vee l_{i 3} \right),\]
where $l_{i j}$ are literals. A literal is a variable or a negation of a variable. The variables in the SAT are $x_1 \ldots x_m$. The corresponding MDP is constructed as follows. It has one state $s(l_{i j})$ for every literal $l_{i j}$, one state $s(C_i)$ for each clause $C_i$ and an additional state $\bar{s}$. That is:
\[ \states = \{s(C_i) \ss \fromint{i}{1}{n} \} \cup \{ s(l_{i j}) \ss \fromint{i}{1}{n}, \fromint{j}{1}{3} \} \cup \{\bar{s}\}.\]
There are 3 actions available for each state $s(C_i)$, which determine the true literal of the clause. There is only a single action available in states $s(l_{i j})$ and $\bar{s}$. All transitions in the MDP are deterministic. The transition $t(s,a) = (s',r)$ is from the state $s$ to $s'$, when action $a$ is taken, and the reward received is $r$. The transitions are the following:
\begin{align}
\label{trn:choice} t(s(C_i), a_j) &= \left( s( l_{i j} ), 1-\disc \right) \\
\label{trn:literal} t(s(l_{i j}), a) &= \left(s(l_{i j}),-(1-\disc) \right) \\
\label{trn:bound} t(\bar{s},a) &= (\bar{s}, 2-\disc)
\end{align}
Notice that the rewards depend on the discount factor $\disc$, for notational convenience.

There is one approximation feature for every variable $x_k$ such that:
\begin{align*}
\feats_k(s(C_i)) &= 0 \\
\feats_k(\bar{s}) &= 0 \\
\feats_k(s(l_{i j})) &=
    \begin{cases}
     1 & \cond  l_{i j} = x_k \\
    -1 & \cond  l_{i j} = \neg x_k
    \end{cases}
\end{align*}
An additional feature in the problem $\bar{\feats}$ is defined as:
\begin{align*}
\bar{\feats} (s(C_i)) &= 1 \\
\bar{\feats} (s(l_{i j})) &= 0 \\
\bar{\feats} (\bar{s}) &= 1. \\
\end{align*}
The purpose of state $\bar{s}$ is to ensure that $v(s(c_i)) \geq 2-\disc$, as we assume in the remainder of the proof.

First, we show that if the SAT problem is satisfiable, then $\min_{v \in \rep \cap \tf} \| \Bell  v - v \|_\infty < 1$. The value function $\tilde\val \in \tf$ is constructed as a linear sum of the features as: $v = \repm y$, where $y = (y_1, \ldots, y_m, \bar{y})$. Here $y_k$ corresponds to $\feats_k$ and $\bar{y}$ corresponds to $\bar{\feats}$. The coefficients $y_k$ are constructed from the truth value of the variables as follows:
\begin{align*}
y_k &=
    \begin{cases}
    \disc & \cond x_k = \true \\
     - \disc & \cond x_k = \false
    \end{cases} \\
\bar{y} &= 2-\disc.
\end{align*}
Now define the \emph{deterministic} policy $\pol$ as:
\[ \pol(s(C_i)) = a_j \textrm{ where } l_{i j} = \true .\]
The true literals are guaranteed to exist from the satisfiability. This policy is greedy with respect to $\tilde\val$ and satisfies that $\| \Bell_\pol \tilde\val - \tilde\val \|_\infty \leq 1-\disc^2 $.

The Bellman residuals for all actions and states, given a value function $v$, are defined as:
\[ v(s) - \disc v(s') - r, \]
where $t(s,a) = (s',r)$. Given the value function $\tilde\val$, the residual values are:
\begin{align*}
t(s(C_i), a_j) &= \left( s( l_{i j} ), 1-\disc \right)  : &\quad&
    \begin{cases}
    2 - \disc - \disc^2 + (1-\disc) = 1-\disc^2 &\cond \quad l_{i j} = \true \\
    2 - \disc + \disc^2 + (1-\disc) = 1+\disc^2 &\cond \quad l_{i j} = \false
    \end{cases} \\
t(s(l_{i j}), a) &= \left( s(l_{i j}), (1-\disc) \right) : &\quad&
    \begin{cases}
    \disc - \disc^2 + 1  - \disc = 1 - \disc^2 & \cond l_{i j} = \true  \\
    - \disc + \disc^2 + 1  - \disc = (1 - \disc)^2 > 0 & \cond l_{i j} = \false
    \end{cases} \\
t(\bar{s},a) &= (\bar{s}, 1-\disc) : &\quad& (1-\disc) + \disc - 1 = 0 \\
\end{align*}
It is now clear that $\pol$ is greedy and that:
\[ \| \Bell \tilde\val - \tilde\val \|_\infty = 1 - \disc^2 < 1. \]

We now show that if the SAT problem is not satisfiable then $\min_{v \in \tf \cap \rep} \| \Bell v - v \|_\infty \geq 1-\frac{\disc^2}{2} $. Now, given a value function $v$, there are two possible cases for each $v(s(l_{ij}))$: 1) a positive value, 2) a non-positive value. Two literals that share the same variable will have the same sign, since there is only one feature per each variable.

Assume now that there is a value function $\tilde\val$. There are two possible cases we analyze: 1) all transitions of a greedy policy are to states with positive value, and 2) there is at least one transition to a state with a non-positive value. In the first case, we have that
\[\forall i \, \exists j,\; \tilde\val(s(l_{ij})) > 0. \]
That is, there is a function $q(i)$, which returns the positive literal for the clause $j$. Now, create a satisfiable assignment of the SAT as follows:
\[
x_k =
\begin{cases}
\true  & \cond l_{i q(i)} = x_k \\
\false & \cond l_{i q(i)} = \neg x_k
\end{cases}, \]
with other variables assigned arbitrary values. Given this assignment, all literals with states that have a positive value will be also positive. Since every clause contains at least one positive literal, the SAT is satisfiable, which is a contradiction with the assumption. Therefore, there is at least one transition to a state with a non-positive value.

Let $C_1$ represent the clause with a transition to a literal $l_{11}$ with a non-positive value, without loss of generality. The Bellman residuals at the transitions from these states will be:
\begin{align*}
b_1 &= \tilde\val(s(l_{11})) - \disc \tilde\val(s(l_{11})) + (1-\disc) \geq 0 - 0 + (1-\disc) = 1-\disc \\
b_1 &= \tilde\val(s(C_1)) - \disc \tilde\val(s(l_{11})) - (1-\disc) \geq 2 - \disc - 0 -1 + \disc = 1
\end{align*}
Therefore, the Bellman residual $\tilde\val$ is bounded as:
\[ \| \Bell \tilde\val - \tilde\val \|_\infty \geq \max\{b_1,b_2\} \geq 1. \]
Since we did not make any assumptions on $\tilde\val$, the claim holds for all representable and transitive-feasible value functions. Therefore, $\min_{v\in\rep\cap\tf}\| \Bell v - v \|_\infty \leq 1-\disc^2$ is and only if the 3-SAT problem is feasible.

It remains to show that the problem remains NP-complete even when \aref{asm:contains_one} holds. Define a new state $\bar{s}_1$ with the following transition:
\[t(\bar{s}_2,a) = (\bar{s}_2, -\frac{\disc}{2}).\]
All previously introduced features $\feats$ are zero on the new state. That is $\feats_k(\bar{s}_1) = \bar{\feats}(\bar{s}_1) = 0$. The new constant feature is: $\hat{\feats}(s) = 1$ for all states $s \in \states$, and the matching coefficient is denoted as $\bar{y}_1$. When the formula is satisfiable, then clearly $\min_{v\in \rep\cap\tf} \| \Bell v - v\|_\infty \leq 1-\disc^2$ since the basis is now richer and the Bellman error on the new transition is less than $1-\disc^2$ when $\bar{y}_1 = 0$.

Now we show that when the formula is not satisfiable, then:
\[ \min_{v\in \rep\cap\tf}\| \Bell v - v\|_\infty \geq 1-\frac{\disc^2}{2}. \]
This can be scaled to an appropriate $\epsilon$ by scaling the rewards. Notice that \[0 \leq \bar{y}_1 \leq \frac{\disc}{2}.\]
When $\bar{y}_1 < 0$, the Bellman residual on transitions $s(C_i) \rightarrow s(l_{ij})$ may be decreased by increasing $\bar{y}_1$ while adjusting other coefficients to ensure that $v(s(C_i)) = 2-\disc$. When $\bar{y}_1 > \frac{\disc}{2}$ then the Bellman residual from the state $\bar{s}_1$ is greater than $1-\frac{\disc^2}{2}$. Given the bounds on $\bar{y}_1$, the argument for $y_k = 0$ holds and the minimal Bellman residual is achieved when:
\begin{align*}
v(s(C_i)) - \disc v(s(l_{i j})) - (1-\disc)  &= v(s(\bar{s}_1)) - \disc v(s(\bar{s}_1)) + \frac{\disc}{2} \\
2-\disc  - \disc \bar{y}_1  - (1-\disc)  &= \bar{y}_1 - \disc \bar{y}_1 + \frac{\disc}{2} \\
\bar{y}_1 &= \frac{\disc}{2}.
\end{align*}
Therefore, when the SAT is unsatisfiable, the Bellman residual is at least $1-\frac{\disc^2}{2}$.

The NP-completeness of $\min_{v \in \rep} \| \Bell v - v \|_\infty < \epsilon$ follows trivially from \aref{prop:transitive_nonissue}. The proof for $\| \val - \Bell \val\|_\infty - \indist\tr\val$ is almost identical. The difference is a new state $\hat{s}$, such that $\feats(\hat{s}) = \one$ and $\indist(\hat{s}) = 1$. In that case $\indist\tr\val = 1$ for all $\val\in\rep$. The additional term thus has no effect on the optimization.

The proof can be similarly extended to the minimization of $\| \val - \Bell\val \|_{1,\bar{u}}$. Define $\bar{u}(C_i) = 1/n$  and $\bar{u}(l_{i j}) = 0$. Then the SAT problem is satisfiable if an only if $\| \val - \Bell\val \|_{1,\bar{u}} = 1 - \disc^2$. Note that $\bar{u}$, as defined above, is not an upper bound on the visitation frequencies $u_\pol$. It is likely that the proof could be extended to cover the case $\bar{u}\geq u_\pol$ by more carefully designing the transitions from $C_i$. In particular, there needs to be high probability of returning to $C_i$ and $\bar{u}(l_{i j} > 0$.
\end{proof}

\subsection{Equivalence of OAPI and API} \label{sec:api_proof}

We first consider \lapitwo~as a modification of \lapi.  \lapitwo~is shown in \aref{alg:api}, where $f(\pol)$ is calculated using the following program:
\begin{mprog}\label{mpr:linfty_min_bounded}
\minimize{\phi,v} \phi
\stc (\eye-\gamma P_\pol) v + \one \phi \geq r_\pol
\cs -(\eye-\gamma P_\pol) v + \one \phi \geq -r_\pol
\cs v \in \rep
\end{mprog}

\begin{prop} \label{prop:greater_thesame}
\lapi~and \lapitwo~generate the same sequence of policies if the initial policies and tie-breaking is the same.
\end{prop}
\begin{proof}
The proposition follows simply by induction from \aref{lem:two_problems}. The basic step follows directly from the assumption. For the inductive step, let $\pol^1_i = \pol^2_i$, where $\pol^1$ and $\pol^2$ are the policies with \eqref{mpr:linfty_min} and \eqref{mpr:linfty_min_bounded}. Then from \aref{lem:two_problems}, we have that the corresponding value functions $v^1_i = v^2_i + c \one$. Because $\pol^1_{i+1}$ and $\pol^2_{i+1}$ are chosen greedily, we have that $\pol^1_{i+1} = \pol^2_{i+1}$.
\end{proof}

We are ready now to prove the theorem.
\thmoapi
\begin{proof}
From \aref{prop:greater_thesame}, \lapi~can be replaced by \lapitwo, which will converge to the same policy $\pol$. \lapitwo~will converge to the value function \[\tilde{v} = v + \frac{\phi}{1-\gamma} \one.\]
From the constraints in \eqref{mpr:linfty_min_bounded} we have that:
\[ \tilde{v} \geq L_\pol \tilde{v} .\]
Then, since $\pol$ is the greedy policy with regard to this value function, we have that:
\[ \tilde{v} \geq L_\pol \tilde{v} \geq L \tilde{v}.\]
Thus $\tilde{v}$ is transitive-feasible and feasible in \eqref{mpr:bilinear} according to \aref{lem:greater_bellman}. The theorem then follows from \aref{lem:two_problems} and from the fact that the greedy policy minimizes the Bellman residual, as in the proof of \aref{lem:policy_error}.
\end{proof}

Notice that while the optimistic and standard policy iterations can converge to the same solutions, the steps in their computation may not be identical. The actual results will depend on the initialization.

\end{document}